\crefname{equation}{}{}
\crefname{assumption}{assumption}{assumptions}
\newlist{assumpenum}{enumerate}{1} 
\setlist[assumpenum]{label=\Alph*., ref=\theassumption\Alph*, leftmargin=*}
\newtheorem{example}{Example}
\theoremstyle{plain}
\newtheorem{theorem}{Theorem}[section]
\newtheorem{proposition}[theorem]{Proposition}
\newtheorem{lemma}[theorem]{Lemma}
\theoremstyle{definition}
\newtheorem{definition}[theorem]{Definition}
\newtheorem{assumption}[theorem]{Assumption}
\newtheorem{fact}{Fact}
\theoremstyle{remark}
\def\Loss{\mathcal{L}}
\def\loss{\ell}
\def\x{\mathbf{x}}
\def\lr{\eta}
\def\logisticLoss{\loss_{\mathsf{log}}}
\def\expLoss{\loss_{\mathsf{exp}}}
\newcommand{\norm}[2]{\left\|#2\right\|_{#1}}
\def\l{\left}
\def\r{\right}
\def\R{\mathbb{R}}
\def\w{\mathbf{w}}
\def\wStar{\w^*}
\def\z{\mathbf{z}}
\newcommand{\innerProduct}[2]{\l\langle#1,#2\r\rangle}
\def\bx{\x}
\def\bw{\w}
\def\bz{\z}
\def\bigO{\mathcal{O}}
\def\bu{\mathbf{u}}
\def\barw{\overline{\bw}}
\def\lossPrime{\loss^{\prime}}
\def\lossTwoPrime{\loss^{\prime \prime}}
\def\bones{\mathbf{1}}
\def\GeLU{\mathsf{GeLU}}
\def\Softplus{\mathsf{Softplus}}
\def\SiLU{\mathsf{SiLU}}
\def\ReLU{\mathsf{ReLU}}
\def\be{\mathbf{e}}
\def\eps{\varepsilon}
\title{Minimax Optimal Convergence of Gradient Descent in Logistic Regression via Large and Adaptive Stepsizes}
\author[1]{Ruiqi Zhang}
\author[1]{Jingfeng Wu}
\author[1]{Licong Lin}
\author[1,2]{Peter L. Bartlett}
\affil[1]{University of California, Berkeley\\ {\tt \{rqzhang,uuujf,liconglin,peter\}@berkeley.edu}}
\affil[2]{Google DeepMind}
\date{\today}
\begin{document}
\maketitle

\begin{abstract}
We study \emph{gradient descent} (GD) for logistic regression on linearly separable data with stepsizes that adapt to the current risk, scaled by a constant hyperparameter \(\eta\). We show that after 
at most \(1/\gamma^2\) burn-in steps, GD achieves a risk upper bounded by \(\exp(-\Theta(\eta))\), where \(\gamma\) is the margin of the dataset. As \(\eta\) can be arbitrarily large, GD attains an arbitrarily small risk \emph{immediately after the burn-in steps}, though the risk evolution may be \emph{non-monotonic}.

We further construct hard datasets with margin \(\gamma\), where any batch (or online) first-order method requires \(\Omega(1/\gamma^2)\) steps to find a linear separator. Thus, GD with large, adaptive stepsizes is \emph{minimax optimal} among first-order batch methods. Notably, the classical \emph{Perceptron} \citep{novikoff1962convergence}, a first-order online method, also achieves a step complexity of \(1/\gamma^2\), matching GD even in constants.

Finally, our GD analysis extends to a broad class of loss functions and certain two-layer networks.
\end{abstract}

\section{Introduction}
A large class of optimization methods in machine learning are variants of \emph{gradient descent} (GD). In this paper, we study the convergence of GD with \emph{adaptive} stepsizes given by
\begin{equation}\label{eqn.GD.logistic.maintext}\tag{GD}
  \bw_{t+1} := \bw_t - \lr_t \nabla \Loss(\bw_t),\quad t\ge 0,
\end{equation}
where $\Loss(\cdot)$ is the objective to be minimized, $\bw_t \in \R^d$ is the trainable parameters, and $(\lr_t)_{t\ge 0} $ are the stepsizes. 
We allow the stepsize $\lr_t$ to adapt to
the current risk $\Loss(\bw_t)$. 

Classical analyses of GD require the stepsizes to be sufficiently small so that the risk decreases monotonically \citep{nesterov2018lectures}. This is often referred to as the \emph{descent lemma}. 
For example, the descent lemma is satisfied for 
stepsizes such that $\eta_t < 2/ \sup_{\mathcal{I}} \lambda_{\max}(\nabla^2 \Loss(\cdot) )$, where the supremum is taken over the interval $\mathcal{I}$ connecting $\bw_t$ and $\bw_{t+1}$ and $\lambda_{\max}(\cdot)$ is the maximum eigenvalue of a symmetric matrix. 
In this regime, a large volume of theory has been developed to show the convergence of GD in a variety of settings \citep[see][for example]{lan2020first}. 

However, practical deep learning models trained by GD often converge in the long run while suffering from a locally oscillatory risk \citep{wu2018sgd,xing2018walk,lewkowycz2020large,cohen2021gradient}. 
This oscillation occurs when the stepsizes are too large and violate the descent lemma.
This unstable convergence phenomenon is referred to by \citet{cohen2021gradient} as the \textit{edge of stability} (EoS). 
To obtain a reasonable optimization and generalization performance in practice, it is often necessary to use large stepsizes so that GD enters the EoS regime \citep{wu2018sgd,cohen2021gradient}, violating the seemingly theoretically desirable descent lemma.

Surprisingly, a recent line of works showed that GD provably converges faster by violating the descent lemma in various settings \citep[see][for examples; other related works are discussed later in \Cref{sec.related}]{altschuler2024acceleration,wu2024large}.
Specifically, \citet{altschuler2024acceleration} proposed a stepsize scheduler in which GD violates the descent lemma but (occasionally) achieves an improved convergence rate for smooth convex optimization. 
The work by \citet{wu2024large} focused on GD with a constant stepsize for logistic regression with linearly separable data. They showed that GD with a large stepsize can achieve an accelerated convergence rate, while this is impossible if the stepsize is small and enables the descent lemma. 
Note that the stepsizes considered in \citep{altschuler2024acceleration,wu2024large} are \emph{oblivious}, which are determined before the GD run and do not adapt to the evolving risk.

\subsection{Our results}
This work complements the prior theory by considering the convergence of GD with large and \emph{adaptive} stepsizes. We focus on  
logistic regression with linearly separable data. 
Specifically, the risk to be minimized is given by
\begin{equation}\label{eqn.loss.LR.maintext}
    \Loss(\bw) := \frac{1}{n} \sum_{i=1}^n \loss(y_i \bx_i^\top \bw),\quad \ell\in\big\{\expLoss: z\mapsto\exp(-z) ,\; \logisticLoss: z\mapsto\ln(1+\exp(-z)) \big\},
\end{equation}
where the loss function can be exponential or logistic losses and 
the dataset $(\bx_i, y_i)_{i=1}^n$ satisfies the following standard conditions \citep{novikoff1962convergence}:
\begin{assumption}[Linear separability]\label{assumption.data.maintext}
Assume the dataset $\l(\bx_i, y_i\r)_{i=1}^n$ satisfies
\begin{assumpenum}[leftmargin=*]
\item for every $i=1,\dots,n$, $\|\bx_i\| \leq 1$ and $y_i\in\{\pm 1\}$;
\item there is a margin $\gamma > 0$ and a unit vector $\wStar$ such that $y_i \bx_i^\top \wStar \ge \gamma$ for every $i=1,\dots,n$.
\end{assumpenum}
\end{assumption}
Under \Cref{assumption.data.maintext}, the minimizer of the risk in \Cref{eqn.loss.LR.maintext} is at infinity; moreover, the landscape becomes flatter as the risk decreases.
To compensate for this effect, we consider GD with the following \emph{adaptive} stepsizes \citep[proposed by][]{nacson2019convergence,ji2021characterizing},
\[
\eta_t := \eta \cdot (-\ell^{-1})'\circ\Loss(\bw_t) \approx \eta / \Loss(\bw_t),
\]
where $\eta>0$ is a constant hyperparameter.

We make the following significant contributions. 

\paragraph{Benefits of large and adaptive stepsizes.}
We show that GD with adaptive stepsizes achieves improved convergence by entering the EoS regime. 
Specifically, we show that after at most $1/\gamma^2$ burn-in steps, 
GD attains a risk upper bounded by $\exp(-\Theta(\eta))$, which is arbitrarily small by setting $\eta$ large enough.
By doing so, however, the risk evolution could be non-monotonic. 
On the other hand, if the hyperparameter $\eta$ is set such that GD does not enter the EoS regime, we provide examples in which GD needs at least $\Omega(\ln(1/\eps))$ steps to achieve an $\eps$-risk. 
These together justify the benefits of operating in the EoS regime.

In comparison, prior works on the same problem focused on GD with a large but constant stepsize \citep{wu2024large} or adaptive but small stepsizes \citep{ji2021characterizing}.
They established $\widetilde{\bigO}\big(1/(\gamma^2\sqrt{\eps})\big)$ and $\bigO(\ln(1/\eps)/\gamma^2)$ step complexity for GD to attain an $\eps$-risk respectively (see \Cref{tab.comparison,tab.comparison.separator} for comparisons with other related works).
In stark contrast, we show that, using large and adaptive steps together, $1/\gamma^2$ steps are sufficient. 

\paragraph{Minimax lower bounds.} 
Furthermore, we construct hard datasets with margin $\gamma$, in which \emph{any} batch (or online) first-order methods must take $\Omega(1/\gamma^2)$ steps to identify a linear separator of the dataset. Thus, GD with large and adaptive stepsize is minimax optimal (ignoring constant factors) among all first-order batch methods. 
It is worth noting that the seminal \emph{Perceptron} algorithm \citep{novikoff1962convergence}, a first-order online method, also takes $1/\gamma^2$ steps to find a linear separator. This matches GD even with constants. 

\paragraph{General losses and two-layer networks.}
Finally, we extend our results beyond logistic regression. 
We establish conditions on the loss functions that enable our analysis. 
We also obtain similar results for a class of two-layer networks for fitting linearly separable data.

\subsection{Related works}\label{sec.related}
In this part, we review and discuss additional related papers. 

\paragraph{Edge of stability.}
Our research is motivated by the empirically observed \emph{edge of stability} (EoS) phenomenon \citep[see][and references therein]{cohen2021gradient}. Specifically, in practice, GD often induces a convergent yet oscillatory risk, implying the stepsizes are large such that the descent lemma is violated. 
A growing body of papers investigates the theoretical mechanism of EoS under various scenarios \citep{kong2020stochasticity,wang2021large,wang2023good,lyu2022understanding,wang2022analyzing,ma2022beyond,zhu2022understanding,damian2022self,ahn2022understanding,ahn2023learning,chen2023beyond,kreisler2023gradient,even2023s,andriushchenko2023sgd,lu2023benign,chen2024stability}.
We are less interested in characterizing the mechanism of EoS; rather, we focus on understanding the benefits of operating in the EoS regime for improving optimization efficiency. 

\paragraph{Aggresive stepsize schedulers.}
For smooth and (strongly) convex optimization, a recent line of research showed that GD with certain stepsize schedulers converges faster than GD with a constant stepsize \citep{kelner2022big,altschuler2023acceleration,altschuler2024accelerationrandom,altschuler2024acceleration,grimmer2024provably,grimmer2023accelerated,zhang2024accelerated,zhang2024anytime,grimmer2025accelerated}.
Similar to our results, they obtained the acceleration because their stepsize schedulers violate the descent lemma (occasionally).
However, there are several notable differences. 
First, the considered problem classes are not directly comparable. 
They considered smooth and (strongly) convex optimization problems where the minimizers are finite. 
In contrast, we study the problem of finding a linear separator of a linearly separable dataset by minimizing a convex loss; this includes logistic regression as a special case. Although our logistic regression problem is smooth and convex, it does not admit a finite minimizer and, thus, does not belong to their problem class. 
Moreover, their stepsize schedulers are \emph{oblivious}, determined before the GD run, while our stepsizes are \emph{adaptive} to the current risk.

\paragraph{Logistic regression.}
Logistic regression with linearly separable data is a standard testbed for studying the implicit bias of GD \citep[see][and references thereafter]{soudry2018implicit,ji2018risk}. 
Specifically, this means that GD with a small constant stepsize (satisfying the descent lemma) diverges to infinity in norm but converges to the direction that maximizes the margin \citep{soudry2018implicit,ji2018risk}. The same result is later extended to GD with an arbitrarily large constant stepsize by \citet{wu2023implicit}.
Their results imply a risk convergence rate of $\widetilde\bigO(1/t)$ for GD with a small constant stepsize, where $t$ is the number of steps \citep{soudry2018implicit,ji2018risk}. Remarkably, \citet{wu2024large} showed that with a large constant stepsize, GD enters the EoS regime and attains an accelerated $\widetilde \bigO(1/t^2)$ rate.
Our work is directly motivated by \citet{wu2024large}, although we consider GD with large and adaptive stepsizes while they focused on GD with a large constant stepsize. 

Compared to the constant stepsize, it is known that adaptive stepsizes improve the risk (and margin) convergence rate of GD for logistic regression with separable data \citep{nacson2019convergence,ji2021characterizing}. 
Specifically, 
\citet{nacson2019convergence} and \citet{ji2021characterizing} showed risk convergence rates of $\exp(-\Omega(\sqrt{t}) )$ and $\exp(-\Omega(t))$, respectively. But they both considered GD with adaptive but small stepsizes that satisfy the descent lemma. 
In comparison, for the same algorithm, we show that simply letting the stepsizes be large can greatly accelerate the risk convergence. 

Additionally, the risk (and margin) convergence rate can be further accelerated by using momentum techniques with GD \citep{ji2021fast,wang2022accelerated}. 
The work by \citet{ji2021fast} obtained $\exp(-{\Omega}(t^2-\ln(t)))$ risk convergence rate (see \Cref{prop.momentum.GD.maintext}), where the $\ln(t)$ term was later removed by \citet{wang2022accelerated}.
Their results are again limited to small stepsizes such that a certain potential decreases monotonically (see the discussions after \Cref{prop.momentum.GD.maintext}).
In contrast, we show that not using momentum but just using large stepsizes can lead to faster convergence.

Finally, we highlight that our lower bounds suggest that GD with large and adaptive stepsizes is minimax optimal. In contrast, GD with a large but constant stepsize \citep{wu2024large} or adaptive but small stepsizes \citep{ji2021characterizing} or momentum \citep{ji2021fast,wang2022accelerated} are all suboptimal.

\paragraph{Perceptron.}
The seminal paper by 
\citet{novikoff1962convergence} showed that the \emph{Perceptron} algorithm takes $1/\gamma^2$ steps to find a linear separator of a dataset with margin $\gamma$. Our lower bounds suggest this is minimax optimal---this is the first optimality proof for Perceptron to the best of our knowledge. Note that Perceptron is an \emph{online} method while GD for logistic regression considered in this paper is a \emph{batch} method. In our analysis, entering the EoS regime is a key factor enabling GD to be minimax optimal. Interestingly, Perceptron can be viewed as one-pass stochastic gradient descent with stepsize $1$ under the hinge loss, $z\mapsto\max\{0,-z\}$. Since the hinge loss is nonsmooth, stepsize $1$ violates the descent lemma, so Perceptron effectively operates in the EoS regime, too. 
It seems that operating in the EoS regime might be necessary for achieving first-order minimax optimality in this problem.

A recent paper by \citet{tyurin2024logistic} proposed a batch version of the Perceptron algorithm, attaining the same step complexity of $1/\gamma^2$ when translated to our notation. This is also minimax optimal according to our lower bound. Their method can be viewed as GD with adaptive stepsizes. Their stepsize scheme is motivated by Perceptron, while our stepsize scheme is motivated by the loss landscape \citep{nacson2019convergence,ji2021characterizing}. 
We also point out that \citet{tyurin2024logistic} did not provide minimax lower bounds.

A recent paper by \citet{kornowski2024oracle} considered lower bounds for finding linear separators of a linearly separable dataset from a game-theoretic perspective. Our lower bounds are connected to theirs but with several differences.
First, our lower bound for batch methods (\Cref{thm.lower.burn.in.maintext}) is comparable with their Theorem 4.2: their Theorem 4.2 covers more algorithms, but only showing an $\Omega(\gamma^{-2/3})$ lower bound (ignoring dependence on sample size), while our \Cref{thm.lower.burn.in.maintext} shows an $\Omega(\gamma^{-2})$ lower bound but only covers first-order batch methods.
Second, their Theorem 4.1 matches our lower bound in \Cref{thm.lower.bound.one.pass.maintext} for online methods while covering more algorithms. They both can be viewed as solutions to \citep[Section 9.6, Exercise 3]{shalev2014understanding}.
We choose to keep our version as it is easier to use in our context.

\section{Logistic Regression}\label{sec.logistic.regression}
In this section, we present an improved analysis for GD with large and adaptive stepsizes for logistic regression on linearly separable data.

\subsection{Convergence of GD with large and adaptive stepsizes}
Recall the definitions of \Cref{eqn.GD.logistic.maintext} and the objective \Cref{eqn.loss.LR.maintext}. Recall the adaptive stepsizes are defined as 
\begin{equation}\label{eqn.lr.scheduler.maintext}
    \lr_t 
    := \lr \cdot \l(-\loss^{-1}\r)^{\prime}\circ \Loss(\bw_t) 
    =     
    \begin{dcases}
        \frac{\lr}{\Loss(\bw_t)} &  \loss = \expLoss, \\
        \frac{\lr \exp(\Loss(\bw_t))}{\exp\l(\Loss(\bw_t)\r) - 1} &  \loss = \logisticLoss.
    \end{dcases}
\end{equation}
It is easy to check that $\|\nabla^2 \Loss(\bw)\| \le \Loss(\bw)$ under \Cref{assumption.data.maintext}. 
Thus, the landscape becomes flatter as the risk $\Loss(\bw)$ decreases. The adaptive stepsizes \Cref{eqn.lr.scheduler.maintext} are designed to compensate for the flattened curvature \citep{nacson2019convergence,ji2021characterizing}. 
Moreover, we point out that GD with adaptive stepsizes \Cref{eqn.lr.scheduler.maintext} for logistic regression \Cref{eqn.loss.LR.maintext} is equivalent to \citep{ji2021characterizing}
\begin{equation}\label{eqn.transformed.loss.maintext}
   \bw_{t+1} := \bw_t - \eta \nabla \phi(\bw_t),\quad  \phi(\bw) := - \loss^{-1} \l(\Loss(\bw)\r).
\end{equation}
This is GD with a constant stepsize $\eta$ under a transformed objective $\phi(\cdot)$. The objective $\phi(\cdot)$ measures a smoothed but unnormalized margin \citep{hardy1952inequalities}.
This has been exploited by \citet{ji2021characterizing}, where they established a primal-dual analysis of GD and obtained an improved margin convergence rate. 
Unlike \citet{ji2021characterizing}, we focus on the risk convergence and obtain the following improved results.

\begin{theorem}[GD with large and adaptive stepsizes]\label{thm.LR.maintext}
Consider \eqref{eqn.GD.logistic.maintext} with adaptive stepsizes \eqref{eqn.lr.scheduler.maintext} for logistic regression \Cref{eqn.loss.LR.maintext} under \Cref{assumption.data.maintext}. 
Assume without loss of generality that $\bw_0=\mathbf{0}$.
Then for every $t\ge 1$ and $\lr > 0$, we have 
\begin{equation*}
    \Loss\l(\barw_t\r)  \leq 
    \exp\bigg(- \frac{(\gamma^2  (t+1))^2-1}{4\gamma^2 (t+1)} \lr \bigg),\quad 
    \text{where} \ \  \barw_t := \frac{1}{t+1} \sum_{k=0}^{t} \bw_k.
\end{equation*}
In particular, after $1/\gamma^2$ burn-in steps, 
for every 
$\lr > 0$, we have
\begin{equation*}
    \Loss(\barw_t)
    \leq \exp\l( - \frac{\gamma^2 \lr}{4} \r)= \exp\l(-\Theta\l(\lr\r)\r),\quad t\ge \frac{1}{\gamma^2}.
\end{equation*}
\end{theorem}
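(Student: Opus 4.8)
The proof goes through the reformulation recorded in the excerpt: under the adaptive stepsizes \eqref{eqn.lr.scheduler.maintext}, the iteration \eqref{eqn.GD.logistic.maintext} is exactly the constant-stepsize recursion $\bw_{t+1}=\bw_t-\lr\nabla\phi(\bw_t)$ with $\bw_0=\mathbf 0$, where $\phi:=-\loss^{-1}\circ\Loss$ is the smoothed margin. Since $\lr$ may be arbitrarily large, $\phi$ cannot be assumed to obey a descent lemma, so in place of a smoothness-based argument I would run a regret-style (primal--dual) analysis using only first-order information. The structural facts about $\phi$ I would first record, under \Cref{assumption.data.maintext}, are: (a) $\norm{}{\nabla\phi(\bw)}\le 1$, because $-\nabla\phi(\bw)=\sum_i c_i(\bw)\,y_i\bx_i$ with $c_i(\bw)\ge 0$, $\sum_i c_i(\bw)\le 1$ (the softmax weights for $\expLoss$; for $\logisticLoss$, $\sum_i c_i(\bw)\le 1$ is an AM--GM inequality among the sigmoids $1/(1+e^{-y_i\bx_i^\top\bw})$), and $\norm{}{y_i\bx_i}\le 1$; (b) $\phi(\mathbf 0)=0$ and $\phi(R\wStar)\le-\gamma R$ for all $R\ge 0$, from $y_i\bx_i^\top\wStar\ge\gamma$ and monotonicity of $\loss^{-1}$ (for $\logisticLoss$, via $\Loss(R\wStar)\le\ln(1+e^{-\gamma R})$ and monotonicity of $s\mapsto\ln(e^s-1)$); (c) $\Loss(\bw)\le\exp(\phi(\bw))$, with equality for $\expLoss$; and, crucially, (d) $\phi$ is convex on the polyhedral cone $K:=\mathrm{cone}\{y_1\bx_1,\dots,y_n\bx_n\}$, which contains $\mathbf 0$, every iterate $\bw_t$ (since $\bw_t/\lr$ is a sum of sub-convex combinations of the $y_i\bx_i$), and every $R\wStar$ once we take $\wStar$ to be the max-margin direction, which lies in $K$ by SVM duality (this is WLOG, as a larger margin only strengthens the bound). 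For $\expLoss$ property (d) is immediate, as $\phi$ is a log-sum-exp plus a constant; for $\logisticLoss$, $\phi$ is \emph{not} globally convex, so establishing (d) on $K$ is the main technical step.

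Granting (a)--(d), I would track the potential $\tfrac12\norm{}{\bw_t-R\wStar}^2$ for a fixed $R\ge0$. Expanding one GD step and using $\norm{}{\nabla\phi(\bw_t)}\le1$ together with the subgradient inequality for the convex $\phi|_K$ at $\bw_t,R\wStar\in K$ --- i.e.\ $\langle\nabla\phi(\bw_t),\bw_t-R\wStar\rangle\ge\phi(\bw_t)-\phi(R\wStar)\ge\phi(\bw_t)+\gamma R$ --- gives $\norm{}{\bw_{t+1}-R\wStar}^2\le\norm{}{\bw_t-R\wStar}^2-2\lr\bigl(\phi(\bw_t)+\gamma R\bigr)+\lr^2$. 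Summing over $t=0,\dots,T-1$, telescoping, discarding the terminal nonnegative term, and using $\bw_0=\mathbf 0$ yields $\frac1T\sum_{t=0}^{T-1}\phi(\bw_t)\le\frac{R^2}{2\lr T}-\gamma R+\frac{\lr}{2}$. Since all of $\bw_0,\dots,\bw_{T-1}$ and their average $\barw_{T-1}:=\frac1T\sum_{t=0}^{T-1}\bw_t$ lie in the convex set $K$ on which $\phi$ is convex, Jensen gives $\phi(\barw_{T-1})\le\frac1T\sum_{t=0}^{T-1}\phi(\bw_t)\le\frac{R^2}{2\lr T}-\gamma R+\frac{\lr}{2}$.

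It remains to optimize the comparator: minimizing over $R\ge0$ (the minimizer is $R=\gamma\lr T$) gives $\phi(\barw_{T-1})\le-\tfrac{\lr}{2}\bigl(\gamma^2T-1\bigr)$, whence by (c) $\Loss(\barw_{T-1})\le\exp\!\bigl(-\tfrac{\lr}{2}(\gamma^2T-1)\bigr)$. Re-indexing $T=t+1$, so that $\barw_{T-1}$ is the theorem's $\barw_t$, this implies the stated bound $\exp\!\bigl(-\tfrac{(\gamma^2(t+1))^2-1}{4\gamma^2(t+1)}\lr\bigr)$ (it is in fact marginally sharper), whose exponent is negative --- hence the estimate nonvacuous --- exactly once $\gamma^2(t+1)>1$, i.e.\ after $\Theta(1/\gamma^2)$ burn-in steps; the second display follows by substituting $t\ge1/\gamma^2$ and using that the exponent increases in $t$. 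The comparator optimization is routine; the real obstacle is property (d) for the logistic loss --- showing that $\phi$, while not convex on $\R^d$, is convex on the cone carrying the GD trajectory --- together with the related task of isolating the minimal loss-geometric conditions that drive the whole argument, which is precisely what the paper's later treatment of general losses is designed to encapsulate.
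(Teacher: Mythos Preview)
Your argument is correct and follows essentially the paper's route: recast the iteration as constant-stepsize GD on the transformed objective $\phi$, track $\|\bw_t-R\wStar\|^2$, use convexity and $1$-Lipschitzness of $\phi$, telescope, and optimize the comparator. Two points are worth flagging.

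First, you handle the quadratic term $\lr^2\|\nabla\phi(\bw_t)\|^2$ by the crude bound $\le\lr^2$, whereas the paper splits the comparator as $\bu=\bu_1+\bu_2$ with $\bu_2=\tfrac{\lr}{2\gamma}\wStar$ and uses the structural inequality $2\langle\nabla\phi(\bw),\bu_2\rangle+\lr\|\nabla\phi(\bw)\|^2\le 0$ to absorb it exactly. For the exponential and logistic losses ($\phi$ is $1$-Lipschitz) your simpler route even yields the marginally sharper exponent $-\tfrac{\lr}{2}(\gamma^2(t+1)-1)$ that you note. The paper's split-comparator device earns its keep only in the general-loss extension, where $\phi$ is merely $C_\loss$-Lipschitz: there your naive bound would give burn-in $C_\loss^2/\gamma^2$ versus the paper's $C_\loss/\gamma^2$.

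Second---and this dissolves what you call ``the real obstacle''---your property (d) is not a difficulty at all: $\phi$ is \emph{globally} convex on $\R^d$ for both losses. This is Lemma~5.2 of \citet{ji2021characterizing}, restated in the paper's appendix: $\psi(\bz)=-\loss^{-1}\!\bigl(\tfrac{1}{n}\sum_i\loss(z_i)\bigr)$ is convex whenever $\lossPrime(t)^2/(\loss(t)\,\lossTwoPrime(t))$ is nonincreasing, a condition both the exponential and logistic losses satisfy, and $\phi$ is then convex as the composition of $\psi$ with a linear map. The cone restriction, the SVM-duality argument placing $\wStar$ in $K$, and the whole discussion around (d) can therefore be dropped.
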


The proof of \Cref{thm.LR.maintext} is deferred to \Cref{sec.proof.sketch.maintext}.
\Cref{thm.LR.maintext} provides a sharp risk convergence bound for GD with large and adaptive stepsizes. 
Note that the hyperparameter $\eta$ can be chosen arbitrarily large. 
Therefore, with a sufficiently large $\eta$, GD achieves an arbitrarily small risk right after $1/\gamma^2$ burn-in steps. In this case, however, the evolution of the risk might not be monotonic.

In other words, to attain an $\eps$-risk, GD with adaptive and large stepsize only needs $1/\gamma^2$ steps, where the step complexity is independent of targetted risk $\eps$ (but the smallest base stepsize depends on $\eps$).
This is in stark contrast to the step complexity of GD with a large but constant stepsize \citep{wu2024large} or adaptive but small stepsizes \citep{ji2021characterizing} (see \Cref{tab.comparison} and a detailed discussion later in \Cref{sec.comparision}). Moreover, we will show that this $1/\gamma^2$ step complexity is minimax optimal up to constant factors in \Cref{sec.lower.burn.in}.

\paragraph{Benefits of EoS.}
In \Cref{thm.LR.maintext}, GD achieves the $1/\gamma^2$ step complexity by using large stepsizes and entering the EoS regime. 
Our next theorem suggests this is necessary by providing a lower bound on the convergence rate for adaptive stepsize GD that avoids the EoS phase.

\begin{theorem}[A lower bound for GD in the stable regime]\label{thm.lower.bound.stable.phase.maintext}
Consider \eqref{eqn.GD.logistic.maintext} with adaptive stepsizes \eqref{eqn.lr.scheduler.maintext} for logistic regression \Cref{eqn.loss.LR.maintext} with the following dataset
    \begin{equation*}
        \bx_1 = (\gamma,\, \sqrt{1-\gamma^2}),  \quad 
        \bx_2 = (\gamma,\, -\sqrt{1-\gamma^2}),\quad y_1=y_2=1,
    \end{equation*}
    where $0 < \gamma < 0.1.$
This dataset satisfies Assumption \ref{assumption.data.maintext}.
Let $\bw_0 = \mathbf{0}$.
For all hyperparameter $\eta$ such that $(\Loss(\bw_t))_{t\ge 0}$ is nonincreasing, we have
    \begin{equation*}
        \Loss(\barw_t),\, \Loss(\bw_t) \geq 
        \exp\l(- c t\r),\quad t\ge 1,
    \end{equation*}
    where 
    $c> 0$ is a parameter that depends on $\gamma$ but is independent of $t$ and $\lr$. 
\end{theorem}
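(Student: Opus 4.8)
The plan is to solve the GD recursion exactly for this two-point instance and then show that the theorem's stability hypothesis forces the stepsize multiplier $\eta$ into an $\mathcal O(1)$ range depending only on $\gamma$, from which the lower bound is immediate.

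\emph{Reduction to a scalar recursion.} The two examples are mirror images across the first coordinate axis and carry the same label, so $\Loss$ is invariant under $(\bw)_2\mapsto-(\bw)_2$; differentiating this symmetry shows $\nabla\Loss$ stays on the axis whenever $\bw$ does. Since $\bw_0=\mathbf 0$ lies on the axis and the adaptive stepsize \eqref{eqn.lr.scheduler.maintext} depends on $\bw_t$ only through $\Loss(\bw_t)$, the whole trajectory stays on the axis. There $\Loss(\bw_t)=\loss\big(\gamma(\bw_t)_1\big)$, and substituting the adaptive stepsize into \eqref{eqn.GD.logistic.maintext} --- equivalently, using the reformulation \eqref{eqn.transformed.loss.maintext}, whose transformed objective is linear along the axis --- collapses the update to $(\bw_{t+1})_1=(\bw_t)_1+\eta\gamma$, so $(\bw_t)_1=\eta\gamma t$, $\Loss(\bw_t)=\loss(\eta\gamma^2 t)$, and $(\barw_t)_1=\tfrac12\eta\gamma t$.

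\emph{Bounding $\eta$ via stability --- the main obstacle.} On the axis one computes $\|\nabla^2\Loss(\bw)\|=(1-\gamma^2)\,\loss''\!\big(\gamma(\bw)_1\big)\le(1-\gamma^2)\Loss(\bw)$ (using $\gamma<0.1$), while $\lr_t\asymp\eta/\Loss(\bw_t)$, so the effective smoothness $\lr_t\|\nabla^2\Loss(\bw_t)\|$ is $\Theta(\eta)$ at \emph{every} iterate (up to the harmless factor $1-\gamma^2$ and, for the logistic loss, a bounded factor coming from $\Loss(\bw_t)\le\loss(0)$). The heart of the argument is to turn the theorem's stability hypothesis into the bound $\eta\le\eta_*(\gamma)$ for an explicit finite $\eta_*(\gamma)$; I expect this is the only genuinely non-routine step, because the cap on $\eta$ must be read off the true one-step behavior of the risk along the iterates rather than a crude global Lipschitz-gradient estimate, the curvature collapsing at the same rate as $\Loss(\bw_t)$.

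\emph{Conclusion.} With $\eta\le\eta_*(\gamma)$, the closed form gives $\gamma(\bw_t)_1=\eta\gamma^2 t\le\eta_*(\gamma)\gamma^2 t$, hence
\[
\Loss(\bw_t)=\loss\big(\eta\gamma^2 t\big)\ \ge\ \loss\big(\eta_*(\gamma)\gamma^2 t\big)\ \ge\ \exp(-ct),\qquad t\ge 1,
\]
for a constant $c=c(\gamma)$ independent of $t$ and $\eta$ (using $\expLoss(z)=e^{-z}$ and $\logisticLoss(z)\ge\tfrac12 e^{-z}$ for $z\ge0$; the hypothesis $t\ge1$ absorbs the multiplicative slack). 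Repeating the estimate with $(\barw_t)_1=\tfrac12\eta\gamma t$ gives the bound for $\Loss(\barw_t)$ after enlarging $c$ by a constant factor. The reduction and the final step are routine; essentially all the content is in characterizing $\eta_*(\gamma)$.
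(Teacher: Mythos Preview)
There is a genuine gap, and it sits exactly at the step you flag as the ``main obstacle.'' Your reduction to the first coordinate axis is correct and yields the closed form $\Loss(\bw_t)=\loss(\eta\gamma^2 t)$. But this formula already shows that $\Loss(\bw_t)$ is \emph{strictly decreasing in $t$ for every $\eta>0$}, so the hypothesis ``$(\Loss(\bw_t))_{t\ge0}$ nonincreasing'' places no constraint on $\eta$ whatsoever. The transverse curvature $(1-\gamma^2)\loss''\big(\gamma(\bw_t)_1\big)$ you compute is real, but its eigenvector is $(0,1)$, orthogonal to the axis; since the iterates sit exactly on the axis by your own symmetry argument, the update never interacts with that direction, and the risk stays monotone no matter how large $\eta$ is. Consequently there is no finite $\eta_*(\gamma)$ obtainable from the monotonicity hypothesis, and your concluding bound $\Loss(\bw_t)\ge\exp(-ct)$ with $c$ independent of $\eta$ does not follow---indeed $\Loss(\bw_1)=\loss(\eta\gamma^2)$ can be made arbitrarily small by taking $\eta$ large, while the trajectory remains monotone.

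The paper's proof does not use your symmetry reduction. It instead invokes a first-step lemma saying that if a positive fraction $q$ of the data satisfies $\bx_i^\top\bar\bx<-r$ (with $\bar\bx$ the sample mean), then $\Loss(\bw_1)\le\Loss(\bw_0)$ forces $\eta\le\loss(0)/(qr)$; once $\eta$ is capped, the $1$-Lipschitzness of $\phi$ gives $\|\bw_t\|\le\eta t$ and hence the bound. Note, however, that for the dataset as written one has $\bar\bx=(\gamma,0)$ and $\bx_i^\top\bar\bx=\gamma^2>0$ for \emph{both} points, so that lemma's hypothesis is not met either---which is consistent with your closed form showing $\Loss(\bw_1)<\Loss(\bw_0)$ for all $\eta$. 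The upshot is that, for this exact two-point instance with $\bw_0=\mathbf 0$, monotonicity of the risk cannot cap $\eta$; one needs a dataset in which the first GD step can overshoot along $\bar\bx$ (i.e., some $\bx_i^\top\bar\bx<0$) for either argument to go through.
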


The proof of \Cref{thm.lower.bound.stable.phase.maintext} is deferred to \Cref{appendix.proof.lower.bound.stable.phase}. \Cref{thm.lower.bound.stable.phase.maintext} is motivated by Theorem 3 in \citep{wu2024large}, which provides a risk lower bound for GD with a constant stepsize that satisfies the descent lemma.
\Cref{thm.lower.bound.stable.phase.maintext} shows that if GD with adaptive stepsizes satisfies the descent lemma, then the step complexity is at least $\Omega(\ln(1/\eps))$ in the worst case. This contrasts to \Cref{thm.LR.maintext} where GD archives an $\eps$-independent step complexity by violating the descent lemma. 
\Cref{thm.LR.maintext,thm.lower.bound.stable.phase.maintext} together justify the optimization benefits of adaptive stepsize GD to operate in the EoS regime.

\subsection{Comparisons with Prior Results}\label{sec.comparision}

In this part, we review representative existing results on variants of GD for logistic regression with linearly separable data \citep{ji2018risk,ji2021characterizing,ji2021fast,wu2024large} and compare their results with ours. 
\Cref{tab.comparison} provides an overview of the comparisons. We discuss each result in detail below.

\renewcommand{\arraystretch}{1.5}
\begin{table}
    \centering
    \caption{Step complexities for GD with various designs to achieve an $\eps$-risk for logistic regression. 
    } 
    \label{tab.comparison}
    \begin{tabular}{|>{\centering\arraybackslash}m{10cm}|
                    >{\centering\arraybackslash}m{5.5cm}|}
    \hline
    stepsize/momentum design & step complexity  \\
    \hline
    small, constant \citep[Theorem 3.1]{ji2018risk} &  $\widetilde{\bigO}\l(1/\l(\gamma^2\eps\r)\r)$ \\
    \hline
    large, constant \citep[Corollary 2]{wu2024large} &  $\widetilde{\bigO}\l(1/\l(\gamma^2\sqrt{\eps}\r)\r)$ for $\eps <\Theta( 1/n )$ \\
    \hline
    small, adaptive \citep[Theorem 2.2]{ji2021characterizing} & {$\bigO\l(\ln\l(1/\eps\r)/\gamma^2\r)$ } \\
    \hline
    small, adaptive, and momentum \citep[Theorem 3.1]{ji2021fast} & 
    {$\bigO\big(\sqrt{\ln(1/\eps) + \ln(n) \ln \ln(n)}/\gamma\big)$} \\
    \hline
    \rowcolor{red!10} \textbf{large, adaptive (\Cref{thm.LR.maintext})} & {$\le 1/\gamma^2$} \\
    \hline
    \rowcolor{blue!10} \textbf{minimax lower bound (\Cref{thm.lower.burn.in.maintext})} & {$\Omega(1/\gamma^2) $} \\
    \hline
    \end{tabular}
\end{table}
\renewcommand{\arraystretch}{1.0}

\paragraph{A constant stepsize.} 
The work by \citet{ji2018risk} considered GD with a small constant stepsize satisfying the descent lemma and obtained a $\widetilde{\bigO}(1/(\gamma^2 t))$ convergence rate (see their Theorem 3.1). This translates to a $\widetilde{\bigO}(1/(\gamma^2 \eps))$ step complexity.
Later, the work by \citet{wu2024large} obtained a faster rate by considering GD with a large constant stepsize that violates the descent lemma. 
Specifically, they showed the following.

\begin{proposition}[Corollary 2 in \citep{wu2024large}]\label{prop.constant.stepsize.maintext}
    Consider \eqref{eqn.GD.logistic.maintext} with constant stepsize $\lr_t = \lr > 0$ for logistic regression \Cref{eqn.loss.LR.maintext} with logistic loss $\logisticLoss$ under \Cref{assumption.data.maintext}.
    Let $\bw_0=0$.
    For a given step budget $T \geq \max\l\{e,n\r\} / \gamma^2$, there exists $\lr = \Theta\l(T\r)$ such that
    \begin{equation*}
        \Loss\l(\bw_T\r) \leq C  \frac{\ln^2 \l(T\r)}{\gamma^4 T^2},
    \end{equation*}
    where $C > 1$ is a numerical constant.
\end{proposition}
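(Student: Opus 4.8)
The plan is to reconstruct the two-phase argument of \citet{wu2024large} behind this imported statement. The conceptual point is that although a constant stepsize $\lr$ of order $T$ can violate the descent lemma for $\logisticLoss$ while the risk is of constant order — recall the curvature bound $\norm{}{\nabla^2\Loss(\bw)}\le\Loss(\bw)$, so $\lr\norm{}{\nabla^2\Loss(\bw)}$ is huge when $\Loss(\bw)=\Theta(1)$ — the descent lemma \emph{does} hold once $\Loss(\bw_t)\lesssim1/\lr$. I would therefore split the trajectory at the first time $T_0$ with $\Loss(\bw_{T_0})\le c/\lr$ for a small absolute constant $c$, treat the two phases separately, and finally set $\lr=\Theta(T)$.

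\emph{Stable phase, $t\ge T_0$.} Here $\lr\norm{}{\nabla^2\Loss}\le1$ along the segments, so the descent lemma gives $\Loss(\bw_{t+1})\le\Loss(\bw_t)-\tfrac{\lr}{2}\norm{}{\nabla\Loss(\bw_t)}^2$; in particular the risk is nonincreasing for $t\ge T_0$, hence $\Loss(\bw_T)\le\frac{1}{T-T_0}\sum_{t=T_0}^{T-1}\Loss(\bw_t)$. To bound this average I would telescope the convexity inequality $\innerProduct{\nabla\Loss(\bw_t)}{\bw_t-\bu}\ge\Loss(\bw_t)-\Loss(\bu)$ from $\bw_{T_0}$ against a reference $\bu=\rho\wStar$: expanding $\norm{}{\bw_{t+1}-\bu}^2$, summing $t=T_0,\dots,T-1$, and absorbing $\lr^2\sum_{t\ge T_0}\norm{}{\nabla\Loss(\bw_t)}^2\le2\lr\,\Loss(\bw_{T_0})$ via the descent lemma, one gets
\[
\Loss(\bw_T)\;\le\;\frac{1}{T-T_0}\sum_{t=T_0}^{T-1}\Loss(\bw_t)\;\le\;\Loss(\rho\wStar)+\frac{\norm{}{\bw_{T_0}-\rho\wStar}^2}{2\lr\,(T-T_0)}+\frac{\Loss(\bw_{T_0})}{T-T_0}.
\]
Taking $\rho=\ln(\lr T)/\gamma$ makes $\Loss(\rho\wStar)\le\exp(-\rho\gamma)\le1/(\lr T)$ and $\norm{}{\rho\wStar}^2=\ln^2(\lr T)/\gamma^2$, so provided $T_0\le T/e$ (hence $T-T_0\ge T/2$), the right-hand side is $\widetilde{\bigO}\!\big(\norm{}{\bw_{T_0}}^2/(\lr T)+1/(\gamma^2\lr T)\big)$; with $\lr=\Theta(T)$ this already has the $1/T^2$ scaling, leaving only the size of $\norm{}{\bw_{T_0}}$ to control.

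\emph{Burn-in phase, $t<T_0$ — the main obstacle.} It remains to show (i) $T_0=\bigO(1/\gamma^2)$, so that the hypothesis $T\ge\max\{e,n\}/\gamma^2$ forces $T_0\le T/e$; and (ii) $\Loss(\bw_{T_0})\le c/\lr$ and $\norm{}{\bw_{T_0}}=\widetilde{\bigO}(1/\gamma^2)$. Here the risk need not be monotone, so the descent lemma is useless; the quantity that \emph{is} monotone is $\innerProduct{\bw_t}{\wStar}$, since separability gives $\innerProduct{\bw_{t+1}-\bw_t}{\wStar}=\lr\innerProduct{-\nabla\Loss(\bw_t)}{\wStar}\ge\lr\gamma\cdot\frac1n\sum_{i=1}^n\bigl(-\lossPrime(y_i\bx_i^\top\bw_t)\bigr)>0$. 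The plan is to combine this with the elementary bounds special to $\logisticLoss$, namely $\norm{}{\nabla\Loss(\bw)}\le\frac1n\sum_{i=1}^n\bigl(-\lossPrime(y_i\bx_i^\top\bw)\bigr)\le\Loss(\bw)\le1$, to show that along the burn-in the risk contracts at rate $\Loss(\bw_{t+1})\lesssim\Loss(\bw_t)-\Theta(\lr\gamma^2)\Loss(\bw_t)^2$ up to a controlled non-monotonicity slack, which integrates to $\Loss(\bw_t)\lesssim1/(\lr\gamma^2 t)$ and hence $T_0\lesssim1/\gamma^2$; simultaneously $\sum_{t<T_0}\norm{}{\nabla\Loss(\bw_t)}=\widetilde{\bigO}(1/(\lr\gamma^2))$, which bounds $\norm{}{\bw_{T_0}}\le\lr\sum_{t<T_0}\norm{}{\nabla\Loss(\bw_t)}=\widetilde{\bigO}(1/\gamma^2)$. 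The sample size $n$ enters precisely here: a coordinate $y_i\bx_i^\top\bw$ that has been pushed negative contributes $\Theta(|y_i\bx_i^\top\bw|)/n$ to the risk, so driving the risk below $c/\lr\asymp1/T$ requires every such coordinate to have been corrected, which costs $T\gtrsim n/\gamma^2$ steps. Making the non-monotonicity slack rigorous is the delicate step; once it is in place, feeding $T_0=\widetilde{\bigO}(1/\gamma^2)$, $\Loss(\bw_{T_0})=\bigO(1/\lr)$ and $\norm{}{\bw_{T_0}}^2=\widetilde{\bigO}(1/\gamma^4)$ into the stable-phase display and setting $\lr=\Theta(T)$ yields $\Loss(\bw_T)\le\widetilde{\bigO}(1/(\gamma^4 T^2))$ — the factor $\gamma^{-4}$ rather than $\gamma^{-2}$ being exactly the $\norm{}{\bw_{T_0}}^2$ contribution of the burn-in.
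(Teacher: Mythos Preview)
This proposition is imported verbatim from \citet{wu2024large} and the present paper supplies no proof of its own, so there is nothing in-paper to compare your sketch against. That said, your two-phase decomposition (unstable burn-in followed by a stable regime once $\Loss\lesssim1/\lr$) and your stable-phase telescoping both match the structure of the cited argument.

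The burn-in sketch is where your proposal departs from what actually works. A pointwise recursion of the form $\Loss(\bw_{t+1})\lesssim\Loss(\bw_t)-\Theta(\lr\gamma^2)\Loss(\bw_t)^2$ ``up to a non-monotonicity slack'' is not how \citet{wu2024large} proceed: in the EoS regime $\Loss(\bw_{t+1})$ can exceed $\Loss(\bw_t)$ by an amount comparable to $\lr$, so any such slack swamps the claimed $\Theta(\lr\gamma^2)\Loss(\bw_t)^2$ gain. Relatedly, the chain $\tfrac1n\sum_i(-\lossPrime)\le\Loss(\bw)\le1$ is not valid as written---$\Loss(\bw_t)\le1$ can fail during the burn-in; the inequality that is specific to $\logisticLoss$ and actually drives the argument is $\tfrac1n\sum_i(-\lossPrime)\le1$. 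The mechanism \citet{wu2024large} use is precisely the split-comparator trick that this paper recycles in its proof of Theorem~\ref{thm.LR.maintext}: take $\bu_2=\tfrac{\lr}{2\gamma}\wStar$ and use $G_t:=\tfrac1n\sum_i(-\lossPrime)\le1$ to obtain $2\innerProduct{\nabla\Loss(\bw_t)}{\bu_2}+\lr\norm{}{\nabla\Loss(\bw_t)}^2\le\lr G_t(G_t-1)\le0$ for \emph{every} $\bw_t$, so that $\norm{}{\bw_{t+1}-\bu}^2$ telescopes with no stability hypothesis whatsoever. This simultaneously controls a best/average iterate over the first $\bigO(1/\gamma^2)$ steps and bounds $\norm{}{\bw_{T_0}}$ (via $\norm{}{\bw_{T_0}-\bu}\le\norm{}{\bu}$), after which your stable-phase display finishes the job. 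The fix is therefore not to tighten the slack but to replace the heuristic pointwise recursion by the split-comparator telescoping.
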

\Cref{prop.constant.stepsize.maintext} leads to an improved step complexity, $\widetilde{\bigO}(1/(\gamma^2 \sqrt{\eps}))$, for GD with a large constant stepsize. 
There are three caveats. First, this improvement only happens for $\eps < \Theta(1/n)$ (hence, it does not help with finding a linear separator; see \Cref{sec.lower.burn.in}). Second, this improvement works under the logistic loss but not under the exponential loss; in fact, \citet{wu2023implicit} constructed a separable dataset where GD with a large constant stepsize under the exponential loss does not converge (see their Theorem 4.2). This gap stems from the logistic loss being Lipschitz while the exponential loss is not. 
Finally, the stepsize is a function of the step budget, meaning that the algorithm needs to be rerun from the beginning when the step budget is changed. 

Compared to their results for GD with a constant stepsize \citep{ji2018risk,wu2024large}, we show that GD with large and adaptive stepsizes \Cref{eqn.lr.scheduler.maintext} achieves a strictly better step complexity of $1/\gamma^2$.
Interestingly, our analysis allows for both logistic and exponential losses. 
This is not contradictory to the counter-example offered by \citet{wu2023implicit}.
Recall that GD with adaptive stepsizes \Cref{eqn.lr.scheduler.maintext} can be viewed as GD with a constant stepsize under a transformed objective $\phi(\cdot)$ defined in \Cref{eqn.transformed.loss.maintext}. 
While the exponential loss is not Lipschitz, the transformed objective $\phi(\cdot)$ is Lipschitz (see~\Cref{lemma.1.lip} in~\Cref{appendix.logistic.result}), enabling large stepsizes.
Finally, unlike \Cref{prop.constant.stepsize.maintext}, the adaptive stepsize scheduler \Cref{eqn.lr.scheduler.maintext} used in \Cref{thm.LR.maintext} is independent of the step budget (as long as setting $\eta$ sufficiently large).

\paragraph{Small adaptive stepsizes.}
Before our paper, the best analysis for GD with adaptive stepsizes \Cref{eqn.lr.scheduler.maintext} is by \citet{ji2021characterizing}.
Their results only allow small stepsizes, summarized as follows.

\begin{proposition}[Consequences of Theorem 2.2 in \citep{ji2021characterizing}]\label{prop.small.stepsize.maintext}
Consider \eqref{eqn.GD.logistic.maintext} with adaptive stepsizes \eqref{eqn.lr.scheduler.maintext} for logistic regression \Cref{eqn.loss.LR.maintext} with exponential loss $\expLoss$ under \Cref{assumption.data.maintext}.
Then the transformed loss $\phi$ (defined in~\eqref{eqn.transformed.loss.maintext}) is $1$-smooth with respect to $\loss_{\infty}$-norm.
Let $\bw_0=0$.
Then for every $\lr \leq 1$, we have $ \Loss\l(\bw_t\r)$ decreases monotonically and
\begin{equation*}
    \Loss\l(\bw_t\r) \leq C\exp\l( - {\gamma^2}\eta t\r),
\end{equation*}
where $C>1$ is a numerical constant.
\end{proposition}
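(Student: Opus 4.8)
\emph{Proof plan.} Recall from \eqref{eqn.transformed.loss.maintext} that the algorithm is ordinary GD with the constant stepsize $\eta$ applied to the transformed objective $\phi(\bw):=-\loss^{-1}(\Loss(\bw))$. For the exponential loss $\loss=\expLoss$ we have $\loss^{-1}(u)=-\ln u$, so this transformed objective is the (shifted) log-sum-exp of the negative margins,
\[
\phi(\bw)=\ln\Loss(\bw)=g(M\bw)-\ln n,\qquad g(\bv):=\ln\sum_{i=1}^n e^{-v_i},
\]
where $M\in\R^{n\times d}$ has $i$-th row $y_i\bx_i^\top$, so $(M\bw)_i=y_i\bx_i^\top\bw$. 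The plan is threefold: (i) read off the smoothness of $\phi$ from the standard log-sum-exp Hessian identity; (ii) apply the descent lemma---legitimate precisely because $\eta\le1$ equals the reciprocal of the Euclidean smoothness constant of $\phi$---to conclude that $\phi$, hence $\Loss=\exp\circ\,\phi$, is monotone nonincreasing; and (iii) lower-bound $\norm{2}{\nabla\phi(\bw_t)}$ by the margin $\gamma$ uniformly in $t$, which promotes the per-step decrease of $\phi$ to a linear one, i.e.\ an exponential decay of $\Loss$.

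For step (i): a direct calculation gives $\nabla g(\bv)=-\bp(\bv)$ and $\nabla^2 g(\bv)=\operatorname{diag}(\bp(\bv))-\bp(\bv)\bp(\bv)^\top$, where $\bp(\bv)$ is the probability vector with $p_i\propto e^{-v_i}$. Discarding the negative-semidefinite rank-one term, for every $\bu\in\R^n$ we get $\bu^\top\nabla^2 g(\bv)\bu\le\sum_i p_i u_i^2\le\norm{\infty}{\bu}^2$; in the loss-vector coordinates $v_i=y_i\bx_i^\top\bw$ this is exactly the advertised $\loss_\infty$-smoothness. To transfer it to GD over $\bw$, I would note that by \Cref{assumption.data.maintext} the map $\bw\mapsto M\bw$ is $1$-Lipschitz from $(\R^d,\norm{2}{\cdot})$ to $(\R^n,\norm{\infty}{\cdot})$, since $\norm{\infty}{M\bv}=\max_i|\bx_i^\top\bv|\le\max_i\norm{2}{\bx_i}\,\norm{2}{\bv}\le\norm{2}{\bv}$. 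Chaining the two bounds, $\bv^\top\nabla^2\phi(\bw)\bv=(M\bv)^\top\nabla^2 g(M\bw)(M\bv)\le\norm{2}{\bv}^2$, i.e.\ $\nabla^2\phi(\bw)\preceq I_d$ for all $\bw$; thus $\phi$ is $1$-smooth in the Euclidean sense and the descent lemma is valid for any $\eta\le1$.

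For steps (ii)--(iii): the descent lemma gives $\phi(\bw_{t+1})\le\phi(\bw_t)-\eta\big(1-\tfrac{\eta}{2}\big)\norm{2}{\nabla\phi(\bw_t)}^2$, so $(\phi(\bw_t))_{t\ge0}$---hence $(\Loss(\bw_t))_{t\ge0}=(e^{\phi(\bw_t)})_{t\ge0}$---is nonincreasing. For the rate, note $-\nabla\phi(\bw)=M^\top\bp(M\bw)=\sum_i p_i\,y_i\bx_i$ with $\bp(M\bw)$ a probability vector for \emph{every} $\bw$; hence $\innerProduct{-\nabla\phi(\bw)}{\wStar}=\sum_i p_i\,y_i\bx_i^\top\wStar\ge\gamma$ by linear separability, and Cauchy--Schwarz with $\norm{2}{\wStar}=1$ gives $\norm{2}{\nabla\phi(\bw)}\ge\gamma$ uniformly over $\bw$. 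Substituting into the descent inequality and telescoping from $\phi(\bw_0)=\ln\Loss(\mathbf{0})=0$ yields $\phi(\bw_t)\le-\eta\big(1-\tfrac{\eta}{2}\big)\gamma^2 t$, hence
\[
\Loss(\bw_t)\le\exp\!\big(-\eta(1-\tfrac{\eta}{2})\gamma^2 t\big)\le\exp\!\big(-\tfrac{1}{2}\gamma^2\eta t\big)\qquad(\eta\le1),
\]
which establishes the claimed exponential rate (with exponent of order $\gamma^2\eta t$; the precise constant $C$ and exponent stated in \citet[Theorem 2.2]{ji2021characterizing} follow from their primal--dual bookkeeping, e.g.\ from the convex-smooth-GD bound $\phi(\bw_t)-\phi(\bu)\le\norm{2}{\bu}^2/(2\eta t)$ evaluated at $\bu=R\wStar$ with $\phi(R\wStar)\le-R\gamma$, optimized over $R$). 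I expect the only genuinely non-mechanical point to be step (i): recognizing that the ``$\loss_\infty$'' smoothness naturally lives in the loss-vector coordinates and that the normalization $\norm{2}{\bx_i}\le1$ is exactly what licenses the choice $\eta\le1$ for the Euclidean descent lemma; steps (ii) and (iii) are then short and entirely standard.
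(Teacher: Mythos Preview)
The paper does not supply its own proof of this proposition; it is stated as a quotation of \citet[Theorem~2.2]{ji2021characterizing} and used only for comparison in \Cref{tab.comparison}. So there is nothing in the paper to compare your argument against directly.

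On its own merits, your argument is correct and is essentially the natural one. The three ingredients---(i) the log--sum--exp Hessian identity $\nabla^2 g=\operatorname{diag}(\bp)-\bp\bp^\top\preceq\operatorname{diag}(\bp)$ giving $1$-smoothness of $\psi$ in $\ell_\infty$, then $1$-Euclidean-smoothness of $\phi$ via $\|M\bv\|_\infty\le\|\bv\|_2$ from \Cref{assumption.data.maintext}; (ii) the descent lemma for $\eta\le1$; and (iii) the uniform gradient lower bound $\|\nabla\phi(\bw)\|\ge\langle-\nabla\phi(\bw),\wStar\rangle\ge\gamma$ from linear separability---are exactly what is needed, and your chaining of them is clean. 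This is also in spirit how \citet{ji2021characterizing} proceed (smoothness of the transformed objective plus a margin-based gradient/progress bound).

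One small caveat: your telescoped bound is $\Loss(\bw_t)\le\exp(-\tfrac{1}{2}\gamma^2\eta t)$, not literally $C\exp(-\gamma^2\eta t)$; the factor $1/2$ in the exponent is genuine (your alternative convex--smooth route $\phi(\bw_t)\le -R\gamma+R^2/(2\eta t)$ optimized at $R=\gamma\eta t$ gives the same $1/2$). This does not affect the step-complexity conclusion $\bigO(\ln(1/\eps)/\gamma^2)$ that the paper extracts, so for the purposes of the comparison the discrepancy is immaterial; but strictly speaking your bound and the one displayed in the proposition differ by that constant in the exponent, and you are right to flag that matching the exact form would require appealing to the bookkeeping in \citet{ji2021characterizing}.
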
 

We note that the main focus of \citet{ji2021characterizing} is to prove a fast margin convergence rate, and \Cref{prop.small.stepsize.maintext} is merely a side product of their results.
\Cref{prop.small.stepsize.maintext} leads to an $\bigO(\ln(1/\eps)/\gamma^2)$ step complexity for GD with small adaptive stepsizes.
In comparison, we analyze the same algorithm, but our \Cref{thm.general.loss.maintext} applies to both large and small adaptive stepsizes.
Our results suggest that the step complexity reduces to $1/\gamma^2$ when the stepsizes are sufficiently large.

\paragraph{Momentum.} 
The work by \citet{ji2021fast} considered a version of momentum GD for logistic regression, achieving a faster margin convergence rate compared to GD. 
Their results also imply a risk convergence rate, summarized as follows.

\begin{proposition}[Consequences of Lemmas C.7 and C.12 in \citep{ji2021fast}]\label{prop.momentum.GD.maintext}
Consider logistic regression \Cref{eqn.loss.LR.maintext} 
with exponential loss $\expLoss$
under \Cref{assumption.data.maintext}.
For a version of momentum GD (see Algorithm 1 in \citet{ji2021fast}) with $\bw_0=0$ and suitably chosen stepsizes, we have
\begin{equation*}
    \Loss\l(\bw_t\r) \leq \exp\l( - C \l(\gamma^2 t^2 - \ln(t+1) \ln(n)\r)\r),\quad t\ge 1,
\end{equation*}
where $C>0$ is a numerical constant.
\end{proposition}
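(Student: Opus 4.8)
The plan is to obtain \Cref{prop.momentum.GD.maintext} as a direct consequence of the two cited lemmas of \citet{ji2021fast}, after one reduction that is specific to the exponential loss.

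\emph{Reduction to a smoothed margin.} For $\loss=\expLoss$ there is an exact identity $\Loss(\bw)=\exp(-g(\bw))$, where $g(\bw):=-\ln\Loss(\bw)=\ln n-\ln\sum_{i=1}^{n}\exp(-y_i\bx_i^\top\bw)$ is the log-sum-exp \emph{smoothed unnormalized margin} (this is why the proposition restricts to the exponential loss; for the logistic loss one only gets $\Loss(\bw)\asymp\exp(-g(\bw))$ up to lower-order terms). Hence the target bound $\Loss(\bw_t)\le\exp\!\big(-C(\gamma^2 t^2-\ln(t+1)\ln n)\big)$ is \emph{equivalent} to the lower bound $g(\bw_t)\ge C(\gamma^2 t^2-\ln(t+1)\ln n)$, and it suffices to establish the latter for Algorithm 1 of \citet{ji2021fast}. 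Since $\sum_i\exp(-y_i\bx_i^\top\bw)\le n\exp(-\min_i y_i\bx_i^\top\bw)$, one always has $g(\bw)\ge\min_i y_i\bx_i^\top\bw=\|\bw\|\,\bar\gamma(\bw)$ with $\bar\gamma(\bw):=\min_i y_i\bx_i^\top\bw/\|\bw\|$ the hard margin attained by $\bw$; therefore $g(\bw_t)\ge\gamma\|\bw_t\|-\big(\gamma-\bar\gamma(\bw_t)\big)\|\bw_t\|$, and the job reduces to (i) lower-bounding $\|\bw_t\|$ and (ii) upper-bounding the margin gap $\gamma-\bar\gamma(\bw_t)$.

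\emph{Invoking the two lemmas.} Algorithm 1 of \citet{ji2021fast} is a primal--dual scheme: it runs Nesterov-accelerated mirror descent (with the negative-entropy mirror map) on the entropically smoothed dual of the max-margin problem, while maintaining the primal iterate $\bw_t$ as a momentum-weighted accumulation of the dual gradients. Their Lemma C.7 is the \emph{margin-gap bound}: it controls $\gamma-\bar\gamma(\bw_t)$ --- equivalently $\gamma\|\bw_t\|-g(\bw_t)$ --- by the suboptimality of the current accelerated dual iterate plus the entropic smoothing bias (of order $\ln n$ divided by the current smoothing temperature). Their Lemma C.12 is the \emph{acceleration estimate}: with the prescribed momentum coefficients and temperature schedule, the dual suboptimality decays at the accelerated $1/t^2$ rate up to logarithmic factors, and simultaneously the primal iterate grows quadratically, $\|\bw_t\|=\Theta(\gamma t^2)$, because each step contributes a dual gradient of normalized magnitude $\Theta(\gamma)$ (the dual-optimal value equals the margin $\gamma$) and the momentum accumulates these contributions. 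Both lemmas apply verbatim under \Cref{assumption.data.maintext} ($\|\bx_i\|\le 1$, margin $\gamma$), which is all that needs checking.

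\emph{Assembling.} Putting the two lemmas together, $\big(\gamma-\bar\gamma(\bw_t)\big)\|\bw_t\|=\bigO(\ln(t+1)\ln n)$ and $\gamma\|\bw_t\|=\Theta(\gamma^2 t^2)$, so $g(\bw_t)\ge\Theta(\gamma^2 t^2)-\bigO(\ln(t+1)\ln n)$; absorbing the two constants into a single numerical $C>0$ gives $g(\bw_t)\ge C(\gamma^2 t^2-\ln(t+1)\ln n)$, and then $\Loss(\bw_t)=\exp(-g(\bw_t))\le\exp\!\big(-C(\gamma^2 t^2-\ln(t+1)\ln n)\big)$ by the reduction above. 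I expect the main obstacle to sit entirely inside Lemma C.12: one must choose the momentum schedule and the time-varying smoothing temperature so that the accelerated $1/t^2$ optimization error, the $\ln n$ smoothing bias, and the quadratic growth $\|\bw_t\|=\Theta(\gamma t^2)$ hold simultaneously --- and this balancing is precisely what produces the residual $\ln(t+1)\ln n$ term in the statement. Since that analysis is carried out in full in \citet{ji2021fast}, the remaining work here is only the elementary reduction and the substitution above.
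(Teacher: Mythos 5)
The paper gives no standalone proof of this proposition---it is stated purely as a consequence of Lemmas C.7 and C.12 of \citet{ji2021fast}---and your proposal follows exactly that intended route: the elementary exponential-loss identity $\Loss(\bw)=\exp(-g(\bw))$ for the log-sum-exp smoothed margin, combined with the cited margin-gap and acceleration lemmas. This matches the paper's approach; the only (minor) point left implicit in both is the bookkeeping needed to absorb the two different constants in $g(\bw_t)\ge c_1\gamma^2t^2-c_2\ln(t+1)\ln n$ into the single constant $C$ of the statement.
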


\Cref{prop.momentum.GD.maintext} implies an $\bigO\big(\sqrt{\ln(1/\eps) + \ln(n) \ln\ln(n)}/\gamma \big)$ step complexity for momentum GD.
We remark that with a more careful momentum design, \citet{wang2022accelerated} improved the $\ln(t+1) \ln(n)$ term in the above propostion to $\ln(n)$, which leads to a slightly improved step complexity of $\bigO\big(\sqrt{\ln(1/\eps) + \ln(n)}/\gamma \big)$.
As the improvement is small, we choose to mainly compare with the earlier results by \citet{ji2021fast}.

Recall that our \Cref{thm.general.loss.maintext} suggests a $1/\gamma^2$ step complexity for GD with large and adaptive stepsizes. 
In comparison, the step complexity of their momentum GD has a better dependence on $\gamma$ but has a worse dependence on $\eps$ and $n$. 
As shown later in \Cref{sec.lower.burn.in}, our step complexity is minimax optimal if there are no restrictions on the sample size $n$.
Note that the analysis by \citet{ji2021fast} still relies on the monotonic decreasing of a potential \citep[see][Equation (B.9) in the proof of Lemma B.3]{ji2021fast}, which needs the stepsize to be small.
As their momentum GD is designed to minimize a dual objective with a finite minimizer, it remains unclear whether their momentum GD can be used with large stepsizes. We leave this as future work.

\subsection{Proof of Theorem \ref{thm.LR.maintext}}\label{sec.proof.sketch.maintext}
\begin{proof}[Proof of~\Cref{thm.LR.maintext}]
As explained in \Cref{eqn.transformed.loss.maintext}, it is equivalent to considering GD with a constant stepsize under a transformed objective $\phi(\cdot)$. 
We can check that $\phi(\cdot)$ is convex (see Lemma 5.2 in \citep{ji2021characterizing} or \Cref{lem.convex.phi} in \Cref{appendix.logistic.result}) and $1$-Lipschitz (see~\Cref{lemma.1.lip} in~\Cref{appendix.logistic.result}).
We then use the split optimization technique developed by \citet{wu2024large}.
Specifically, for a comparator $\bu := \bu_1 + \bu_2$, we have
\begin{align}
 \norm{}{\bw_{t+1} - \bu}^2 
=~& \norm{}{\bw_{t} - \bu}^2 + 2\lr \innerProduct{\nabla \phi(\bw_t)}{\bu - \bw_t} + \lr^2 \norm{}{\nabla
\phi(\bw_t)}^2 \notag \\
=~& \norm{}{\bw_{t} - \bu}^2 + 2\lr \innerProduct{\nabla \phi(\bw_t)}{\bu_1 - \bw_t} + \lr \l[2 \innerProduct{\nabla \phi(\bw_t)}{\bu_2} + \lr \norm{}{\nabla \phi(\bw_t)}^2 \r] \notag \\
 {\leq} ~& \norm{}{\bw_{t} - \bu}^2 + 2\lr \innerProduct{\nabla \phi(\bw_t)}{\bu_1 - \bw_t} \label{eqn.a} \\
{\leq} ~& \norm{}{\bw_{t} - \bu}^2 + 2\lr \l(\phi(\bu_1) - \phi(\bw_t)\r), \label{eqn.b}
\end{align}
where \Cref{eqn.a} is by the following inequality (see \Cref{lem.key.maintext} in \Cref{appendix.logistic.result})
\begin{equation}\label{eqn.key.equation.proof.maintext}
    2 \innerProduct{\nabla \phi(\bw)}{\bu_2} + \lr \norm{}{\nabla \phi(\bw)}^2 \leq 0
    \ \  \text{for}\ \ \bu_2 := \frac{\lr}{2\gamma}  \wStar,
\end{equation}
and \Cref{eqn.b} is by the convexity of $\phi(\cdot)$.
Rearranging \eqref{eqn.b} and telescoping the sum, we obtain
\begin{equation}\label{eqn.split.optimization.bound.maintext}
\frac{\norm{}{\bw_{t+1} - \bu}^2}{2\lr (t+1)} + \frac{1}{t+1} \sum_{k=0}^{t} \phi(\bw_k)
\leq \phi(\bu_1)
+ \frac{\norm{}{\bu}^2}{2\lr (t+1)}.
\end{equation}
For $\bu_1\propto \bw^*$, we have $\phi(\bu_1) \leq -\gamma \norm{}{\bu_1}$ by \Cref{assumption.data.maintext}.
Further setting
$
\|\bu_1\| = \gamma \lr (t+1)/2,
$
we get
\begin{align}\label{eqn.split.optimiztio.bound.maintext.2}
\frac{1}{t+1} \sum_{k=0}^{t} \phi(\bw_k)
&\leq
-\gamma \norm{}{\bu_1}
+ \frac{\norm{}{\bu_1 + \bu_2}^2}{2\lr (t+1)}
\leq 
- \frac{(\gamma^2  (t+1))^2-1}{4\gamma^2 (t+1)} \lr.
\end{align}
We complete the proof by applying the convexity of $\phi(\cdot)$ and the fact that $\Loss(\cdot) = \ell(-\phi(\cdot))$.
\end{proof}

\section{Minimax Lower Bounds for First-Order Methods}\label{sec.lower.burn.in}
In this section, we consider the task of finding a linear separator for a linearly separable dataset. Specifically, this means to find a parameter $\hat \bw$ such that 
\[
\min_{i\in[n]} y_i \bx_i^\top \hat\bw > 0.
\]
We point out the fact that an optimization method can find a linear separator by solving the logistic regression problem \Cref{eqn.loss.LR.maintext} sufficiently well.
\begin{fact}\label{lem.risk.separator}
In logistic regression \Cref{eqn.loss.LR.maintext}, if $\Loss(\hat\bw) < \ell(0)/n$, then $\min_{i\in[n]} y_i\bx_i^\top \hat\bw > 0$.
\end{fact}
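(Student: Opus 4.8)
The plan is to lean on two elementary structural facts shared by both admissible losses $\expLoss$ and $\logisticLoss$: each is \emph{strictly decreasing} and \emph{strictly positive} on all of $\R$. First I would unpack the hypothesis: $\Loss(\hat\bw) < \loss(0)/n$ is, after multiplying by $n$, exactly $\sum_{i=1}^n \loss(y_i\bx_i^\top\hat\bw) < \loss(0)$. Since every summand is positive, this forces each summand to be strictly smaller than $\loss(0)$ individually --- if some $j$ had $\loss(y_j\bx_j^\top\hat\bw)\ge \loss(0)$, then keeping only that term (the others being positive) would already make the sum at least $\loss(0)$, a contradiction. Then I would invoke strict monotonicity: from $\loss(y_i\bx_i^\top\hat\bw) < \loss(0)$ and $\loss$ strictly decreasing we get $y_i\bx_i^\top\hat\bw > 0$. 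As this holds for every $i\in[n]$, we conclude $\min_{i\in[n]} y_i\bx_i^\top\hat\bw > 0$, which is the claim.

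The only things that need checking are the two structural properties themselves, and these are routine: $\expLoss'(z) = -\exp(-z) < 0$ and $\logisticLoss'(z) = -\exp(-z)/(1+\exp(-z)) < 0$ for all $z$, so both losses are strictly decreasing; and $\exp(-z) > 0$ and $\ln(1+\exp(-z)) > 0$ for all real $z$, giving positivity of every summand. I do not anticipate any real obstacle here --- the statement is a one-line consequence of these properties, recorded because it is the bridge that turns a small-risk guarantee such as \Cref{thm.LR.maintext} into the assertion that GD has actually produced a linear separator, which is what the minimax lower bounds of \Cref{sec.lower.burn.in} are phrased against.
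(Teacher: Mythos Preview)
Your argument is correct and is exactly the natural one-line proof; the paper itself states this as a \emph{Fact} without supplying a proof, so there is nothing to compare against. Your reasoning (positivity of each summand forces every term below $\loss(0)$, then strict monotonicity of $\loss$ gives $y_i\bx_i^\top\hat\bw>0$) is complete as written.
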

In this section, we establish minimax lower bounds on the step complexity needed by any first-order methods to solve this task. We then compare the performance for GD with variants designs in this task, as summarized in \Cref{tab.comparison.separator} and will be detailed later in this section.
We start with first-order batch methods and then discuss first-order online methods.

\renewcommand{\arraystretch}{1.5}
\begin{table}
    \centering
    \caption{Step complexities for first-order methods to find a linear separator. 
    } 
    \label{tab.comparison.separator}
    \begin{tabular}{|>{\centering\arraybackslash}m{10cm}|
                >{\centering\arraybackslash}m{1.2cm}|
                >{\centering\arraybackslash}m{3.8cm}|}
    \hline
    stepsize/momentum/loss design & type & step complexity \\
    \hline
    constant \citep[Theorem 3.1]{ji2018risk} & batch & $\widetilde{\bigO}\l(n/\gamma^2\r)$ \\
    \hline
    small, adaptive \citep[Theorem 2.2]{ji2021characterizing} & batch & {$\bigO\l(\ln(n)/\gamma^2\r)$} \\
    \hline
    small, adaptive, and momentum \citep[Theorem 3.1]{ji2021fast} & batch & {$\bigO\big(\sqrt{\ln(n) \ln \ln(n)/\gamma^2}\big)$} \\
    \hline
    normalized batch Perceptron \citep[Theorem 5.3]{tyurin2024logistic} & batch & {$\leq 1/\gamma^2$} \\
    \hline
     \rowcolor{red!10} \textbf{large and adaptive (\Cref{thm.LR.maintext})} & batch & {$\le 1/\gamma^2$} \\
    \hline
    \rowcolor{blue!10} \textbf{minimax lower bound (\Cref{thm.lower.burn.in.maintext})} & batch & {$\Omega(\min\{1/\gamma^2,\, \ln (n)\}) $} \\
    \hline
    constant \citep[Theorem 4]{wu2024large} & online &  {${\bigO}\l(1/\gamma^2\r)$} \\
    \hline
    Perceptron \citep{novikoff1962convergence} & online & {$\le 1/\gamma^2$} \\
    \hline
    \rowcolor{blue!10} \textbf{minimax lower bound (\Cref{thm.lower.bound.one.pass.maintext})} & online & {$\Omega\big(\min\{1/\gamma^2,\, n\}\big)$} \\
    \hline
    \end{tabular}
\end{table}
\renewcommand{\arraystretch}{1.0}

\subsection{A lower bound for first-order batch methods}
We formally define first-order batch methods for our problem as follows. 

\begin{definition}[First-order batch methods]\label{def.first-order-batch}
Let $\ell(\cdot)$ be a locally Lipschitz function.
For each $z$, let $\ell'(z)$ be a unique element from the Clarke subdifferential of $\ell(\cdot)$ at $z$ \citep{clarke1990optimization}.
For a given dataset $(\bx_i, y_i)_{i=1}^n$, define the batch gradient as
\begin{equation*}
    \nabla \Loss(\bw) := \frac{1}{n} \sum_{i=1}^n \lossPrime\big(y_i \bx_i^\top \bw\big) y_i \bx_i,
\end{equation*}
We say $\bw_t$ is the output of a first-order batch method in $t$ steps with initialization $\bw_0$ on dataset $(\bx_i, y_i)_{i=1}^n$, if it can be generated by
\begin{equation*}
    \bw_{k} \in \bw_0 + \mathsf{Lin}\l\{\nabla \Loss(\bw_0),  \dots, \nabla \Loss(\bw_{k-1})\r\},\quad k= 1,\dots, t,
\end{equation*}
where ``$\mathsf{Lin}$'' is the linear span of a vector set
and ``$+$'' is the Minkowski addition. 
\end{definition}

\Cref{def.first-order-batch} characterizes a class of first-order methods for finding linear separators. 
Compared to the class of first-order methods for smooth convex optimization \citep[Assumption 2.1.4]{nesterov2018lectures}, our 
\Cref{def.first-order-batch} requires the predictor to be linear---since the goal is to find a linear separator---but does not require the objective function to be smooth or convex.
We establish the following lower bounds.

\begin{theorem}[A lower bound for first-order batch methods]\label{thm.lower.burn.in.maintext}
For every $0 < \gamma < 1/6$, $n>16$, and $\bw_0$,
there exists a dataset $\l(\bx_i,y_i\r)_{i=1}^n$ 
satisfying Assumption~\ref{assumption.data.maintext} 
such that the following holds.
For any $\bw_t$ output 
by a first-order batch method in $t$-steps with initialization $\bw_0$ on this dataset, we have
\begin{equation*}
\min_{i\in[n]} y_i \bx_i^\top  \bw_t> 0
\quad \text{implies that}\quad 
    t \ge \min\bigg\{\frac{\ln n}{8 \ln 2}, \frac{1}{30 \gamma^2}\bigg\}.
\end{equation*}
\end{theorem}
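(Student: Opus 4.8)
The plan is to construct a hard instance so that any first-order batch method, after $t$ steps, produces an iterate whose coordinates beyond a certain index are all zero; then argue that such an iterate cannot separate a dataset whose separating direction requires many active coordinates. First I would work in $\R^{d}$ with $d$ large (say $d = n$ or so), and set up the dataset so that the gradient at any point in a suitable subspace lies in that same subspace — this is the classical "zero-chain" or resisting-oracle idea from Nesterov's lower bounds, adapted to the Clarke-subdifferential gradient of $\Loss$. Concretely, I would use points $\bx_i$ supported on consecutive coordinates in a staircase pattern, together with the WLOG reduction $\bw_0 = \mathbf{0}$ (translate the dataset; or handle general $\bw_0$ by shifting). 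With such a construction, $\nabla\Loss(\mathbf{0})$ is supported on the first coordinate, and inductively $\bw_k \in \mathsf{Lin}\{\be_1,\dots,\be_k\}$, so after $t$ steps $\bw_t$ has zero entries on coordinates $>t$.

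Next I would design the labels/points so that the only linear separators are those with a strictly positive component on \emph{every} coordinate up to roughly $\log n$ — for instance by a recursive/binary construction where the $i$-th group of points forces the sign of the $i$-th coordinate, and there are $\Theta(\log n)$ such groups because each group can be made to use exponentially many sample points (hence $n > 16$ suffices to fit $\lfloor \log_2 n\rfloor$ groups, giving the $\ln n / (8\ln 2)$ term). Simultaneously I need the margin of this construction to be exactly $\gamma$: the staircase with $k$ active coordinates of comparable size has margin $\Theta(1/\sqrt{k})$, so to get margin $\gamma$ I can afford about $k = \Theta(1/\gamma^2)$ coordinates, which produces the $1/(30\gamma^2)$ term. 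Taking the minimum of the two constraints (number of groups that fit in $n$ samples, versus number of coordinates allowed by the margin budget) yields $t \ge \min\{\ln n/(8\ln 2),\, 1/(30\gamma^2)\}$.

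Then I would verify the three things that make the argument go through: (i) the dataset satisfies \Cref{assumption.data.maintext} — each $\|\bx_i\|\le 1$ and the explicit staircase $\wStar$ witnesses margin $\ge\gamma$ after normalization; (ii) the span-closure property — one shows $\lossPrime(y_i\bx_i^\top\bw)\,y_i\bx_i \in \mathsf{Lin}\{\be_1,\dots,\be_{j}\}$ whenever $\bw\in\mathsf{Lin}\{\be_1,\dots,\be_{j}\}$ and the coordinate-structure is respected, so by induction $\bw_t$ lives in the first $t$ coordinates regardless of the (possibly nonconvex, nonsmooth) choice of $\loss'$ from the Clarke subdifferential; and (iii) the "no separator" fact — if $\bw_t$ has a zero entry on some coordinate $j\le \min\{\dots\}$, then the two points in group $j$ that differ only in their $j$-th entry (with opposite needs) cannot both be classified correctly, so $\min_i y_i\bx_i^\top\bw_t \le 0$. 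Combining (ii) and (iii) gives the contrapositive: separation implies $t$ exceeds the stated bound.

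The main obstacle I anticipate is engineering a single dataset that simultaneously (a) has a clean coordinate-wise zero-chain structure robust to \emph{arbitrary} Clarke-subgradient selections and \emph{arbitrary} linear combinations (not just convex combinations with small stepsizes), (b) attains margin exactly $\gamma$ with the right number of "forced" coordinates, and (c) uses only $n$ points while still forcing $\Theta(\log n)$ coordinates — the tension between the margin budget ($\sim 1/\gamma^2$ coordinates) and the sample budget ($\sim\log n$ coordinates) is exactly what produces the minimum in the bound, so the construction must be tuned so that \emph{both} obstructions are present. Getting the constants $8\ln 2$ and $30$ right will require a careful but routine accounting of the staircase norms and margin; the conceptual content is the zero-chain plus the forced-coordinate gadget.
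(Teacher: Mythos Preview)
Your high-level strategy---a Nesterov-style zero-chain dimension argument with a staircase dataset---is exactly the paper's approach, and your identification of the two constraints ($\Theta(1/\gamma^2)$ coordinates from the margin budget, $\Theta(\log n)$ groups from the sample budget) is correct. However, your items (ii) and (iii) contain a real gap that would derail the construction. In (ii) you assert that each individual summand $\lossPrime(y_i\bx_i^\top\bw)\,y_i\bx_i$ stays in $\mathsf{Lin}\{\be_1,\dots,\be_j\}$; this is false for any staircase point $\bx_i$ supported on coordinates beyond $j$, since $\lossPrime(0)\neq 0$ for the losses in question. The zero-chain property holds only for the \emph{full batch gradient}, and only because of a cancellation among summands: with points of the form $\frac{2}{\sqrt{5}}\be_{j+1}-\frac{1}{\sqrt{5}}\be_{j+2}$ taken with multiplicities $2^{k-j}$, the $m$-th coordinate of $\nabla\Loss(\bw)$ is proportional to a \emph{difference} $\lossPrime(\cdot)-\lossPrime(\cdot)$ that vanishes when the neighboring coordinates of $\bw$ are all zero. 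The geometric multiplicities are forced by this cancellation---that is the actual reason only $\Theta(\log n)$ groups fit, not merely that you want to ``use up'' $n$ samples. Without this mechanism, $\nabla\Loss(\mathbf{0})\propto\sum_i y_i\bx_i$ is generically supported on \emph{all} active coordinates and the chain dies at step one.

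Your gadget in (iii) compounds the problem: two points with opposite labels differing only in coordinate $j$ contribute $\lossPrime(0)(y_+\bx_+ + y_-\bx_-)\propto \be_j$ to $\nabla\Loss(\mathbf{0})$, which again lights up every coordinate immediately. The paper's construction avoids a separate gadget entirely: the staircase points themselves witness non-separation, since the point supported on coordinates $\{t_0+1,t_0+2\}$ has inner product exactly zero with any $\bw_t$ whose middle coordinates are still zero. Two smaller corrections: the chain in the paper grows from \emph{both ends} (the capping point at $\be_{k+2}$ opens a second front), so one needs $t\gtrsim k/2$ rather than $t\gtrsim k$; and the reduction to a canonical $\bw_0$ is by \emph{rotational} invariance (put $\bw_0\in\mathsf{Lin}\{\be_1\}$ and make the data orthogonal to $\be_1$), not translation, which would violate the norm bound in \Cref{assumption.data.maintext}.
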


The proof of \Cref{thm.lower.burn.in.maintext} is deferred to \Cref{appendix.proof.lower.bound.batch}.
The proof is based on a dimension argument, motivated by the classical lower bounds for first-order methods in smooth convex optimization \citep[Theorem 2.1.7]{nesterov2018lectures}.

\paragraph{Minimax optimality.}
Due to \Cref{lem.risk.separator} and \Cref{thm.LR.maintext}, GD with large and adaptive stepsizes takes $1/\gamma^2$ steps to find a linear seprator. 
Our lower bound in \Cref{thm.lower.burn.in.maintext} suggests that this is minimax optimal ignoring constant factors, in the sense that there is no restriction on the sample size $n$ (so $n$ is allowed to be exponential in $1/\gamma$ in the worst case).

\paragraph{Prior results.}
Applying \Cref{lem.risk.separator} to the prior convergence results summarized in \Cref{tab.comparison}, we can conclude the step complexities of the variants of GD considered in previous works \citep{ji2018risk,ji2021characterizing,ji2021fast,wu2024large} for finding a linear separator. 
Additionally, the work by \citet{tyurin2024logistic} provided a first-order batch method called normalized batch
Perceptron, which achieves $1/\gamma^2$ step complexity for finding the linear separator.
These results are summarized in \Cref{tab.comparison.separator}.

We make three remarks.
First, since the acceleration effect of a large constant stepsize only appears for $\eps < \Theta(1/n)$ (see \Cref{tab.comparison}), a large constant stepsize considered by \citet{wu2024large} does no help GD to find a linear separator (but also doest not hurt) compared to the results in \citep{ji2018risk}. 
Second, when $\gamma$ is fixed and the sample size $n$ is allowed to be arbitrary, the methods considered in \citep{ji2018risk,ji2021characterizing,ji2021fast} are all suboptimal in the worst case, while GD with large and adaptive stepsizes and the  normalized batch
Perceptron by \citet{tyurin2024logistic} are minimax optimal. 
Finally, note that \citet{ji2021fast} obtained an $\bigO(\sqrt{\ln(n) \ln \ln(n)/\gamma^2})$ step complexity by using momentum techniques (note that this can be improved to $\bigO(\sqrt{\ln(n)/\gamma^2})$ by \citet{wang2022accelerated} as discussed after \Cref{prop.momentum.GD.maintext}).
Their results do not violate the lower bound in \Cref{thm.lower.burn.in.maintext}, 
but suggest that the $1/\gamma^2$ term in our lower bound might be improvable in the regime where $n=\mathrm{poly}(1/\gamma)$. 

It turns out that identifying the correct trade-off between $n$ and $\gamma$ is challenging. 
Besides \Cref{thm.general.loss.maintext}, we give an alternative dataset constrcution that leads to a lower bound of $\Omega(\min\{\gamma^{-2/3}, n\})$ (see~\Cref{thm.lower.burn.in.alternate} in~\Cref{appendix.alternate.lower.bound}). 
We leave it as future work to prove the first-order minimax step complexity that is tight for all choices of $\gamma$ and $n$.

\subsection{A lower bound for first-order online methods}

As a side product, we also establish a step complexity lower bound for 
first-order online method for finding a linear separator.
We formally define a first-order online method as follows.

\begin{definition}[First-order online methods]\label{def-online-first-order}
Let $\ell(\cdot)$ be a locally Lipschitz function.
For each $z$, let $\ell'(z)$ be a unique element from the Clarke subdifferential of $\ell(\cdot)$ at $z$ \citep{clarke1990optimization}.
We say a sequence $(\bw_k)_{k=0}^t$ is generated by a first-order online method with initialization $\bw_0$ on dataset $(\bx_i, y_i)_{i=1}^t$, if it satisfies
\begin{equation*}
    \bw_{k} \in \bw_0 + \mathsf{Lin}\l\{ \lossPrime\big(y_{i} \bx_{i}^\top \bw_{i-1}\big) y_{i} \bx_{i},\  i=1,\dots, k \r\},\quad k= 1,\dots, t,
\end{equation*}
where ``$\mathsf{Lin}$'' is the linear span of a vector set
and ``$+$'' is the Minkowski addition. 
\end{definition}

The following theorem presents our lower bound for the first-order online method. A version of this theorem (that covers more algorithms) also appears in \citep[Theorem 4.1]{kornowski2024oracle}, both of which can be viewed as solutions of \citep[Section 9.6, Exercise 3]{shalev2014understanding}. Our statement of the lower bound is easier to use in our context.

\begin{theorem}[Lower bounds for online first-order methods]\label{thm.lower.bound.one.pass.maintext}
For every $0 < \gamma < 1/2, n \geq 2,$ and $\bw_0$, 
there exists a dataset $\l(\bx_i,y_i\r)_{i=1}^n$ 
satisfying Assumption~\ref{assumption.data.maintext} such that the following holds.
\begin{itemize}[leftmargin=*]
    \item 
For any sqeuence $(\bw_k)_{k= 0}^t$ generated by a first-order online method with initialization $\bw_0$ and dataset $\big( \bx_{\pi(k)}, y_{\pi(k)} \big)_{k=1}^t$, where $\pi(k)\in[n]$ but is otherwise arbitrary, 
we have
\begin{equation*}
\min_{i\in[n]} y_i \bx_i^\top  \bw_t> 0
\quad \text{implies that}\quad 
    t \ge \min \bigg\{\frac{1}{2\gamma^2},\, n\bigg\}.
\end{equation*}
\item 
For any sqeuence $(\bw_k)_{k=0}^n$ generated by a first-order online method with initialization $\bw_0$ and dataset $\l(\bx_i,y_i\r)_{i=1}^n$, we have
\begin{equation*}
\sum_{k=1}^t \mathbbm{1}\{y_{k} \bx_{k}^\top  \bw_{k-1}\le 0\} \ge \min \bigg\{\frac{1}{2\gamma^2},\, t\bigg\}\quad \text{for every}\ \ 1\le t \le n.
\end{equation*}
\end{itemize}
\end{theorem}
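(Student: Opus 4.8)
The plan is to use the classical orthogonal hard instance together with the observation that a first‑order online method can only move within the linear span of the examples it has already processed, so that an example living along a ``private'' direction that has not yet been visited must be misclassified.

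\emph{Construction.} Let $m := \min\{\lceil 1/(2\gamma^2)\rceil,\, n\}$. In a space of sufficiently high dimension, pick orthonormal vectors $\be_1,\dots,\be_m$ that are orthogonal to $\bw_0$ (so, restricted to their span, $\bw_0$ acts as $\mathbf{0}$; I will argue as if $\bw_0=\mathbf{0}$). Put $y_i := 1$ for all $i$, $\bx_i := \be_i$ for $1\le i\le m$, and $\bx_i := \be_1$ for $m<i\le n$; write $g(i):=i$ for $i\le m$ and $g(i):=1$ for $i>m$, so that $y_i\bx_i=\be_{g(i)}$. Then $\|\bx_i\|=1$, and with $\wStar:=\tfrac{1}{\sqrt m}\sum_{j=1}^m\be_j$ we have $\|\wStar\|=1$ and $y_i\bx_i^\top\wStar = 1/\sqrt m\ge\gamma$, the inequality using $m\le\lceil 1/(2\gamma^2)\rceil\le 1/\gamma^2$, which holds since $\gamma<1/2$. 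Hence Assumption~\ref{assumption.data.maintext} is satisfied, and moreover $m\ge\min\{1/(2\gamma^2),n\}$.

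\emph{First claim.} By Definition~\ref{def-online-first-order}, each update direction $\ell'(\cdot)\,y_{\pi(k)}\bx_{\pi(k)}$ is a scalar multiple of $\be_{g(\pi(k))}$, so any $\bw_t$ produced on the stream $(\bx_{\pi(k)},y_{\pi(k)})_{k=1}^t$ lies in $\mathsf{Lin}\{\be_j: j\in S_t\}$, where $S_t:=\{g(\pi(k)):1\le k\le t\}\subseteq[m]$ has $|S_t|\le t$. If $t<m$, pick $j_0\in[m]\setminus S_t$; since $\be_{j_0}\perp\mathsf{Lin}\{\be_j:j\in S_t\}$, we get $y_{j_0}\bx_{j_0}^\top\bw_t=\be_{j_0}^\top\bw_t=0$, so $\bw_t$ is not a strict separator. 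Contrapositively, $\min_i y_i\bx_i^\top\bw_t>0$ forces $t\ge m\ge\min\{1/(2\gamma^2),n\}$. For the second claim, present the examples in order $1,\dots,n$ and fix $1\le t\le n$. For each $1\le k\le\min(t,m)$, Definition~\ref{def-online-first-order} gives $\bw_{k-1}\in\mathsf{Lin}\{\be_{g(1)},\dots,\be_{g(k-1)}\}=\mathsf{Lin}\{\be_1,\dots,\be_{k-1}\}$ (as $j\le k-1<m$ implies $g(j)=j$), which is orthogonal to $\bx_k=\be_k$; hence $y_k\bx_k^\top\bw_{k-1}=0\le0$ is a mistake. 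Thus $\sum_{k=1}^t\mathbbm{1}\{y_k\bx_k^\top\bw_{k-1}\le0\}\ge\min(t,m)$, and $\min(t,m)\ge\min\{1/(2\gamma^2),t\}$ by a short case check (when $t>m$ one necessarily has $m=\lceil 1/(2\gamma^2)\rceil\ge 1/(2\gamma^2)$, and when $t\le m$ it is immediate).

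\emph{Main obstacle.} The construction and the span containment are routine; the real care is in (i) choosing $m$ so that \emph{both} $m\le 1/\gamma^2$ (to keep the margin at least $\gamma$) and $m\ge\min\{1/(2\gamma^2),n\}$ (to match the claimed bound), which is exactly the place where the hypothesis $\gamma<1/2$ is used, and (ii) the reduction to $\bw_0=\mathbf{0}$, handled by taking the $\be_j$ orthogonal to the prescribed $\bw_0$ in high enough dimension so that $\bw_0$ contributes nothing to the coordinates $\be_{j_0}$ (resp.\ $\be_k$) used above. Nothing about the particular online method or loss enters beyond the fact that updates are linear combinations of the presented $y_i\bx_i$, so the bound holds uniformly over all first‑order online methods and all locally Lipschitz losses.
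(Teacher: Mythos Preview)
Your proof is correct and follows essentially the same approach as the paper: build the hard dataset from orthonormal directions so that a first-order online method after $t$ steps is confined to the span of at most $t$ of those directions (plus the direction of $\bw_0$), hence cannot strictly separate an example living in an unvisited direction. The cosmetic differences are that the paper uses rotational invariance to set $\bw_0\in\mathsf{Lin}\{\be_1\}$ and then takes $\bx_i=\be_{i+1}$ with $d=\lfloor 1/\gamma^2\rfloor$ and $k=\min\{n,d-1\}$, whereas you pick the $\be_j$ orthogonal to $\bw_0$ directly and set $m=\min\{\lceil 1/(2\gamma^2)\rceil,n\}$; both lead to the same span-counting conclusion and the same numerical bound.
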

We include the proof of \Cref{thm.lower.bound.one.pass.maintext} in~\Cref{appendix.lower.bound.one.pass} for completeness. 
We remark that the dataset construction in \Cref{thm.lower.bound.one.pass.maintext} is different from the one in 
\Cref{thm.lower.burn.in.maintext}.
From an offline perspective, \Cref{thm.lower.bound.one.pass.maintext} suggests that any first-order online methods need to make $\Omega(1/\gamma^2)$ updates to find a linear separator of a given dataset (assuming that $n$ is large). 
From an online perspective, it makes $\Omega(1/\gamma^2)$ mistake steps (or regret in zero-one loss) for a large $t$ in the worst case.

\paragraph{Perceptron.}
Perceptron is a classical method for finding a linear separator of a linearly separable dataset.
It takes the following updates:
\begin{equation*}
\bw_0=\mathbf{0},\quad 
    \bw_{t} := \bw_{t-1} + \mathbbm{1}\{y_{t} \bx_{t}^\top \bw_{t-1} \le 0 \} y_{t} \bx_{t}, \quad
    t \ge 1.
\end{equation*}
Under \Cref{assumption.data.maintext}, a seminal analysis by \citet{novikoff1962convergence} suggests that Perceptron makes at most $1/\gamma^2$ mistake steps when run over a dataset of arbitrary size (that is, the total regret in zero-one loss is at most $1/\gamma^2$ for any $t$).

Note that Perceptron can be viewed as online stochastic gradient descent under the hinge loss 
$\ell_{\mathrm{hinge}}(z):= \max\{0, -z\}$ with stepsize $1$ and subderivative choice $\ell_{\mathrm{hinge}}'(0):=1$.
Thus, \Cref{thm.lower.bound.one.pass.maintext} applies to Perceptron, suggesting that Perceptron is minimax optimal. This is also known from \citep[Theorem 4.1]{kornowski2024oracle} and \citep[Section 9.6, Exercise 3]{shalev2014understanding}.

We also point out that the hinge loss is non-smooth. So, the stepsize $1$ used in Perceptron also violates the descent lemma (although Perceptron is online), similar to our large stepsize GD. We conjecture that violating the descent lemma might be a fundamental property for attaining first-order optimality in this task.

\paragraph{SGD for logistic regression.}
Similarly to Perceptron, 
online stochastic gradient descent with a (large) constant stepsize for logistic regression also makes $\bigO(1/\gamma^2)$ mistake steps \citep[see the proof of Theorem 4 in][for example]{wu2024large}.
Since the logistic loss is close to the hinge loss when zooming away from zero, we should expect SGD with large stepsizes for logistic regression to behave like Perceptron.

\section{Extensions}
This section extends results in \Cref{sec.logistic.regression} to a two-layer network and a generic class of loss functions.

\subsection{Two-Layer Networks}\label{sec.NN}
We consider a two-layer network defined as \citep{brutzkus2018sgd}
\begin{equation}\label{eqn.def.NN.maintext}
    f(\bw;\bx) := \frac{1}{m} \sum_{j=1}^m a_j \sigma \big(\bx^\top \bw^{(j)} \big), \quad \bw := \big( \bw^{(1)}, \bw^{(2)}, \dots, \bw^{(m)} \big),
\end{equation}
where $m$ is the number of neurons, 
$a_j \in \{\pm 1\}$ for $j=1,\dots,m$ are fixed parameters, $\bw$ are the trainable parameters, and $\sigma(z):= \max\{z, \alpha z\}$ for $0<\alpha<1$ is the leaky ReLU activation.
We fix a choice of subderivative $\sigma^{\prime}(0) := 1$ for clarity. But our results extend to any choice of subderivate $\sigma^{\prime}(0) \in[\alpha, 1]$.
The objective is then given by 
\begin{equation}\label{eqn.loss.NN.maintext}
    \Loss(\bw) := \frac{1}{n}\sum_{i=1}^n \loss\l(y_i f\l(\bw;\bx_i\r)\r),
\end{equation}
where $\loss(\cdot)$ is exponential loss or logistic loss and the dataset satisfies \Cref{assumption.data.maintext}. 
Similarly, we consider GD with adaptive stepsizes \eqref{eqn.lr.scheduler.maintext}. 
The next theorem provides an improved convergence analysis for GD with large and adaptive stepsizes. 

\begin{theorem}[A two-layer network]\label{thm.NN.maintext}
Consider \eqref{eqn.GD.logistic.maintext} with adaptive stepsizes \eqref{eqn.lr.scheduler.maintext} for minimizing objective \eqref{eqn.loss.NN.maintext} under~\Cref{assumption.data.maintext}.
Assume without loss of generality that $\bw_0 = \mathbf{0}.$
Then for every $t \geq 1$ and $\lr > 0$, we have
\begin{equation*}
    \min_{k\leq t} \Loss(\bw_k) \leq 
        \exp\l(-\frac{(\alpha \gamma^2 (t+1))^2 - 1}{4\gamma^2 (t+1)} \lr\r).
\end{equation*}
In particular, after $1/(\alpha\gamma^2 )$ burn-in steps, for every $\lr > 0,$ we have
\begin{equation*}
\min_{k\leq t} \Loss(\bw_k) \leq \exp\l(-\frac{\alpha^2 \gamma^2 \lr}{4} \r) = \exp\l(-\Theta\l(\lr\r)\r),\quad t\ge \frac{1}{\alpha\gamma^2}.
\end{equation*}
\end{theorem}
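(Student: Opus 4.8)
The plan is to follow the proof of \Cref{thm.LR.maintext} as closely as possible, via the reformulation in \eqref{eqn.transformed.loss.maintext}: GD with the adaptive stepsizes \eqref{eqn.lr.scheduler.maintext} applied to \eqref{eqn.loss.NN.maintext} is exactly constant-stepsize GD, $\bw_{t+1}=\bw_t-\lr\nabla\phi(\bw_t)$, on the transformed objective $\phi(\bw):=-\loss^{-1}(\Loss(\bw))$, and the identity $\Loss(\cdot)=\loss(-\phi(\cdot))$ turns any upper bound on $\phi$ into one on $\Loss$. The single place this differs from logistic regression is that $\phi$ is now \emph{nonconvex}: $f(\bw;\bx)$ is a nonconvex (but positively $1$-homogeneous) function of $\bw$, so steps \eqref{eqn.a}--\eqref{eqn.b} and the Jensen step that produced $\barw_t$ both fail. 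I would therefore replace the two uses of convexity of $\phi$ by facts tailored to the leaky-ReLU structure, at the cost of reporting $\min_{k\le t}\Loss(\bw_k)$ instead of $\Loss(\barw_t)$.

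I would first establish three properties of $\phi$, the analogues of the ingredients used in \Cref{sec.proof.sketch.maintext}. (i) \emph{$\phi$ is Lipschitz}: since $\sigma'\in[\alpha,1]$ and $\|\bx\|\le1$, each per-example gradient $\nabla_\bw(y_i f(\bw;\bx_i))$ has norm at most $1/\sqrt{m}$, and $\nabla\phi(\bw)$ is a convex combination of these (as in \Cref{lemma.1.lip}), so $\norm{}{\nabla\phi(\bw)}\le1/\sqrt{m}$. (ii) \emph{An aligned-comparator margin inequality}: for $\bu^*$ whose $j$-th block is $(\bu^*)^{(j)}=a_j\wStar$, a short case analysis over the signs of $a_j$ and $y_i$ --- using $\sigma'\ge\alpha$, the homogeneity identity $\sigma(z)=\sigma'(z)z$, and $y_i\bx_i^\top\wStar\ge\gamma$ --- gives $y_i f(c\,\bu^*;\bx_i)\ge\alpha\gamma c$ for every $c>0$ and every $i$ (the worst case is a neuron whose label sign disagrees with $a_j$, contributing the factor $\alpha$), hence $\phi(c\,\bu^*)\le-\alpha\gamma c$, and likewise $\innerProduct{\nabla\phi(\bw)}{\bu^*}\le-\alpha\gamma$ for all $\bw$. (iii) \emph{An Euler-type inequality replacing convexity}: positive $1$-homogeneity of $\sigma$ gives $\innerProduct{\nabla_\bw f(\bw;\bx)}{\bw}=f(\bw;\bx)$; combined with the log-sum-exp structure of $\phi$ this yields $\innerProduct{\nabla\phi(\bw)}{\bw}\ge\phi(\bw)$ for all $\bw$ --- for the exponential loss, $\phi(\bw)=\ln\Loss(\bw)$ and $\innerProduct{\nabla\phi(\bw)}{\bw}=\phi(\bw)+\ln n-H(p)\ge\phi(\bw)$, where $z_i=y_i f(\bw;\bx_i)$, $p_i\propto e^{-z_i}$, $H$ is the Shannon entropy, and $H(p)\le\ln n$; the logistic loss is handled the same way and is also subsumed by \Cref{thm.general.loss.maintext}.

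With these three facts I would run the split-optimization argument of \citet{wu2024large} essentially verbatim. Take $\bu=\bu_1+\bu_2$ with $\bu_1,\bu_2$ positive multiples of $\bu^*$, and pick $\bu_2$ (as in \eqref{eqn.key.equation.proof.maintext}, using (ii) and the bound in (i)) so that $2\innerProduct{\nabla\phi(\bw_t)}{\bu_2}+\lr\norm{}{\nabla\phi(\bw_t)}^2\le0$; expanding $\norm{}{\bw_{t+1}-\bu}^2$ and dropping that nonpositive term gives $\norm{}{\bw_{t+1}-\bu}^2\le\norm{}{\bw_t-\bu}^2-2\lr\innerProduct{\nabla\phi(\bw_t)}{\bw_t-\bu_1}$. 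Now, in place of convexity, bound $\innerProduct{\nabla\phi(\bw_t)}{\bw_t-\bu_1}=\innerProduct{\nabla\phi(\bw_t)}{\bw_t}-\innerProduct{\nabla\phi(\bw_t)}{\bu_1}\ge\phi(\bw_t)+\alpha\gamma\norm{}{\bu_1}$ by (iii) and (ii). Telescoping over $k=0,\dots,t$ with $\bw_0=\mathbf{0}$ and using $\norm{}{\bw_{t+1}-\bu}^2\ge0$ gives $\sum_{k=0}^t\phi(\bw_k)\le\frac{\norm{}{\bu}^2}{2\lr}-(t+1)\,\alpha\gamma\norm{}{\bu_1}$; taking $\norm{}{\bu_1}$ of order $\alpha\gamma\lr(t+1)$ and optimizing the split as in \eqref{eqn.split.optimiztio.bound.maintext.2} (now with effective margin $\alpha\gamma$ in place of $\gamma$) gives $\frac{1}{t+1}\sum_{k=0}^t\phi(\bw_k)\le-\frac{(\alpha\gamma^2(t+1))^2-1}{4\gamma^2(t+1)}\lr$. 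Finally $\min_{k\le t}\phi(\bw_k)\le\frac{1}{t+1}\sum_{k=0}^t\phi(\bw_k)$, and $\Loss=\loss(-\phi)$ with monotonicity of $\loss$ gives the displayed bound on $\min_{k\le t}\Loss(\bw_k)$; the ``in particular'' clause then follows since $\alpha\gamma^2 t(t+1)\ge1$ whenever $t\ge1/(\alpha\gamma^2)$, which forces the exponent to be at least $\alpha^2\gamma^2\lr/4$.

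The step I expect to be the main obstacle is property (iii): finding a usable surrogate for convexity. It is precisely the positive $1$-homogeneity of leaky ReLU that makes $\innerProduct{\nabla_\bw f(\bw;\bx)}{\bw}=f(\bw;\bx)$ hold, and this is what lets $\innerProduct{\nabla\phi(\bw_t)}{\bw_t}$ be controlled \emph{from below} by $\phi(\bw_t)$ rather than by an uncontrolled quantity; carrying the loss-dependent constants through the chain $f\mapsto\Loss\mapsto\phi$ for both the exponential and the logistic losses --- so that the leading term of the final exponent comes out with $\gamma$ replaced by the effective margin $\alpha\gamma$ --- is the calculation requiring the most care. Everything else is a direct transcription of \Cref{sec.proof.sketch.maintext} with $\wStar$ replaced by $\bu^*$ and $\gamma$ by $\alpha\gamma$.
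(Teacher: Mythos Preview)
Your overall strategy coincides with the paper's: use leaky-ReLU homogeneity to trade convexity of $\phi$ in $\bw$-space for convexity of $\psi(\bz):=-\loss^{-1}\big(\frac{1}{n}\sum_i\loss(z_i)\big)$ in margin-space, then run the split-optimization telescoping. Your property (iii) is correct for both losses and is precisely the paper's mechanism --- indeed, homogeneity gives $\innerProduct{\nabla\phi(\bw)}{\bw}=\innerProduct{\nabla\psi(\bz)}{\bz}$, and your entropy inequality is just convexity of $\psi$ compared at $\bz=0$.

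The gap is in (ii). The claim $\innerProduct{\nabla\phi(\bw)}{\bu^*}\le-\alpha\gamma$ holds for the exponential loss, where the normalizer $Q:=(-\loss^{-1})'(\Loss(\bw))\cdot\frac{1}{n}\sum_i|\lossPrime(z_i)|$ equals $1$ identically; for the logistic loss one only has $Q\le1$ (this is exactly the Lipschitz bound), so you get $\innerProduct{\nabla\phi(\bw)}{\bu^*}\le-\alpha\gamma\,Q$ with a $\bw$-dependent $Q\in(0,1]$. After splitting $\innerProduct{\nabla\phi(\bw_t)}{\bw_t-\bu_1}$ into (iii)$+$(ii) and telescoping, the $\|\bu_1\|$-term carries an uncontrolled $\sum_k Q_k$ and the argument stalls. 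The paper sidesteps this by \emph{not} splitting: using homogeneity it writes $\innerProduct{\nabla\phi(\bw)}{\bu_1-\bw}=\innerProduct{\nabla\psi(\bz)}{J-\bz}$ with each $J_i\ge\alpha\gamma c$, replaces $J$ by $\alpha\gamma c\cdot\bones$ via the sign of $\lossPrime$, and applies convexity of $\psi$ once at the shifted point $\alpha\gamma c\cdot\bones$ (their \Cref{lem.NN.2.maintext}). In your language: apply your step (iii) at the comparison point $\alpha\gamma c\cdot\bones$ rather than at $0$, and (ii) is no longer needed for the $\bu_1$-part. Two smaller bookkeeping points: your ``$\alpha\gamma\norm{}{\bu_1}$'' should be $\alpha\gamma\,\norm{}{\bu_1}/\sqrt{m}$ (equivalently $\alpha\gamma c$), and for the $\bu_2$-cancellation the paper pairs the gradient norm and the $\bu_2$-inner product so that the common $\sigma'$-weighted factor cancels, yielding $\norm{}{\bu_2^{(j)}}=\lr/(2\gamma)$ rather than $\lr/(2\alpha\gamma)$ --- this is what produces the ``$-1$'' instead of a worse constant in the exponent.
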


The proof of \Cref{thm.NN.maintext} is deferred to \Cref{appendix.proof.NN}, which combines techniques from the proof of \Cref{thm.LR.maintext} and the analysis of GD with a large constant stepsize for two-layer networks by \citet{cai2024large}. 
\Cref{thm.NN.maintext} suggests GD with large and adaptive stepsizes attains an atbirarly small risk right after $1/(\alpha \gamma^2)$ burn-in steps. 
In comparison, the work by \citet{cai2024large} only obtained an $\widetilde{\mathcal{O}}(1/t^2)$ rate. 
Other discussions for \Cref{thm.LR.maintext} also apply here in a similar manner. 

We remark that the leaky ReLU activation in \Cref{thm.NN.maintext} can be extended to other near-homogeneous activation functions, such as leaky GeLU, leaky Softplus, and leaky SiLU \citep[see][]{cai2024large}. 
This is done in \Cref{appendix.proof.NN}.

\subsection{General loss functions}\label{sec.general.loss}

In this part, we extend our results in \Cref{sec.logistic.regression} from logistic and exponential losses to a generic class of loss functions.

\begin{assumption}[Loss function  conditions]\label{assumptions.general.loss.maintext}
    Let $\loss: \R \to (0,\infty)$ be a positive and continuously differentiable function. Assume that
    \begin{assumpenum}
        \item\label{asp.item.A} The loss function $\ell$ is positive, strictly decreasing, and convex. 
        \item\label{asp.item.B} For $z > 0$, the inverse $\loss^{-1}(z)$ exists, and is differentiable and decreasing.
        \item\label{asp.item.C} The transformed objective $\phi(\cdot)$ in \eqref{eqn.transformed.loss.maintext} is convex and $C_\loss$-Lipschitz for a constant $C_\loss \geq 1$. 
    \end{assumpenum}
\end{assumption}

Our next theorem shows that \Cref{thm.LR.maintext} applies to loss functions beyond the logistic and exponential losses, as long as they satisfy \Cref{assumptions.general.loss.maintext}.

\begin{theorem}[General loss functions]\label{thm.general.loss.maintext}
    Consider \eqref{eqn.GD.logistic.maintext} with adaptive stepsizes \eqref{eqn.lr.scheduler.maintext} for objective function $\Loss(\bw) := (1/n)\sum_{i=1}^n \loss\l(y_i \bx_i^\top \bw\r)$ under~\Cref{assumption.data.maintext,assumptions.general.loss.maintext}.
    Assume without loss of generality that $\bw_0 = \mathbf{0}.$
    Then for every $t \geq 1$ and $\lr > 0$, we have
    \begin{equation*}
        \Loss\l(\barw_t\r)  \leq 
        \loss\bigg(-\frac{\l(\gamma^2 (t+1)\r)^2 - C_\loss}{4 \gamma^2 (t+1)} \lr \bigg), \quad \text{where} \ \ \barw_t := \frac{1}{t+1} \sum_{k=0}^{t} \bw_k.
    \end{equation*}
    In particular, after $C_\loss / \gamma^2$ burn-in steps, for every $\lr > 0,$ we have
    \begin{equation*}
        \Loss(\barw_t) 
        \leq \loss \l( - \frac{\gamma^2 \lr}{4} \r), \quad 
        t \geq \frac{C_\loss}{\gamma^2}.
    \end{equation*}
\end{theorem}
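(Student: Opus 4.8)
The plan is to follow the template of the proof of \Cref{thm.LR.maintext} essentially verbatim, since the argument there only relied on three facts about the logistic/exponential setting that \Cref{assumptions.general.loss.maintext} abstracts: (i) GD with adaptive stepsizes \eqref{eqn.lr.scheduler.maintext} equals constant-stepsize GD on the transformed objective $\phi(\bw) = -\loss^{-1}(\Loss(\bw))$, which is exactly the content of \eqref{eqn.transformed.loss.maintext} and holds under \Cref{assumptions.general.loss.maintext}\ref{asp.item.B}; (ii) $\phi$ is convex and Lipschitz, now with constant $C_\loss\ge 1$ instead of $1$, by \Cref{assumptions.general.loss.maintext}\ref{asp.item.C}; and (iii) the margin lower bound $\phi(\bu_1)\le -\gamma\|\bu_1\|$ for $\bu_1\propto\wStar$, which follows from \Cref{assumption.data.maintext} together with monotonicity/convexity of $\loss$ exactly as in the logistic case.

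First I would recall the split-optimization identity: for a comparator $\bu = \bu_1 + \bu_2$,
\begin{equation*}
\norm{}{\bw_{t+1}-\bu}^2 = \norm{}{\bw_t - \bu}^2 + 2\lr\innerProduct{\nabla\phi(\bw_t)}{\bu_1 - \bw_t} + \lr\bigl[2\innerProduct{\nabla\phi(\bw_t)}{\bu_2} + \lr\norm{}{\nabla\phi(\bw_t)}^2\bigr].
\end{equation*}
The key step is to choose $\bu_2 := \tfrac{\lr C_\loss}{2\gamma}\wStar$ so that the bracketed term is nonpositive: since $\phi$ is $C_\loss$-Lipschitz we have $\norm{}{\nabla\phi(\bw_t)}\le C_\loss$, and since $\phi$ encodes a smoothed unnormalized margin one shows $\innerProduct{\nabla\phi(\bw_t)}{\wStar}\le -\gamma$ (this is the analogue of \Cref{lem.key.maintext}; under \Cref{assumptions.general.loss.maintext} it follows because $-\nabla\phi(\bw) = \sum_i p_i y_i\bx_i$ is a convex combination weighted by the loss derivatives, so its inner product with $\wStar$ is at most $-\gamma$). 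Then $2\innerProduct{\nabla\phi(\bw_t)}{\bu_2} + \lr\norm{}{\nabla\phi(\bw_t)}^2 \le -\lr C_\loss^2 + \lr C_\loss^2 = 0$. Using convexity of $\phi$ to replace $\innerProduct{\nabla\phi(\bw_t)}{\bu_1 - \bw_t}$ by $\phi(\bu_1) - \phi(\bw_t)$, rearranging, and telescoping over $k=0,\dots,t$ gives
\begin{equation*}
\frac{1}{t+1}\sum_{k=0}^t \phi(\bw_k) \le \phi(\bu_1) + \frac{\norm{}{\bu_1 + \bu_2}^2}{2\lr(t+1)}.
\end{equation*}

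Next I would optimize over $\bu_1 \propto \wStar$. Using $\phi(\bu_1)\le -\gamma\norm{}{\bu_1}$ and setting $\norm{}{\bu_1} = \gamma\lr(t+1)/2$, the right-hand side becomes $-\gamma^2\lr(t+1)/4 + \norm{}{\bu_1 + \bu_2}^2/(2\lr(t+1))$; since $\bu_1$ and $\bu_2$ are both positive multiples of $\wStar$, $\norm{}{\bu_1+\bu_2} = \gamma\lr(t+1)/2 + C_\loss\lr/(2\gamma)$, and a short computation yields the bound $-\bigl((\gamma^2(t+1))^2 - C_\loss\bigr)\lr / (4\gamma^2(t+1))$ — here the $C_\loss$ (rather than $1$) appears precisely because $\|\bu_2\| = C_\loss\lr/(2\gamma)$, and one uses $C_\loss\ge 1$ to control the cross term. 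Finally, applying Jensen/convexity of $\phi$ gives $\phi(\barw_t) \le \tfrac{1}{t+1}\sum_k \phi(\bw_k)$, and since $\Loss(\cdot) = \loss(-\phi(\cdot))$ with $\loss$ decreasing, we conclude $\Loss(\barw_t) \le \loss\bigl(-((\gamma^2(t+1))^2 - C_\loss)\lr/(4\gamma^2(t+1))\bigr)$. Setting $t \ge C_\loss/\gamma^2$ makes $(\gamma^2(t+1))^2 \ge 2C_\loss\gamma^2(t+1) \ge C_\loss$... more carefully, $\gamma^2(t+1) > C_\loss$ so the numerator exceeds $\gamma^2(t+1)(\gamma^2(t+1) - 1) \cdot$ something — one checks $((\gamma^2(t+1))^2 - C_\loss)/(4\gamma^2(t+1)) \ge \gamma^2/4$ when $\gamma^2(t+1)\ge C_\loss + 1$, or more simply when $t\ge C_\loss/\gamma^2$ since then $\gamma^2(t+1) \ge C_\loss + \gamma^2 > C_\loss$ and the expression is increasing — giving $\Loss(\barw_t)\le \loss(-\gamma^2\lr/4)$.

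The main obstacle — though it is really bookkeeping rather than a genuine difficulty — is verifying that the two generalizations needed relative to \Cref{thm.LR.maintext} both go through under the abstract hypotheses: namely, that the subgradient inequality $\innerProduct{\nabla\phi(\bw)}{\wStar}\le -\gamma$ still holds (this uses only \Cref{assumption.data.maintext} and the structure $\phi = -\loss^{-1}\circ\Loss$, which I expect is already isolated as a lemma in \Cref{appendix.logistic.result} and carries over), and that the final arithmetic simplification at the burn-in threshold correctly tracks the $C_\loss$ factor, where the hypothesis $C_\loss\ge 1$ is exactly what is needed. Everything else is a line-by-line transcription of the proof of \Cref{thm.LR.maintext} with ``$1$'' replaced by ``$C_\loss$'' and ``$\exp(-\,\cdot\,)$'' replaced by ``$\loss(-\,\cdot\,)$''.
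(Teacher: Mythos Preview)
Your overall scaffold is exactly the paper's: split comparator $\bu=\bu_1+\bu_2$, kill the bracketed term with $\bu_2=\frac{C_\loss\lr}{2\gamma}\wStar$, use convexity for the other term, telescope, optimize $\bu_1\propto\wStar$. However, your treatment of the key inequality
\[
2\innerProduct{\nabla\phi(\bw)}{\bu_2}+\lr\norm{}{\nabla\phi(\bw)}^2\le 0
\]
has a genuine gap that breaks the argument precisely when $C_\loss>1$ --- i.e., in the only case where \Cref{thm.general.loss.maintext} says something new.

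There are two problems. First, an arithmetic slip: from $\innerProduct{\nabla\phi}{\wStar}\le-\gamma$ and $\bu_2=\frac{C_\loss\lr}{2\gamma}\wStar$ you get $2\innerProduct{\nabla\phi}{\bu_2}\le-\lr C_\loss$, not $-\lr C_\loss^2$. Combined with $\lr\norm{}{\nabla\phi}^2\le\lr C_\loss^2$ this yields only $\lr C_\loss(C_\loss-1)\ge 0$, which is useless. Second, the premise itself is not justified: $-\nabla\phi(\bw)=\sum_i p_i\,y_i\bx_i$ with $p_i=(-\loss^{-1})'(\Loss(\bw))\cdot\tfrac{1}{n}|\lossPrime(y_i\bx_i^\top\bw)|\ge 0$ is a \emph{nonnegative} combination, but the weights need not sum to $1$ for general losses (indeed $\sum_i p_i\le 1$ already for the logistic loss), so $\innerProduct{\nabla\phi}{\wStar}\le-\gamma$ does not follow.

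The paper's \Cref{lem.key.maintext} fixes this by keeping both estimates coupled through the same scalar. Setting $q:=\sum_i p_i$, the margin assumption gives $-\innerProduct{\nabla\phi}{\wStar}\ge\gamma q$, while the triangle inequality (and $\norm{}{\bx_i}\le 1$) gives $\norm{}{\nabla\phi}\le q$. Using Lipschitzness only \emph{once}, $\norm{}{\nabla\phi}^2\le C_\loss\norm{}{\nabla\phi}\le C_\loss q$, so
\[
2\innerProduct{\nabla\phi}{\bu_2}+\lr\norm{}{\nabla\phi}^2
\le q\big(-2\gamma\norm{}{\bu_2}+\lr C_\loss\big)=0.
\]
Equivalently, the inequality you actually need is $\innerProduct{\nabla\phi}{\wStar}\le-\gamma\norm{}{\nabla\phi}$, not merely $\le-\gamma$. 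With this correction in place, the rest of your outline (convexity, telescoping, choice $\norm{}{\bu_1}=\gamma\lr(t+1)/2$, and the burn-in check) goes through exactly as in the paper.
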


The proof of \Cref{thm.general.loss.maintext} is deferred to~\Cref{appendix.proof.general.loss}.
We conclude this section by providing several examples of loss functions that satisfy \Cref{assumptions.general.loss.maintext}. 
The proof is also deferred to 
\Cref{appendix.proof.general.loss}.

\begin{example}\label{lem.example}
The following loss functions satisfy \Cref{assumptions.general.loss.maintext}.
\begin{enumerate}[leftmargin=*]
    \item The logistic loss $\ell_{\log}: z\mapsto \ln(1+\exp(-z))$ and the exponential loss $\ell_{\exp}: z\mapsto \exp(-z)$ satisfy \Cref{assumptions.general.loss.maintext} with $C_\loss=1.$
    \item The polynomial loss of degree $k>0$ \citep{ji2021characterizing},
    \begin{equation*}
        \loss_{\mathrm{poly}}(z) := 
        \begin{dcases}
             \frac{1}{(1+z)^k} & z \geq 0, \\
        -2kz + \frac{1}{(1-z)^k} & z \leq 0,
        \end{dcases}
    \end{equation*}
    satisfies~\Cref{assumptions.general.loss.maintext} with $C_\loss = n^{1/k}.$
    \item The semi-circle loss \citep{shen2005loss},
    \begin{equation*}
        \loss_{\mathrm{semi}}(z) := \frac{-z + \sqrt{z^2 + 4}}{2},
    \end{equation*}
    satisfies~\Cref{assumptions.general.loss.maintext} with $C_\loss = n+1.$
\end{enumerate}
\end{example}

\section{Conclusion}
We consider logistic regression on linearly separable data with margin $\gamma$. 
We show that GD with large and adaptive stepsize achieves an arbitrarily small risk within $1/\gamma^2$ steps, although the risk evolution might not be monotonic. 
This is impossible if GD with adaptive stepsize induces a monotonically decreasing risk. 
Additionally, we establish an $\Omega(1/\gamma^2)$ minimax lower bound on the number of steps required by any online or batch first-order algorithms to find a linear separator of the input dataset.
This suggests GD with large and adaptive stepsize is minimax optimal.
Finally, our results extend to a broad class of loss functions and certain two-layer networks.

\section*{Acknowledgments}
We thank Yuhang Cai, Hossein Mobahi, and Matus Telgarsky for their helpful comments.
We thank Qiuyu Ren for his significant help in improving the minimax lower bound.
We thank Guy Kornowski and Ohad Shamir for pointing out missing references on the optimality of Perceptron.
We gratefully acknowledge the NSF's support of FODSI through grant DMS-2023505
and of the NSF and the Simons Foundation for the Collaboration on the Theoretical Foundations of Deep Learning through awards DMS-2031883 and \#814639
and of the ONR through MURI award N000142112431.

\bibliography{main}
\bibliographystyle{ims}
\appendix

\section{Missing Parts in the Proof of Theorem \ref{thm.LR.maintext} in Section \ref{sec.proof.sketch.maintext}}\label{appendix.logistic.result}

We first prove the $1$-Lipschitzness of the transformed loss function $\phi(\cdot)$.

\begin{lemma}\label{lemma.1.lip}
    For the exponential loss and logistic loss, the transformed loss $\phi(\cdot)$ defined in \eqref{eqn.transformed.loss.maintext} is $1$-Lipschitz with respect to $\norm{}{\cdot}$.
\end{lemma}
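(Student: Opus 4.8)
The plan is to reduce the claim to the pointwise bound $\sup_{\bw\in\R^d}\|\nabla\phi(\bw)\|\le 1$. In both cases $\phi$ is continuously differentiable on all of $\R^d$ (for the exponential loss $\Loss(\bw)>0$, and for the logistic loss $\exp(\Loss(\bw))>1$, so the relevant logarithms compose with smooth functions), and a $C^1$ function whose gradient has norm at most $1$ everywhere is $1$-Lipschitz by the mean value inequality. So it suffices to compute $\nabla\phi$ explicitly and bound its norm in each case.

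For the exponential loss $\ell(z)=\exp(-z)$ I would use $\ell^{-1}(u)=-\ln u$, so that $\phi(\bw)=\ln\Loss(\bw)$ and
\[
\nabla\phi(\bw)=\frac{\nabla\Loss(\bw)}{\Loss(\bw)}=-\sum_{i=1}^n p_i(\bw)\,y_i\bx_i,\quad\text{where}\quad p_i(\bw):=\frac{\exp(-y_i\bx_i^\top\bw)}{\sum_{j=1}^n\exp(-y_j\bx_j^\top\bw)}.
\]
Since $(p_i(\bw))_i$ is a probability vector, the triangle inequality together with $\|\bx_i\|\le 1$ gives $\|\nabla\phi(\bw)\|\le\sum_i p_i(\bw)\|\bx_i\|\le 1$.

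The logistic case is the only nontrivial step. Here $\ell(z)=\ln(1+\exp(-z))$ gives $\ell^{-1}(u)=-\ln(\exp(u)-1)$, hence $\phi(\bw)=\ln(\exp(\Loss(\bw))-1)$ and $\nabla\phi(\bw)=\tfrac{\exp(\Loss(\bw))}{\exp(\Loss(\bw))-1}\nabla\Loss(\bw)$. Writing $a_i:=\exp(-y_i\bx_i^\top\bw)>0$, one has $|\ell'(y_i\bx_i^\top\bw)|=a_i/(1+a_i)=1-1/(1+a_i)$ and $\Loss(\bw)=\tfrac1n\sum_i\ln(1+a_i)$, so with $\bar S:=\tfrac1n\sum_i 1/(1+a_i)$ the triangle inequality and $\|\bx_i\|\le1$ yield $\|\nabla\Loss(\bw)\|\le 1-\bar S$, while $\exp(-\Loss(\bw))=\bigl(\prod_i 1/(1+a_i)\bigr)^{1/n}$. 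The key observation is that the AM-GM inequality applied to the numbers $1/(1+a_i)$ gives $\bar S\ge\exp(-\Loss(\bw))$, whence
\[
\|\nabla\phi(\bw)\|\le\frac{\exp(\Loss(\bw))}{\exp(\Loss(\bw))-1}\,(1-\bar S)\le\frac{\exp(\Loss(\bw))}{\exp(\Loss(\bw))-1}\bigl(1-\exp(-\Loss(\bw))\bigr)=1.
\]

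The main obstacle is pinning down this AM-GM step. The cruder bound $\|\nabla\Loss(\bw)\|\le\Loss(\bw)$ (which follows from $u/(1+u)\le\ln(1+u)$) is too lossy, because $\Loss(\bw)\exp(\Loss(\bw))>\exp(\Loss(\bw))-1$ for $\Loss(\bw)>0$; one must instead retain the sharper quantity $\tfrac1n\sum_i a_i/(1+a_i)$ and recognize the geometric mean hidden inside $\exp(-\Loss(\bw))$ so that AM-GM closes the gap exactly. Everything else is routine differentiation and the triangle inequality.
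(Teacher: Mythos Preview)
Your proof is correct and essentially the same as the paper's: both reduce to the pointwise bound $\|\nabla\phi(\bw)\|\le 1$ via the triangle inequality, and the AM--GM step you isolate for the logistic case is precisely the paper's Jensen inequality for the concave function $h(z):=(-\ell')\circ\ell^{-1}(z)=1-e^{-z}$ applied to the values $\ell_i=\ln(1+a_i)$. The only difference is packaging---the paper treats both losses uniformly by introducing $h$ and checking its concavity (which makes the extension to other losses in \Cref{sec.general.loss} immediate), whereas you handle the exponential and logistic cases separately with more explicit computation.
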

\begin{proof}[Proof of~\Cref{lemma.1.lip}]
    Recall that $\phi(\bw) := - \loss^{-1}(\Loss(\bw))$ and $\norm{}{\bx_i} \leq 1$ for $1 \leq i \leq n$. We have
    \begin{equation*}
        \norm{}{\nabla \phi\l(\bw\r)}
        = \norm{}{\l(-\loss^{-1}\r)^\prime \l(\Loss(\bw)\r) \cdot \frac{1}{n} \sum_{i=1}^n \loss^{\prime}(y_i \bx_i^\top \bw) y_i \bx_i}
        \leq \frac{1}{n} \sum_{i=1}^n \l|\loss^{\prime}(y_i \bx_i^\top \bw)\r| \cdot \l|\l(\loss^{-1}\r)^\prime \l(\Loss(\bw)\r)\r|
        := \star.
    \end{equation*}
    Let $\loss_i = \loss\l(y_i \bx_i^\top \bw\r).$ We rearrange $\star$ as 
    \begin{equation*}
        \star = \frac{\frac{1}{n} \sum_{i=1}^n \l(-\lossPrime\r) \l(\loss^{-1}\l(\loss_i\r)\r)}{\l(-\lossPrime\r) \l(\loss^{-1} \l(\frac{1}{n} \sum_{i=1}^n \loss_i\r)\r)}
        = \frac{\frac{1}{n} \sum_{=1}^n h\l(\loss_i\r)}{h\l(\frac{1}{n} \sum_{i=1}^n \loss_i\r)},
    \end{equation*}
    where $h(\cdot)$ is defined as
    \begin{equation*}
        h(z) := \l(-\lossPrime\r) \l(\loss^{-1}\l(z\r) \r)
        = \begin{dcases}
            z & \loss = \expLoss, \\
            1 - \exp(-z) & \loss = \logisticLoss.
        \end{dcases}
    \end{equation*}
    For both losses, $h(\cdot)$ is concave on $z > 0.$ Therefore, $\star \leq 1.$
\end{proof}

\begin{lemma}
\label{lem.key.maintext}
Let $\phi(\cdot)$ be the transformed loss function defined in \eqref{eqn.transformed.loss.maintext}.
Under \Cref{assumption.data.maintext}, if $\phi(\cdot)$ is $C_\loss$-Lipschitz, we have
\begin{equation*}
    2 \innerProduct{\nabla \phi(\bw)}{\bu_2} + \lr \norm{}{\nabla \phi(\bw)}^2 \leq 0, \quad \bu_2 := ({C_\loss \lr}/(2\gamma))  \wStar.
\end{equation*}
\end{lemma}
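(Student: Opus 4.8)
The plan is to reduce the claim to the single geometric estimate
\[
\innerProduct{\nabla \phi(\bw)}{\wStar} \le -\gamma\,\norm{}{\nabla \phi(\bw)},
\]
and then combine it with the hypothesis $\norm{}{\nabla\phi(\bw)}\le C_\loss$. Indeed, plugging $\bu_2 = (C_\loss\lr/(2\gamma))\,\wStar$ into the left-hand side of the lemma gives $2\innerProduct{\nabla\phi(\bw)}{\bu_2} = (C_\loss\lr/\gamma)\innerProduct{\nabla\phi(\bw)}{\wStar}$, which by the estimate above is at most $-C_\loss\lr\,\norm{}{\nabla\phi(\bw)}$; since also $\lr\,\norm{}{\nabla\phi(\bw)}^2 \le C_\loss\lr\,\norm{}{\nabla\phi(\bw)}$, the two bounds add up to $0$, as desired.

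It remains to establish the geometric estimate. First I would expand $\nabla\phi$ by the chain rule: since $\phi(\bw) = -\loss^{-1}(\Loss(\bw))$, we have $\nabla\phi(\bw) = a\,\nabla\Loss(\bw)$ with $a := -(\loss^{-1})'(\Loss(\bw)) > 0$, the positivity being a consequence of $\loss$ being (strictly) decreasing (so that $\loss^{-1}$ is decreasing). Writing $\nabla\Loss(\bw) = \frac1n\sum_{i=1}^n \lossPrime(y_i\bx_i^\top\bw)\,y_i\bx_i$ and abbreviating $\loss'_i := \lossPrime(y_i\bx_i^\top\bw) \le 0$, the margin condition $y_i\bx_i^\top\wStar \ge \gamma > 0$ multiplied against $\loss'_i \le 0$ yields the termwise inequality $\loss'_i\,(y_i\bx_i^\top\wStar) \le \gamma\,\loss'_i$. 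Averaging over $i$ and multiplying by $a > 0$,
\[
\innerProduct{\nabla\phi(\bw)}{\wStar} = a\cdot\frac1n\sum_{i=1}^n \loss'_i\,(y_i\bx_i^\top\wStar) \le a\gamma\cdot\frac1n\sum_{i=1}^n\loss'_i = -a\gamma\cdot\Big(-\frac1n\sum_{i=1}^n\loss'_i\Big).
\]
On the other hand, the triangle inequality together with $\norm{}{\bx_i}\le 1$ gives $\norm{}{\nabla\Loss(\bw)} \le \frac1n\sum_{i=1}^n|\loss'_i| = -\frac1n\sum_{i=1}^n\loss'_i$, so the right-hand side above is at most $-a\gamma\,\norm{}{\nabla\Loss(\bw)} = -\gamma\,\norm{}{\nabla\phi(\bw)}$, which completes the estimate.

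The calculation is essentially routine; the one point that needs care is the sign bookkeeping — confirming $a > 0$ from the monotonicity of $\loss$, and noting that it is precisely $\loss'_i \le 0$ that makes the termwise step $\loss'_i\,(y_i\bx_i^\top\wStar) \le \gamma\,\loss'_i$ go in the right direction (a larger nonnegative multiplier against a nonpositive number produces a smaller product). I do not anticipate any real obstacle.
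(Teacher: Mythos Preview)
Your proposal is correct and follows essentially the same approach as the paper's proof: both rely on the two ingredients (i) the margin inequality $\innerProduct{\nabla\phi(\bw)}{\wStar}\le -\gamma\cdot\frac{a}{n}\sum_i|\lossPrime_i|$ from $y_i\bx_i^\top\wStar\ge\gamma$ and $\lossPrime_i\le 0$, and (ii) the Lipschitz bound $\norm{}{\nabla\phi(\bw)}\le C_\loss$ to control the quadratic term. The only difference is organizational---you isolate the clean intermediate estimate $\innerProduct{\nabla\phi(\bw)}{\wStar}\le-\gamma\norm{}{\nabla\phi(\bw)}$ before combining, whereas the paper bounds both terms in one running computation and factors at the end; the content is identical.
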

\begin{proof}[Proof of~\Cref{lem.key.maintext}] 
Recall that $\phi(\bw) := - \loss^{-1}(\Loss(\bw))$ and $\bu_2 = (\lr/(2\gamma)) \bw^*.$
We also use the fact that $\innerProduct{\bw^*}{y_i \bx_i} \geq \gamma >0$ and $\norm{}{\bx_i} \leq 1$ for $1 \leq i \leq n$ from~\Cref{assumption.data.maintext}. Applying~\Cref{lemma.1.lip}, we have
\begin{align}
    &2 \innerProduct{\nabla \phi(\bw)}{\bu_2} + \lr \norm{}{\nabla \phi (\bw)}^2 \notag \\
    \leq~& \frac{2}{n} \cdot \l(-\loss^{-1}\r)^\prime \l(\Loss(\bw)\r) \cdot \sum_{i=1}^n \loss^{\prime}(y_i \bx_i^\top \bw) \innerProduct{\bu_2}{y_i \bx_i} + C_\loss \lr \norm{}{\l(-\loss^{-1}\r)^\prime \l(\Loss(\bw)\r) \cdot \frac{1}{n} \sum_{i=1}^n \loss^{\prime}(y_i \bx_i^\top \bw) y_i \bx_i} \tag{$\phi(\cdot)$ is $C_\loss$-Lipschitz}\\
    \leq~& -\frac{2\gamma \norm{}{\bu_2}}{n} \cdot \l|\l(\loss^{-1}\r)^\prime \l(\Loss(\bw)\r)\r| \cdot \sum_{i=1}^n \l|\loss^{\prime}(y_i \bx_i^\top \bw)\r| + C_\loss \lr \l(\l|\l(\loss^{-1}\r)^\prime \l(\Loss(\bw)\r)\r| \cdot \frac{1}{n} \sum_{i=1}^n \l|\loss^{\prime}(y_i \bx_i^\top \bw)\r| \r) \notag \\
    =~&  \frac{1}{n} \sum_{i=1}^n \l|\loss^{\prime}(y_i \bx_i^\top \bw)\r| \cdot \l|\l(\loss^{-1}\r)^\prime \l(\Loss(\bw)\r)\r| \cdot \l[-2 \gamma \norm{}{\bu_2} + C_\loss \lr\r]. \notag
\end{align}
Invoking the definition of $\bu_2$ completes the proof.
\end{proof}

The following lemma is a restatement of Lemma 5.2 in \citep{ji2021characterizing}. 

\begin{lemma}[Lemma 5.2 in \citep{ji2021characterizing}]\label{lem.convex.phi}
    Let $\loss(\cdot)$ be twice continuously differentiable, positive, decreasing, and convex. If $\frac{\loss^{\prime}(t)^2}{\loss(t) \cdot \loss^{\prime \prime}(t)}$ is decreasing on $\R,$ then $\psi(\bz) := -\loss^{-1} (\frac{1}{n} \sum_{i=1}^n \loss(z_i))$ is convex.
    Moreover, $\phi(\cdot)$ is also convex since $\phi(\bw)$ is a composition between a linear mapping and $\psi(\cdot).$
\end{lemma}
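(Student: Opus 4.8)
The claim has two parts, and the second is immediate: with $A\bw := (y_1\bx_1^\top\bw,\dots,y_n\bx_n^\top\bw)$ linear in $\bw$ we have $\phi(\bw)=\psi(A\bw)$, and convexity is preserved under precomposition with a linear (indeed affine) map; so the entire content is the convexity of $\psi$. Since $\loss\in C^2$ we have $\psi\in C^2$, so the plan is to show $\nabla^2\psi(\bz)\succeq 0$ for every $\bz\in\R^n$.

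First I would write $g:=-\loss^{-1}$ and $F(\bz):=\frac1n\sum_{i=1}^n\loss(z_i)$, so $\psi=g\circ F$, and record the sign bookkeeping obtained by differentiating $\loss(\loss^{-1}(s))=s$: since $\loss$ is decreasing and convex, $\loss^{-1}$ is decreasing and convex, hence $g'>0$ and $g''\le 0$; moreover $\nabla F(\bz)=\frac1n(\loss'(z_i))_i$ and $\nabla^2F(\bz)=\frac1n\,\mathrm{diag}(\loss''(z_i))$, with $\loss''>0$ forced by the hypothesis that $R(t):=\loss'(t)^2/(\loss(t)\loss''(t))$ is a finite (decreasing) function. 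The chain rule then gives
\[
  \nabla^2\psi(\bz)=g''(F(\bz))\,\nabla F\,\nabla F^\top+g'(F(\bz))\,\nabla^2F=\frac{g''(F(\bz))}{n^2}\,\bv\bv^\top+\frac{g'(F(\bz))}{n}\,D,
\]
with $\bv:=(\loss'(z_i))_i$ and $D:=\mathrm{diag}(\loss''(z_i))\succ 0$. As $g'(F(\bz))>0$, this matrix is positive semidefinite iff the rank-one term — negative semidefinite, since $g''\le 0$ — is dominated by $\frac{g'(F(\bz))}{n}D$; by the standard criterion for $D-c\,\bv\bv^\top\succeq 0$ this is the scalar inequality $\big(-\tfrac{g''(F(\bz))}{g'(F(\bz))}\big)\cdot\tfrac1n\sum_i\tfrac{\loss'(z_i)^2}{\loss''(z_i)}\le 1$, which one also reaches by applying Cauchy--Schwarz, $(\bv^\top\bu)^2\le\big(\sum_i\tfrac{\loss'(z_i)^2}{\loss''(z_i)}\big)\big(\sum_i\loss''(z_i)u_i^2\big)$, to the rank-one term. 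A short computation with inverse-function derivatives evaluates $-g''(s)/g'(s)=\loss''(\loss^{-1}(s))/\loss'(\loss^{-1}(s))^2$, so, setting $\tau:=\loss^{-1}(F(\bz))$ (equivalently $\loss(\tau)=\frac1n\sum_i\loss(z_i)$), the target reduces to
\[
  \frac1n\sum_{i=1}^n\frac{\loss'(z_i)^2}{\loss''(z_i)}\;\le\;\frac{\loss'(\tau)^2}{\loss''(\tau)}.
\]

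The last step is the crux, and this is where I would use that $R$ is decreasing. Writing $Q(t):=\loss'(t)^2/\loss''(t)=R(t)\loss(t)$ and $\sigma:=Q\circ\loss^{-1}$, the displayed inequality is exactly Jensen's inequality $\frac1n\sum_i\sigma(\loss(z_i))\le\sigma\big(\frac1n\sum_i\loss(z_i)\big)$, so it suffices to prove that $\sigma=Q\circ\loss^{-1}$ is concave on the image of $\loss$. I would establish this by differentiating $\sigma$ twice — using $\tfrac{\mathrm d}{\mathrm ds}\loss^{-1}(s)=1/\loss'(\loss^{-1}(s))$ — which reduces concavity of $\sigma$ to a sign condition on $Q''\loss'-Q'\loss''$, then substituting $Q=R\loss$ and using $R'\le 0$ together with the convexity and monotonicity of $\loss$ (and the identity $(\loss')^2=Q\loss''$) to conclude. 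Translating "$R$ decreasing" into concavity of $Q\circ\loss^{-1}$ is the single delicate point; the Hessian formula, the Cauchy--Schwarz reduction, the sign bookkeeping for $g',g''$, and the passage from $\psi$ to $\phi$ are all routine.
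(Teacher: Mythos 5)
Your reduction is correct and, importantly, it is an equivalence: the chain-rule Hessian, the Schur-complement/Cauchy--Schwarz criterion for $\alpha D-\beta\bv\bv^\top\succeq 0$, and the identity $-g''(s)/g'(s)=\ell''(\tau)/\ell'(\tau)^2$ with $\tau=\ell^{-1}(s)$ are all right, and they reduce convexity of $\psi$ exactly to the scalar inequality $\tfrac1n\sum_i \ell'(z_i)^2/\ell''(z_i)\le \ell'(\tau)^2/\ell''(\tau)$ with $\ell(\tau)=\tfrac1n\sum_i\ell(z_i)$. The crux step, however, is false: ``$R$ decreasing $\Rightarrow \sigma:=Q\circ\ell^{-1}$ concave'' already fails for the logistic loss, which satisfies every hypothesis of the lemma. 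There $Q(t)=\ell'(t)^2/\ell''(t)=e^{-t}$ and $\ell^{-1}(s)=-\ln(e^s-1)$, so $\sigma(s)=e^s-1$ is convex and Jensen goes the wrong way. Concretely, for $n=2$ your target inequality becomes $\tfrac12\big(e^{-z_1}+e^{-z_2}\big)\le\sqrt{(1+e^{-z_1})(1+e^{-z_2})}-1$, i.e.\ $\tfrac12\big[(1+e^{-z_1})+(1+e^{-z_2})\big]\le\sqrt{(1+e^{-z_1})(1+e^{-z_2})}$, which is the reverse of AM--GM and fails whenever $z_1\ne z_2$. Since your reduction is tight, this is not a repairable defect of the last step: the Hessian of the averaged $\psi$ is genuinely not positive semidefinite at such points (for instance $\psi(t,-t)=\ln(2\cosh(t/2)-1)$ has negative second derivative for large $|t|$), so no proof of the statement with the $\tfrac1n$ normalization can succeed for the logistic loss.

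For comparison, the paper offers no proof of this lemma at all --- it defers to Lemma 5.2 of \citet{ji2021characterizing} --- and the version one can actually prove (and which that argument addresses) is for the unnormalized aggregate $\psi(\bz)=-\ell^{-1}\big(\sum_i\ell(z_i)\big)$; for the exponential loss the average and the sum differ by an additive constant, so nothing changes, but for the logistic loss they differ genuinely. Your framework does close in that setting: the same Hessian computation reduces positive semidefiniteness to $\sum_iQ(z_i)\le Q(\tau)$ with $\ell(\tau)=\sum_i\ell(z_i)$, i.e.\ superadditivity of $\sigma(s)=Q(\ell^{-1}(s))=s\,\rho(s)$, where $\rho:=R\circ\ell^{-1}\ge0$ is increasing (a composition of two decreasing maps); then $\sum_i s_i\rho(s_i)\le\rho\big(\sum_j s_j\big)\sum_i s_i=\sigma\big(\sum_i s_i\big)$ with $s_i=\ell(z_i)$, and no concavity is needed. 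So the correct fix is to replace your Jensen step by this superadditivity argument and, correspondingly, to state the lemma for the sum rather than the average (or to restrict to losses, such as the exponential loss, for which the two formulations coincide up to a constant).
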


\section{Proof of Theorem \ref{thm.lower.bound.stable.phase.maintext}}\label{appendix.proof.lower.bound.stable.phase}
Before proving~\Cref{thm.lower.bound.stable.phase.maintext}, let us first delve into a technical lemma.
This lemma shows that for some special datasets, the stepsize must be small if the loss is monotonically decreasing.
\begin{lemma}[A variant of Lemma 14 in \citep{wu2024large}]\label{lemma.upper.bound.stepize}
    Let $\bw_0 = \mathbf{0}, y_i = 1$ for $i = 1,2,...,n,$ and $\bar{\bx}:= (1/n) \cdot \sum_{i=1}^n \bx_i$. Assume there exist constants $r>0$ and $q \in (0,1)$ such that 
    \begin{equation}\label{eqn.lower.bound.lemma.assumption1}
        \frac{\l|\l\{i \in [n]: \bx_i^\top \bar{\bx} < -r\r\}\r|}{n} \geq q.
    \end{equation}
    Assume we perform the gradient descent in \eqref{eqn.GD.logistic.maintext}. If $\Loss(\bw_1) \leq \Loss(\bw_0),$ then
    $
        \lr \leq \loss(0)/(qr).
    $
\end{lemma}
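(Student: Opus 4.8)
The plan is to compute the single gradient step out of $\bw_0=\mathbf{0}$ in closed form and then show that a large $\eta$ necessarily \emph{increases} the loss on the subpopulation isolated by \eqref{eqn.lower.bound.lemma.assumption1}, which contradicts $\Loss(\bw_1)\le\Loss(\bw_0)$. First I would identify the first iterate: since $\bw_0=\mathbf{0}$ and $y_i\equiv 1$, we have $\Loss(\bw_0)=\loss(0)$ and $\nabla\Loss(\bw_0)=\loss'(0)\,\bar{\bx}$, while the adaptive stepsize at $t=0$ is $\eta_0=\eta\cdot(-\loss^{-1})'(\loss(0))$, and the inverse function theorem gives $(-\loss^{-1})'(\loss(0))=-1/\loss'(0)=1/(-\loss'(0))>0$ because $\loss$ is decreasing. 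Thus the stepsize rescaling exactly cancels the factor $-\loss'(0)$ from the gradient, and
\[
\bw_1=\bw_0-\eta_0\nabla\Loss(\bw_0)=\frac{\eta}{\loss'(0)}\,\loss'(0)\,\bar{\bx}=\eta\,\bar{\bx}
\]
(if $\eta=0$ there is nothing to prove, so assume $\eta>0$). This is the only place the specific scheme \eqref{eqn.lr.scheduler.maintext} and the form of $\loss^{-1}$ enter, and one checks it separately for the exponential and logistic formulas.

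Next I would lower bound $\Loss(\bw_1)$. Let $S:=\{i\in[n]:\bx_i^\top\bar{\bx}<-r\}$, so $|S|/n\ge q$ by \eqref{eqn.lower.bound.lemma.assumption1}. Using that both the exponential and logistic losses are positive and dominate the hinge loss pointwise, i.e.\ $\loss(z)\ge\max\{0,-z\}$ for all $z$, we obtain
\[
\Loss(\bw_1)=\frac1n\sum_{i=1}^n\loss\big(\eta\,\bx_i^\top\bar{\bx}\big)\ \ge\ \frac1n\sum_{i\in S}\big(-\eta\,\bx_i^\top\bar{\bx}\big)\ \ge\ \frac1n\sum_{i\in S}\eta r\ =\ \frac{|S|}{n}\,\eta r\ \ge\ q\,\eta r.
\]
Combining this with the hypothesis $\Loss(\bw_1)\le\Loss(\bw_0)=\loss(0)$ yields $q\,\eta r\le\loss(0)$, that is, $\eta\le\loss(0)/(qr)$, which is the claim.

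The proof is short and I do not expect a genuine obstacle; the two points that need care are (i) verifying that the adaptive rescaling makes $\bw_1$ come out as exactly $\eta\,\bar{\bx}$, \emph{without} a residual factor $1/(-\loss'(0))$ — this is what makes the constant $\loss(0)/(qr)$, rather than $\loss(0)/\big((-\loss'(0))\,qr\big)$, the correct one, and it is also where the difference between exponential and logistic loss would otherwise show up — and (ii) invoking the sharp hinge lower bound $\loss(-\eta r)\ge\eta r$ instead of the tangent-line bound $\loss(z)\ge\loss(0)+\loss'(0)z$, which is too weak for the logistic loss once $\eta r$ is large.
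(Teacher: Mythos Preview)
Your proposal is correct and follows essentially the same approach as the paper's proof: compute $\bw_1=\eta\,\bar{\bx}$ from the adaptive step at the origin, lower bound $\Loss(\bw_1)$ by restricting to the set $S$ in \eqref{eqn.lower.bound.lemma.assumption1}, and compare with $\Loss(\bw_0)=\loss(0)$. The only cosmetic difference is that the paper argues by contradiction and routes the pointwise bound through the logistic formula, $\loss(-\eta r)\ge\ln(1+e^{\eta r})\ge\eta r$, whereas you invoke the hinge lower bound $\loss(z)\ge\max\{0,-z\}$ directly; your version is slightly cleaner and handles both losses uniformly.
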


\begin{proof}[Proof of~\Cref{lemma.upper.bound.stepize}]
    Staring from $\bw_0 = \mathbf{0},$ we have $\Loss(\bw_0) = \loss(0)$, $\nabla \Loss(\bw_0) = \lossPrime(0) \cdot \bar{\bx}$, $\nabla \phi(\bw_0) = (-\loss^{-1})^{\prime}(\loss(0)) \cdot \lossPrime(0) \cdot \bar{\bx}.$ For both the exponential loss and logistic loss, one can verify that $- (-\loss^{-1})^{\prime}(\loss(0)) \cdot \lossPrime(0) = 1,$ which implies
    $
        \bw_1 = \lr \bar{\bx}.
    $
    Suppose $\lr > qr/\loss(0).$ Then,
    \begin{align*}
        \Loss(\bw_1) 
        &\geq \frac{1}{n} \sum_{i=1}^n \loss(\lr \bx_i^\top \bar{\bx}) \mathbb{I}\l(\bx_i^\top \bar{\bx} < -r\r)
        \geq \loss(-\lr r) \cdot \frac{1}{n} \sum_{i=1}^n \mathbb{I}\l(\bx_i^\top \bar{\bx} < -r\r)
        \geq \loss(-\lr r) \cdot q \tag{From \eqref{eqn.lower.bound.lemma.assumption1}}\\
        &\geq q \cdot \ln\l(1 + \exp(\lr r)\r)
        \geq \lr q r
        \geq \loss(0) = \Loss(\bw_0),
    \end{align*}
    a contradiction to $\Loss(\bw_1) \leq \Loss(\bw_0).$ 
    Thus $\lr \leq \loss(0)/(qr).$
\end{proof}

Now we prove the~\Cref{thm.lower.bound.stable.phase.maintext}. 
\begin{proof}[Proof of~\Cref{thm.lower.bound.stable.phase.maintext}]
    Consider the specific dataset from \Cref{thm.lower.bound.stable.phase.maintext}, which satisfies the conditions of Lemma \ref{lemma.upper.bound.stepize} with $r = 0.1$ and $q = 0.5.$ So~\Cref{lemma.upper.bound.stepize} implies $\lr \leq c_1$, where $c_1$ is a constant that depends on the loss (but \textit{not} on $t$). Next, from the GD iterate on the transformed loss~\Cref{eqn.transformed.loss.maintext} and the 1-Lipschitzness of $\phi(\cdot)$ (\Cref{lemma.1.lip}), we have
    \begin{equation*}
        \norm{}{\bw_t} = \norm{}{\lr \sum_{k=1}^{t-1} \nabla \phi(\bw_k)} \leq \lr \cdot \sum_{k=0}^{t-1} \norm{}{\nabla \phi(\bw_k)}
        \leq \lr t.
    \end{equation*}
    Since $\norm{}{\bx_i} \leq 1$, it follows
    \begin{equation*}
        \Loss(\bw_t) = \frac{1}{n} \sum_{i=1}^n \loss(\bw_t^\top \bx_i)
        \geq \loss\l(c_2 \cdot t\r)
        \simeq \exp\l(-c_2 \cdot t\r)
    \end{equation*}
    The same argument applies to $\Loss(\barw_t).$
    This completes the proof.
\end{proof}

\section{Missing Proofs for Theorems in Section \ref{sec.lower.burn.in}}\label{appendix.proof.lower.bound.burnin}
\subsection{Proof of Theorem \ref{thm.lower.burn.in.maintext}}\label{appendix.proof.lower.bound.batch}
\begin{proof}[Proof of \Cref{thm.lower.burn.in.maintext}]

For $0 < \gamma < 1/6$ and $n > 16,$ we define $d := \lfloor 1/5\gamma^2 \rfloor \geq 6,$ where $\lfloor \cdot \rfloor$ is the floor function. 
Let $(\be_i)_{i=1}^d$ be a set of standard basis vectors for $\R^d$.
Note that all the first-order methods defined in \Cref{def.first-order-batch} are rotational invariants. Moreover, the criterion for all data classified correctly is also rotation invariant. Therefore, without loss of generality, 
we can assume $\bw_0$ is propotional to $\be_1$, that is, 
$\bw_0\in \mathsf{Lin}\{\be_1\}$.
   
Let $k := \min\{\lfloor \log_2 n \rfloor,\, d-2\} \geq 4.$ 
We construct a hard dataset $(\bx_{i}, y_{i})_{i=1}^n$ as follows. Let $y_i=1$ for all $i\in[n]$. 
For $j=1,\dots,k$, let 
\begin{align*}
    \bx_i := \frac{2}{\sqrt{5}}\be_{j+1} - \frac{1}{\sqrt{5}} \be_{j+2} \quad \text{for}\ \ 2^{k}-2^{k-j+1}+1 \le i \le 2^{k}-2^{k-j}.
\end{align*}
Note that $j\le k\le  d-2$, thus such $\bx_i$'s are well defined.
Let the remaining $\bx_i$'s be
\begin{align*}
   \bx_i:= \frac{1}{\sqrt{5}} \be_{k+2}\quad \text{for}\ \ 2^k \le i\le n.
\end{align*}
Note that $\|\bx_i\|\le 1$ for $i=1,\dots,n$.
Moreover, for the unit vector
\[
\wStar := \frac{1}{\sqrt{d}}  \l(1,1,\dots,1\r)^{\top}, 
\]
we have 
\[y_i \bx_i^\top\wStar = \frac{1}{\sqrt{5d}} \geq \gamma,\quad \text{for}\ i=1\dots,n.\]
Thus $(\bx_i, y_i)_{i=1}^n$ satisfies \Cref{assumption.data.maintext}.

For a vector $\bw \in \R^d$, we also write it as $\bw := (w^{(1)}, w^{(2)},\dots,w^{(d)})^\top$.
Then, the objective function can be written as
\begin{align*}
    \Loss\l(\bw\r) 
    = \frac{1}{n}\sum_{i=1}^n \loss\big(\bw^\top \bx_i\big)
    = \frac{1}{n} \Bigg[\sum_{j=1}^k 2^{k-j}   \loss\bigg(\frac{2}{\sqrt{5}} w^{(j+1)} - \frac{1}{\sqrt{5}} w^{(j+2)}\bigg) + \big(n-2^{k}+1\big)  \loss\bigg(\frac{1}{\sqrt{5}} w^{(k+2)}\bigg)\Bigg].
\end{align*}
Thus the $j$-th coordinate of $\nabla \Loss(\bw)$ is given by
\begin{align*}
    \big[\nabla \Loss(\bw)\big]_j = 
    \begin{dcases}
    0 & j=1,\\
        \frac{2^k}{\sqrt{5}n}   \lossPrime \bigg(\frac{2}{\sqrt{5}} w^{(2)} - \frac{1}{\sqrt{5}} w^{(3)}\bigg) & j = 2,\\
        \frac{2^{k-j+2}}{\sqrt{5}n} \bigg[\lossPrime\bigg(\frac{2}{\sqrt{5}} w^{(j)} - \frac{1}{\sqrt{5}} w^{(j+1)}\bigg) - \lossPrime\bigg(\frac{2}{\sqrt{5}} w^{(j-1)} - \frac{1}{\sqrt{5}}w^{(j)}\bigg)\bigg] & 3 \leq j \leq k+1,\\
        \frac{1}{\sqrt{5}n} \bigg[ \big(n-2^{k}+1\big) \lossPrime\bigg(\frac{1}{\sqrt{5}}w^{(k+2)}\bigg)-\lossPrime\bigg(\frac{2}{\sqrt{5}} w^{(k+1)} -\frac{1}{\sqrt{5}}w^{(k+2)}\bigg)\bigg] & j = k+2, \\
        0 &  \text{otherwise}.
    \end{dcases}
\end{align*}
Consider a sequence $(\bw_s)_{s=0}^t$ generated by a first-order method according to \Cref{def.first-order-batch}. 
Since $\bw_0\in \mathsf{Lin}\{\be_1\}$, the gradient at $\bw_0$ vanishes in all coordinates except the second and the $(k+2)$-th coordinates. 
Therefore we have 
\[
\bw_1 \in \mathsf{Lin}\{\be_1,\, \be_2,\, \be_{k+2}\}.
\]
Then the gradient at $\bw_1$ vanishes in all coordinates except the second, third, $(k+1)$-th, and $(k+2)$-th coordinates.
Therefore we have 
\[
\bw_2 \in \mathsf{Lin}\{\be_1,\, \be_2,\, \be_3,\, \be_{k-1},\, \be_{k+2}\}.
\]
By induction, we conclude that for $t \leq t_0-2$ for $t_0:= \lfloor (k+1)/2\rfloor,$ it holds that
\begin{equation*}
    \bw_t \in \mathsf{Lin}\{\be_1, \dots, \be_{t+1}, \be_{k+3-t}, \dots, \be_{k+2}\}.
\end{equation*}
So for all $(\bw_s)_{s=0}^{t}$, their $t_0$-th and $(t_0+1)$-th coordinates must be zero. 
By our dataset construction, there exists $i\le 2^k-1$ such that
\begin{equation*}
   y_{i} \bx_{i}^\top \bw_k= 0,\quad \text{for all}\ k=0,\dots,t.
\end{equation*}
This means that the dataset cannot be separated by any of $(\bw_k)_{k=0}^{t}$.
Thus, for the first-order method to output a linear separator, we must have 
\begin{align*}
    t &\geq t_0-1 = \l\lfloor\frac{k-1}{2}\r\rfloor 
    \geq \frac{k}{2} -1 
    \geq \min\bigg\{\frac{\log_2 n-3}{2}, \frac{d}{2}-2\bigg\}
    \geq \min\bigg\{\frac{\log_2 n}{8}, \frac{d}{3}\bigg\} \\
    &\geq \min\bigg\{\frac{\log_2 n}{8}, \frac{1}{15 \gamma^2} -\frac{1}{3}\bigg\} 
    \geq \min\bigg\{\frac{\ln n}{8 \ln 2}, \frac{1}{30 \gamma^2}\bigg\}.
\end{align*}
This completes the proof.
\end{proof}

\subsection{Proof of Theorem \ref{thm.lower.bound.one.pass.maintext}}\label{appendix.lower.bound.one.pass}
\begin{proof}[Proof of~\Cref{thm.lower.bound.one.pass.maintext}]
For $0 < \gamma < 1/2$ and $n \geq 2,$ we define $d := \lfloor 1/\gamma^2 \rfloor \geq 4,$ where $\lfloor \cdot \rfloor$ is the floor function. 
Let $(\be_i)_{i=1}^d$ be a set of standard basis vectors for $\R^d$.
Note that all the first-order online methods defined in \Cref{def-online-first-order} are rotational invariants. 
Moreover, the criterion for all data classified correctly is also rotation invariant. 
Therefore, without loss of generality, 
we can assume $\bw_0$ is propotional to $\be_1$, that is, 
$\bw_0\in \mathsf{Lin}\{\be_1\}$.

Let $k := \min\{n, d-1\} \geq 2.$
We construct a hard dataset $(\bx_i,y_i)_{i=1}^n$ as follows.
Let $y_i = 1$ for all $i \in [n]$.
For i = 1,2,...,k, let
\[
\bx_i = \be_{i+1}.
\]
Note that $k \leq d-1,$ thus such $\bx_i$'s are well defined.
Let the remaining $\bx_i$'s be
\[
\bx_i = \be_{k+1} \ \ \text{for} \ \ k+1 \leq i \leq n \ \ \text{if} \ \ n \geq k+1.
\]
Moreover, for the unit vector
\[
\wStar := \frac{1}{\sqrt{d}}  \l(1,1,\dots,1\r)^{\top}, 
\]
we have 
\[y_i \bx_i^\top\wStar = \frac{1}{\sqrt{d}} \geq \gamma,\quad \text{for}\ i=1\dots,n.\]
Thus $(\bx_i, y_i)_{i=1}^n$ satisfies \Cref{assumption.data.maintext}. 

Consider a sqeuence $(\bw_s)_{s= 0}^t$ generated by a first-order online method with initialization $\bw_0$ and dataset $\big( \bx_{\pi(s)}, y_{\pi(s)} \big)_{s=1}^t$, where $\pi(s)\in[n]$ but is otherwise arbitrary.
Since $\bw_0\in \mathsf{Lin}\{\be_1\}$, the gradient at $\bw_0$ vanishes in all coordinates except the direction of $\lossPrime(y_{\pi(1)} \bx_{\pi(1)}^\top \bw_0) y_{\pi(1)} \bx_{\pi(1)}$. 
Therefore we have 
\[
\bw_1 \in \mathsf{Lin}\{\be_1,\, \bx_{\pi(1)}\}.
\]
Then the gradient at $\bw_1$ vanishes in all coordinates except the span of $\lossPrime(y_{\pi(1)} \bx_{\pi(1)}^\top \bw_0) y_{\pi(1)} \bx_{\pi(1)}$ and $\lossPrime(y_{\pi(2)} \bx_{\pi(2)}^\top \bw_0) y_{\pi(2)} \bx_{\pi(2)}$.
Therefore we have 
\[
\bw_2 \in \mathsf{Lin}\{\be_1,\, \bx_{\pi(1)}, \bx_{\pi(2)}\}.
\]
By induction, we conclude that for all $t \leq k-1$, it holds that
\begin{equation*}
    \bw_t \in \mathsf{Lin}\{\be_1, \bx_{\pi(1)}, \dots, \bx_{\pi(t)}\}.
\end{equation*}
From our dataset construction, there exists $j \in [d]$, such that for all $(\bw_s)_{s=0}^{t}$, their $j$-th coordinates are zero. 
Therefore,
\begin{equation*}
   y_{j} \bx_{j}^\top \bw_s= 0,\quad \text{for all}\ s=0,\dots,t.
\end{equation*}
This means that the dataset cannot be separated by any of $(\bw_s)_{s=0}^{t}$.
Thus, for the first-order online method to output a linear separator, we must have 
\begin{align*}
    t &\geq k = \min\{n,d-1\} \geq \min\bigg\{\frac{1}{2\gamma^2},n\bigg\}. 
\end{align*}
This proves the first claim.

For the second claim, consider a sqeuence $(\bw_s)_{s= 0}^t$ generated by a first-order online method with initialization $\bw_0$ and dataset $(\bx_{i}, y_{i})_{i=1}^n$ as defiend previously.
By the same induction, we conclude that for all $0 \leq t \leq k-1,$ it holds that
\begin{equation*}
    \bw_t \in \mathsf{Lin} \{\be_1, \be_2, \dots, \be_{t+1}\}.
\end{equation*}
Since $\bx_{t+1} = \be_{t+2}$ for $0 \leq t \leq k-1,$ this implies
\begin{equation*}
    y_{t+1} \bx_{t+1}^\top \bw_{t} = 0,
\end{equation*}
which suggests that the algorithm makes a mistake step.
Therefore, the total mistake steps up to the $t$-th step is
\begin{equation*}
    \sum_{j=0}^{t-1} \mathbbm{1}\{y_{j+1} \bx_{j+1}^\top  \bw_{j}\le 0\}
    \geq \min \{t, k\} = \min\{t, d-1\}
    \geq \min\bigg\{\frac{1}{2\gamma^2}, t\bigg\}.
\end{equation*}
This completes the proof.
\end{proof}

\subsection{An alternative lower bound for first-order batch algorithms}\label{appendix.alternate.lower.bound}
Now we present another lower bound for first-order batch algorithms with a different trade-off than~\Cref{thm.lower.burn.in.maintext}.
We leave it as an open problem to figure out the optimal trade-off between the sample size $n$ and margin $\gamma$ in finding linear separators.

\begin{theorem}[An alternate lower bound for batch first-order methods]\label{thm.lower.burn.in.alternate}
For every $0 < \gamma < 1/8$, $n\geq 4$, and $\bw_0$,
there exists a dataset $\l(\bx_i,y_i\r)_{i=1}^n$ 
satisfying Assumption~\ref{assumption.data.maintext} 
such that the following holds.
For any $\bw_t$ output 
by a first-order batch method in $t$-steps with initialization $\bw_0$ on this dataset, we have
\begin{equation*}
\min_{i\in[n]} y_i \bx_i^\top  \bw_t> 0
\quad \text{implies that}\quad 
    t \ge \min\bigg\{\frac{n}{4}, \frac{1}{8\gamma^{2/3}}\bigg\}.
\end{equation*}
\end{theorem}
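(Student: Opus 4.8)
The plan is to prove the bound by a linear-span (dimension) argument in the spirit of the proof of \Cref{thm.lower.burn.in.maintext}, but built on a \emph{plain} chain of length $L$ with unit multiplicities rather than the geometrically weighted chain used there. The reason the exponent becomes $2/3$ is a margin computation: a plain chain of length $L$ is separable only with margin $\Theta(L^{-3/2})$, because its natural normal vector is a \emph{linear staircase} whose norm grows like $L^{3/2}$ --- whereas the geometrically weighted chain of \Cref{thm.lower.burn.in.maintext} admits a \emph{constant} normal vector, giving margin $\Theta(L^{-1/2})$ at the cost of exponentially many points needed for the telescoping cancellations to survive. Setting $L^{-3/2}\asymp\gamma$ forces $L\asymp\gamma^{-2/3}$, and the span argument then yields $t\gtrsim L$.

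\emph{Construction.} By rotation invariance of \Cref{def.first-order-batch} we may assume $\bw_0\in\mathsf{Lin}\{\be_1\}$. Set $L:=\min\{n,\ \lfloor c\,\gamma^{-2/3}\rfloor\}$ for a small absolute constant $c$, and work in $\R^d$ with $d:=L+2$. Take $\bx_j:=\tfrac{1}{\sqrt2}(\be_{j+1}-\be_{j+2})$ with $y_j:=1$ for $j=1,\dots,L$ as the chain points, and fill the remaining $n-L$ slots (if any) with copies of $\bx_1$, which play no role below. Each $\bx_i$ is a unit vector with $\bx_i^\top\bw_0=0$. For the margin, let $\bv$ have $v^{(j)}:=L+2-j$ for $2\le j\le L+2$ and $v^{(j)}:=0$ otherwise; then $\bx_j^\top\bv=\tfrac{1}{\sqrt2}$ for every $j$ and $\norm{}{\bv}^2=\sum_{m=1}^{L}m^2\le L^3$, so $\wStar:=\bv/\norm{}{\bv}$ is a unit vector with $y_i\bx_i^\top\wStar\ge\tfrac{1}{\sqrt2}L^{-3/2}\ge\gamma$ provided $c$ is small enough; hence \Cref{assumption.data.maintext} holds.

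\emph{The span argument.} The coordinate $\be_1$ touches no data point, so that coordinate of $\nabla\Loss(\cdot)$ is identically $0$; and for an interior chain coordinate $m$ (with $4\le m\le L+1$) the $m$-th coordinate of $\nabla\Loss(\bw)$ is a telescoping difference proportional to $\lossPrime(\bx_{m-1}^\top\bw)-\lossPrime(\bx_{m-2}^\top\bw)$, which vanishes whenever $w^{(m-1)}=w^{(m)}=w^{(m+1)}=0$. Since $\nabla\Loss(\bw_0)$ is a scalar multiple of $\sum_i\bx_i$, which telescopes to a combination of $\be_2,\be_3,\be_{L+2}$ only, induction on $t$ shows that $\mathrm{supp}(\bw_t)\subseteq\{1\}\cup\{2,\dots,t+2\}\cup\{L+3-t,\dots,L+2\}$; i.e.\ the active set grows by at most one coordinate at each of its two interior frontiers per step, with everything strictly inside the remaining ``gap'' pinned at $0$. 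Consequently, as long as the gap $\{t+3,\dots,L+2-t\}$ contains two consecutive integers, i.e.\ $L-2t\ge 2$, some chain point satisfies $\bx_j^\top\bw_t=0$, so $\bw_t$ fails to separate the dataset. Therefore $\min_i y_i\bx_i^\top\bw_t>0$ forces $t\ge\lfloor L/2\rfloor$, which --- with $c$ chosen appropriately and the usual care with the floor functions --- is at least $\min\{n/4,\,\tfrac{1}{8}\gamma^{-2/3}\}$ for all $0<\gamma<1/8$ and $n\ge 4$.

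\emph{Main obstacle.} The delicate part is the span induction: one must check by hand the behaviour at the end coordinates ($1,2,3$ on the left and $L,L+1,L+2$ on the right), confirm that the far frontier opens at the very first step (via the telescoping of $\sum_i\bx_i$), verify that the two frontiers do not meet before $t\approx L/2$, and --- most importantly --- that the interior cancellation is \emph{exact}, which holds precisely because the relevant arguments of $\lossPrime$ are genuinely equal to $0$ rather than merely small. The argument uses only that $\lossPrime$ is a single-valued (Clarke) selection, never the smoothness, convexity, or monotonicity of $\Loss$, matching \Cref{def.first-order-batch}; a loss that propagates information more slowly only makes separation harder, so the bound is unaffected. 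The rest is purely arithmetic: picking $c$ and handling the floors so that both $\tfrac{1}{\sqrt2}L^{-3/2}\ge\gamma$ and $\lfloor L/2\rfloor\ge\min\{n/4,\tfrac{1}{8}\gamma^{-2/3}\}$ hold under the stated hypotheses --- routine, but requiring the same care as in \Cref{thm.lower.burn.in.maintext}.
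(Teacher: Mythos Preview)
Your proposal is correct and takes essentially the same approach as the paper: a plain chain of length $\Theta(\gamma^{-2/3})$ whose margin witness is a linear staircase (so the normalized margin is $\Theta(L^{-3/2})$), combined with the two-sided span induction showing each gradient step advances the active frontier by at most one coordinate from each end. The only cosmetic differences are the sign convention on the chain vectors, the direction of the staircase, and your choice to pad the remaining $n-L$ slots with copies of $\bx_1$ rather than the fresh endpoint vector $\tfrac{1}{\sqrt{2}}\be_{k+2}$ used in the paper---which shifts your left frontier by one extra coordinate at the first step but is correctly absorbed into your support formula.
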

\begin{proof}[Proof of~\Cref{thm.lower.burn.in.alternate}]

For $0 < \gamma < 1/8$ and $n \geq 2,$ we define $d := \lfloor \gamma^{-2/3}\rfloor \geq 4,$ where $\lfloor \cdot \rfloor$ is the floor function. 
Let $(\be_i)_{i=1}^d$ be a set of standard basis vectors for $\R^d$.
Note that all the first-order methods defined in \Cref{def.first-order-batch} are rotational invariants. 
Moreover, the criterion for all data classified correctly is also rotation invariant. Therefore, without loss of generality, 
we can assume $\bw_0$ is propotional to $\be_1$, that is, 
$\bw_0\in \mathsf{Lin}\{\be_1\}$.
   
Let $k := \min\{n,\, d-2\}.$ 
We construct a hard dataset $(\bx_{i}, y_{i})_{i=1}^n$ as follows. Let $y_i=1$ for all $i\in[n]$. 
For $j=1,\dots,k$, let 
\begin{align*}
    \bx_i := -\frac{1}{\sqrt{2}}\be_{j+1} + \frac{1}{\sqrt{2}} \be_{j+2}.
\end{align*}
Let the remaining $\bx_i$'s be
\begin{align*}
   \bx_i:= \frac{1}{\sqrt{2}} \be_{k+2}\quad \text{for}\ \ k+1 \le i\le n \ \ \text{if} \ \ n \geq k+1.
\end{align*}
Note that $j\le k\le  d-2$, thus such $\bx_i$'s are well defined.
Also note that $\|\bx_i\|\le 1$ for $i=1,\dots,n$.
Moreover, for the unit vector
\[
\wStar := \sqrt{\frac{6}{d(d+1)(2d+1)}}  \l(1,2,\dots,d\r)^{\top}, 
\]
we have 
\[
y_i \bx_i^\top\wStar = \sqrt{\frac{3}{d(d+1)(2d+1)}} \geq 
\sqrt{\frac{1}{d^3}} \geq
\gamma,\quad \text{for}\ \ i=1\dots,n.
\]
Thus $(\bx_i, y_i)_{i=1}^n$ satisfies \Cref{assumption.data.maintext}.

For a vector $\bw \in \R^d$, we also write it as $\bw := (w^{(1)}, w^{(2)},\dots,w^{(d)})^\top$.
Then, the objective function can be written as
\begin{align*}
    \Loss\l(\bw\r) 
    = \frac{1}{n}\sum_{i=1}^n \loss\big(\bw^\top \bx_i\big)
    = \frac{1}{n} \Bigg[\sum_{j=1}^k \loss\bigg(-\frac{1}{\sqrt{2}} w^{(j+1)} + \frac{1}{\sqrt{2}} w^{(j+2)}\bigg) + \big(n-k\big)  \loss\bigg(\frac{1}{\sqrt{2}} w^{(k+2)}\bigg)\Bigg].
\end{align*}
Thus the $j$-th coordinate of $\nabla \Loss(\bw)$ is given by
\begin{align*}
    \big[\nabla \Loss(\bw)\big]_j = 
    \begin{dcases}
    0 & j=1,\\
        -\frac{1}{\sqrt{2}n}   \lossPrime \bigg(-\frac{1}{\sqrt{2}} w^{(2)} + \frac{1}{\sqrt{2}} w^{(3)}\bigg) & j = 2,\\
        \frac{1}{\sqrt{2}n} \bigg[\lossPrime\bigg(-\frac{1}{\sqrt{2}} w^{(j-1)} + \frac{1}{\sqrt{2}} w^{(j)}\bigg) - \lossPrime\bigg(-\frac{1}{\sqrt{2}} w^{(j)} + \frac{1}{\sqrt{2}}w^{(j+1)}\bigg)\bigg] & 3 \leq j \leq k+1,\\
        \frac{1}{\sqrt{2}n} \bigg[ \big(n-k\big) \lossPrime\bigg(\frac{1}{\sqrt{2}}w^{(k+2)}\bigg)-\lossPrime\bigg(-\frac{1}{\sqrt{2}} w^{(k+1)} +\frac{1}{\sqrt{2}}w^{(k+2)}\bigg)\bigg] & j = k+2, \\
        0 &  \text{otherwise}.
    \end{dcases}
\end{align*}
Consider a sequence $(\bw_s)_{s=0}^t$ generated by a first-order method according to \Cref{def.first-order-batch}. 
Since $\bw_0\in \mathsf{Lin}\{\be_1\}$, the gradient at $\bw_0$ vanishes in all coordinates except the second and the $(k+2)$-th coordinates. 
Therefore we have 
\[
\bw_1 \in \mathsf{Lin}\{\be_1,\, \be_2,\, \be_{k+2}\}.
\]
Then the gradient at $\bw_1$ vanishes in all coordinates except the second, third, $(k+1)$-th, and $(k+2)$-th coordinates.
Therefore we have 
\[
\bw_2 \in \mathsf{Lin}\{\be_1,\, \be_2,\, \be_3,\, \be_{k-1},\, \be_{k+2}\}.
\]
By induction, we conclude that for $t \leq t_0-2$ with $t_0:= \lfloor (k+1)/2\rfloor,$ it holds that
\begin{equation*}
    \bw_t \in \mathsf{Lin}\{\be_1, \dots, \be_{t+1}, \be_{k+3-t}, \dots, \be_{k+2}\}.
\end{equation*}
So for all $(\bw_s)_{s=0}^{t}$, their $t_0$-th and $(t_0+1)$-th coordinates must be zero. 
By our dataset construction, there exists $i\le 2^k-1$ such that
\begin{equation*}
   y_{i} \bx_{i}^\top \bw_k= 0,\quad \text{for all}\ k=0,\dots,t.
\end{equation*}
This means that the dataset cannot be separated by any of $(\bw_k)_{k=0}^{t}$.
Thus, for the first-order method to output a linear separator, we must have 
\begin{align*}
    t &\geq t_0-1 = \l\lfloor\frac{k-1}{2}\r\rfloor 
    \geq \frac{k}{2} -1 
    \geq \min\bigg\{\frac{n}{2}-1, \frac{d}{2}-2\bigg\} 
    \geq \min\bigg\{\frac{n}{4}, \frac{d}{4}\bigg\} \\
    &\geq \min\bigg\{\frac{n}{4}, \frac{1}{4}\big(\gamma^{-2/3}-1\big)\bigg\}
    \geq \min\bigg\{\frac{n}{4}, \frac{1}{8\gamma^{2/3}}\bigg\}
\end{align*}
This completes the proof.
\end{proof}

\section{A Generalization of Theorem~\ref{thm.NN.maintext} and Its Proof}\label{appendix.proof.NN} 
In this section, we generalize~\Cref{thm.NN.maintext} to more general activation functions and present its proof.
First, we provide general assumptions on the activation functions.
\begin{assumption}[Activation function]\label{assumption.activation.functions.maintext}
    In two-layer neural networks \eqref{eqn.def.NN.maintext}, let the activation function $\sigma(z): \R \to \R$ be a locally Lipschitz function.
    For each $z,$ let $\sigma^{\prime}(z)$ be a unique element from the Clarke subdifferential of $\sigma(\cdot)$ at $z$ \citep{clarke1990optimization}.
    Moreover, we assume
    \begin{itemize}
        \item 
        There exists $0 < \alpha < 1$ such that $\sigma^\prime(z)\in [\alpha,1].$
        \item For any $z \in \R,$ it holds that $\l|\sigma(z) - \sigma^{\prime} z \r| \leq \kappa$ for some $\kappa > 0.$ 
    \end{itemize}
\end{assumption}

\paragraph{Examples of activation functions.}
Assumption \ref{assumption.activation.functions.maintext} holds for a broad class of leaky activations. 
For instance, let $\sigma_*(\cdot)$ be one of the following: GeLU $\sigma_{\GeLU}(x) :=  x \cdot F(x),$ where $F(x)$ is the cumulative density function of the standard Gaussian distribution; Softplus $\sigma_{\Softplus}(x) := \ln(1 + \exp(x));$ SiLU $\sigma_{\SiLU}(x) := x/(1 + \exp(-x));$ ReLU $\sigma_{\ReLU}(x) := \max\{x,0\}.$
Then, we fix some constant $0.5 < c <1$ and define
$
\sigma(x) = c \cdot x + \frac{1-c}{4} \cdot \sigma_*(x),
$
where $\sigma_*$ can be any activation function above. 
It is straightforward to check that such a ``leaky'' variant satisfies~\Cref{assumption.activation.functions.maintext} (see Example 2.1 in~\cite{cai2024large}).
Moreover, the leaky ReLU activation defined as $\sigma(z) = \max\{z,\alpha z\}$ satisfies~\Cref{assumption.activation.functions.maintext} with $\alpha$ and $\kappa = 0.$

Now, we present the improved convergence rate for two-layer neural networks. Setting $\kappa = 0$ for leaky ReLU function recovers~\Cref{thm.NN.maintext}.
\begin{theorem}[General two-layer neural networks]\label{thm.NN.general}
Consider~\Cref{eqn.GD.logistic.maintext} on~\Cref{eqn.loss.NN.maintext} with adaptive stepsizes~\Cref{eqn.lr.scheduler.maintext} for two-layer neural networks~\Cref{eqn.def.NN.maintext} under~\Cref{assumption.data.maintext} and~\Cref{assumption.activation.functions.maintext}.
Assume without loss of generality that $\bw_0 = \mathbf{0}.$
Then for \emph{every} $t \geq 1$ and $\lr > 0$, we have
\begin{equation*}
    \min_{k\leq t} \Loss(\bw_k) \leq 
        \exp\l(\kappa - \frac{(\alpha \gamma^2 (t+1))^2 - 1}{4 \gamma^2 (t+1)} \lr\r).
\end{equation*}
In particular, after $1/(\alpha \gamma^2)$ burn-in steps, for every $\lr > 0,$ we have
\begin{equation}
\min_{k\leq t} \Loss(\bw_k) \leq \exp\l(\kappa -\frac{\alpha^2 \gamma^2 \lr}{4}\r)
= \exp\l(-\Theta\l(\lr\r)\r), \quad t \geq \frac{1}{\alpha \gamma^2}.
\end{equation}
\end{theorem}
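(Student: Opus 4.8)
To prove \Cref{thm.NN.general}, the plan is to mimic the proof of \Cref{thm.LR.maintext}, replacing the two places where it exploits linearity/convexity of the predictor by the near-homogeneity of $\sigma$ guaranteed by \Cref{assumption.activation.functions.maintext}, and combining this with the two-layer machinery of \citet{cai2024large}. As in \eqref{eqn.transformed.loss.maintext}, GD with adaptive stepsizes \eqref{eqn.lr.scheduler.maintext} on \eqref{eqn.loss.NN.maintext} is GD with constant stepsize $\lr$ on $\phi(\bw):=-\loss^{-1}(\Loss(\bw))$. Write $u_i(\bw):=y_i f(\bw;\bx_i)$ and $\Psi(\bv):=-\loss^{-1}\big(\tfrac1n\sum_i\loss(v_i)\big)$, so that $\phi=\Psi\circ u$; by \Cref{lem.convex.phi}, $\Psi$ is convex, is strictly decreasing in each coordinate, and satisfies $\Psi(c\bones)=-c$ for all $c$. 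I would first record three consequences of near-homogeneity, with $z_j:=\bx^\top\bw^{(j)}$: (i) $\norm{}{\nabla_\bw f(\bw;\bx)}\le1$ (indeed $\le 1/\sqrt m$), so the concavity argument of \Cref{lemma.1.lip} applies and $\phi$ is $1$-Lipschitz; (ii) from $\innerProduct{\nabla_\bw f(\bw;\bx)}{\bw}=\tfrac1m\sum_j a_j\sigma'(z_j)z_j$ and $|\sigma(z)-\sigma'(z)z|\le\kappa$ we get $\big|\innerProduct{\nabla_\bw f(\bw;\bx)}{\bw}-f(\bw;\bx)\big|\le\kappa$; and (iii) for the ``teacher direction'' $\bu_*$ with $\bu_*^{(j)}:=a_j\wStar$, we have $y_i\innerProduct{\nabla_\bw f(\bw;\bx_i)}{\bu_*}\ge\alpha\gamma$ for every $\bw$ --- using $\sigma'\ge\alpha$ and $y_i\bx_i^\top\wStar\ge\gamma$ --- which is where the factor $\alpha$ enters.

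With these facts, run the split-optimization argument of \Cref{sec.proof.sketch.maintext} with comparator $\bu:=\bu_1+\bu_2$, where $\bu_1,\bu_2$ are positive multiples of $\bu_*$ and $\bu_2$ is scaled proportionally to $\lr$: expanding $\norm{}{\bw_{t+1}-\bu}^2$ and splitting the cross term as in \eqref{eqn.a}--\eqref{eqn.b}, the choice of $\bu_2$ forces $2\innerProduct{\nabla\phi(\bw_t)}{\bu_2}+\lr\norm{}{\nabla\phi(\bw_t)}^2\le0$ --- the analogue of \Cref{lem.key.maintext}, now using (iii) and $\norm{}{\nabla\phi}\le1$ in place of $\norm{}{\bx_i}\le1$ and the linear margin. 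The crux, and the step I expect to be the main obstacle, is replacing the plain convexity inequality $\innerProduct{\nabla\phi(\bw)}{\bw-\bu_1}\ge\phi(\bw)-\phi(\bu_1)$ used for logistic regression, since $\phi$ is genuinely non-convex on the network. Instead I would establish a ``near-convexity around the teacher'' estimate: the chain rule gives $\nabla\phi(\bw)=\sum_i\partial_i\Psi(u(\bw))\,y_i\nabla_\bw f(\bw;\bx_i)$ with each $\partial_i\Psi(u(\bw))<0$; bounding each bracket by $y_i\innerProduct{\nabla_\bw f(\bw;\bx_i)}{\bw}-y_i\innerProduct{\nabla_\bw f(\bw;\bx_i)}{\bu_1}\le u_i(\bw)+\kappa-\alpha\gamma s_1$ via (ii)--(iii) (with $s_1>0$ the teacher-scale of $\bu_1$), and then using convexity of $\Psi$ evaluated at the shifted point $(\alpha\gamma s_1-\kappa)\bones$ together with $\Psi(c\bones)=-c$, one obtains
\begin{equation*}
\innerProduct{\nabla\phi(\bw)}{\bw-\bu_1}\ \ge\ \Psi\big(u(\bw)\big)-\Psi\big((\alpha\gamma s_1-\kappa)\bones\big)\ =\ \phi(\bw)+\alpha\gamma s_1-\kappa.
\end{equation*}
The point is that the outer convexity and coordinatewise monotonicity of $\Psi$ let the $\kappa$-error and the teacher margin be absorbed into a single uniform shift of the margin vector, even though the inner map $u(\cdot)$ is non-convex.

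Telescoping $\norm{}{\bw_{t+1}-\bu}^2\le\norm{}{\bw_t-\bu}^2+2\lr\innerProduct{\nabla\phi(\bw_t)}{\bu_1-\bw_t}$ with $\bw_0=\mathbf{0}$ then gives $\tfrac{1}{t+1}\sum_{k=0}^t\phi(\bw_k)\le\kappa-\alpha\gamma s_1+\tfrac{\norm{}{\bu_1+\bu_2}^2}{2\lr(t+1)}$; since $\phi$ is not convex in $\bw$ we cannot pass to $\barw_t$ and must use $\min_{k\le t}\phi(\bw_k)\le\tfrac{1}{t+1}\sum_{k=0}^t\phi(\bw_k)$ instead, which is exactly why the statement controls $\min_{k\le t}\Loss(\bw_k)$. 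Choosing the teacher-scale $s_1$ to balance the two $s_1$-dependent terms --- a slightly non-optimal choice simplified via $(x-1)^2\ge0$ as in the proof of \Cref{thm.LR.maintext}, with the constants (including the mild dependence on the fixed width $m$) propagated through the split-optimization as in \citet{cai2024large} --- and converting back through $\Loss(\bw_k)=\loss(-\phi(\bw_k))$ with $\loss$ decreasing yields $\min_{k\le t}\Loss(\bw_k)\le\exp\big(\kappa-\tfrac{(\alpha\gamma^2(t+1))^2-1}{4\gamma^2(t+1)}\lr\big)$; the burn-in form follows because $t\ge 1/(\alpha\gamma^2)$ makes $\alpha^2\gamma^4 t(t+1)\ge1$. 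Setting $\kappa=0$ for leaky ReLU recovers \Cref{thm.NN.maintext}. Beyond the near-convexity inequality the remaining work is this bookkeeping; in particular none of the steps uses monotonicity of the risk, so the conclusion holds for \emph{every} $\lr>0$.
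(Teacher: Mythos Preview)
Your proposal is correct and follows essentially the same approach as the paper: the ``near-convexity around the teacher'' inequality you highlight as the crux is exactly the paper's \Cref{lem.NN.2.maintext} (proved there via the same chain-rule decomposition, the near-homogeneity bound $|\sigma(z)-\sigma'(z)z|\le\kappa$, and the convexity of $\Psi$), and the remaining pieces---the split-optimization with $\bu=\bu_1+\bu_2$, the Lipschitz-plus-margin control of the $\bu_2$ term (the paper's \Cref{lem.NN.1.maintext}), telescoping, and replacing the average by $\min_{k\le t}$ because $\phi$ is non-convex---all match. The only cosmetic difference is that the paper organizes the $\bu_2$-term bound blockwise (absorbing $\sigma'$ into $g_{i,j}$ so the raw margin $\gamma$ appears there rather than $\alpha\gamma$), whereas you work with the global $\|\nabla\phi\|\le1$ bound and fact~(iii); both routes close the inequality with the same constants in the final statement.
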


Before we prove~\Cref{thm.NN.general}, we first state and prove two lemmas.
\begin{lemma}\label{lem.NN.1.maintext}
    Under~\Cref{assumption.data.maintext}, for every $\bw \in \R^d$, it holds that
    \begin{equation*}
        I_2(\bw) := 2 \innerProduct{m \cdot \nabla \phi(\bw)}{\bu_2} + \lr \norm{}{m \cdot \nabla\phi(\bw)}^2 \leq 0 \ \ \text{for} \ \ \bu_2^{(j)} := \frac{a_j \lr}{2 \gamma} \wStar, \ j = 1,2,...m.
    \end{equation*}
\end{lemma}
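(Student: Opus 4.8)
The plan is to transcribe the proof of \Cref{lem.key.maintext} to the neuron-block setting. First I would compute gradients: writing $\bw=(\bw^{(1)},\dots,\bw^{(m)})$ and $\phi(\bw)=-\loss^{-1}(\Loss(\bw))$, the chain rule gives $\nabla\phi(\bw)=\lambda\,\nabla\Loss(\bw)$ with $\lambda:=(-\loss^{-1})'(\Loss(\bw))>0$, and the $j$-th block satisfies
\[
m\cdot\nabla_{\bw^{(j)}}\Loss(\bw)=\frac1n\sum_{i=1}^n g_i\,y_i\,a_j\,s_{ij}\,\bx_i,\qquad g_i:=\loss'\big(y_i f(\bw;\bx_i)\big)<0,\quad s_{ij}:=\sigma'\big(\bx_i^\top\bw^{(j)}\big)\in[\alpha,1].
\]
Plugging in $\bu_2^{(j)}=(a_j\lr/(2\gamma))\wStar$ and using $a_j^2=1$,
\[
2\innerProduct{m\cdot\nabla\phi(\bw)}{\bu_2}=\frac{\lambda\lr}{\gamma n}\sum_{j=1}^m\sum_{i=1}^n g_i\,s_{ij}\,\big(y_i\bx_i^\top\wStar\big)\le-\frac{\lambda\lr}{n}\sum_{j=1}^m\sum_{i=1}^n|g_i|\,s_{ij},
\]
where the inequality uses $g_i s_{ij}<0$ together with $y_i\bx_i^\top\wStar\ge\gamma$ from \Cref{assumption.data.maintext}. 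Crucially I would \emph{not} bound $s_{ij}\le 1$ here: only the margin $\gamma$ should cancel, and the factors $s_{ij}$ must be carried through so they line up with the next estimate.

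For the quadratic term, the triangle inequality with $\norm{}{\bx_i}\le 1$ and $|y_i|=|a_j|=1$ gives $\norm{}{m\cdot\nabla_{\bw^{(j)}}\phi(\bw)}\le\lambda T_j$, where $T_j:=\frac1n\sum_{i=1}^n|g_i|\,s_{ij}\ge 0$, hence $\lr\norm{}{m\cdot\nabla\phi(\bw)}^2\le\lr\lambda^2\sum_j T_j^2$. Adding the two estimates,
\[
I_2(\bw)\le-\lr\lambda\sum_{j=1}^m T_j+\lr\lambda^2\sum_{j=1}^m T_j^2=\lr\lambda\sum_{j=1}^m T_j\big(\lambda T_j-1\big),
\]
so it suffices to show $\lambda T_j\le 1$ for every $j$. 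Since $s_{ij}\le 1$ we have $\lambda T_j\le\lambda\cdot\frac1n\sum_i|g_i|$, and writing $\loss_i:=\loss(y_i f(\bw;\bx_i))$ and $h(z):=(-\loss')(\loss^{-1}(z))$ this last quantity equals $\big(\frac1n\sum_i h(\loss_i)\big)\big/h\big(\frac1n\sum_i\loss_i\big)$ (using $|g_i|=h(\loss_i)$ and $\lambda=1/h(\Loss(\bw))$); by concavity of $h$ for the exponential and logistic losses — exactly the computation behind \Cref{lemma.1.lip} — Jensen's inequality bounds it by $1$. Thus every summand is nonpositive and $I_2(\bw)\le 0$.

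The only real obstacle is this bookkeeping in the cross term: the reflexive bound $s_{ij}\le 1$ there would leave a spurious factor $\alpha$ and break the cancellation, whereas cancelling only $\gamma$ and keeping the $s_{ij}$'s makes the cross term and the quadratic term match term-by-term in $j$. Everything else is a direct adaptation of the logistic-regression argument; note that $\alpha$ (and $\kappa$) do not enter this lemma — they appear in the final convergence rate only through the companion bound on $\phi(\bu_1)$.
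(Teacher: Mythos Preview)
Your proof is correct and follows essentially the same route as the paper's: both keep the activation derivatives $s_{ij}$ in the cross term, bound each block of the quadratic term by $\lambda T_j$ with $T_j=\tfrac1n\sum_i|g_i|s_{ij}$, and close using $\lambda T_j\le 1$ via the concavity of $h(z)=(-\loss')(\loss^{-1}(z))$ exactly as in \Cref{lemma.1.lip}. The only cosmetic difference is ordering---the paper first uses $\lambda T_j\le 1$ to write $\|m\nabla_{\bw^{(j)}}\phi\|^2\le\|m\nabla_{\bw^{(j)}}\phi\|\le\lambda T_j$ and then matches the cross term, whereas you bound $\|\cdot\|^2\le(\lambda T_j)^2$ first and factor $T_j(\lambda T_j-1)\le 0$ afterward.
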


\begin{proof}[Proof of \Cref{lem.NN.1.maintext}]
    Define
    $
        g_{i,j} := \lossPrime(y_i f(\bw,\bx_i)) \cdot \sigma^{\prime}(\bx_i^\top \bw^{(j)}) \leq 0
    $
    for $i \in [n]$ and $j \in [m].$
    We know $|g_{i,j}| \leq |\lossPrime(y_i f(\bw,\bx_i))|$ since $\sigma^{\prime}(\cdot) \leq 1.$
    Then, for $j \in [m]$, we have
    \begin{align*}
        \norm{}{m \cdot \nabla_{\bw^{(j)}} \phi(\bw)}
        = \norm{}{\l(-\loss^{-1}\r)^{\prime}\l(\Loss(\bw)\r) \cdot \frac{1}{n} \sum_{i=1}^n a_j g_{i,j} y_i \bx_i} 
        \leq \l|\l(-\loss^{-1}\r)^{\prime}\l(\Loss(\bw)\r)\r| \cdot \frac{1}{n} \sum_{i=1}^n |\lossPrime(y_i f(\bw,\bx_i))|.
    \end{align*}
    Let $\loss_i := \loss(y_i f(\bw, \bx_i)).$
    We arrange the term above as
    \begin{align*}
        \l|\l(-\loss^{-1}\r)^{\prime}\l(\Loss(\bw)\r)\r| \cdot \frac{1}{n} \sum_{i=1}^n |\lossPrime(y_i f(\bw,\bx_i))|
        = \frac{\frac{1}{n} \sum_{i=1}^n \l(-\lossPrime\r) \l(\loss^{-1}\l(\loss_i\r)\r)}{\l(-\lossPrime\r) \l(\loss^{-1} \l(\frac{1}{n} \sum_{i=1}^n \loss_i\r)\r)}
        = \frac{\frac{1}{n} \sum_{=1}^n h\l(\loss_i\r)}{h\l(\frac{1}{n} \sum_{i=1}^n \loss_i\r)},
    \end{align*}
    where $h(\cdot)$ is defined as $h(z) := \l(-\lossPrime\r) \l(\loss^{-1}\l(z\r) \r).$
    For both losses, $h(\cdot)$ is concave on $z > 0$ (see~\Cref{lemma.1.lip} in~\Cref{appendix.logistic.result}). Therefore, we have $\norm{}{m \nabla_{\bw^{(j)}} \phi(\bw)}^2 \leq 1$ for $j \in [m].$ Apply this upper bound of $\norm{}{\nabla \phi(\bw)}$, and recall~\Cref{assumption.data.maintext}, we have
    \begin{align*}
        I_2(\bw) &= \frac{2}{n} \l(-\loss^{-1}\r)^{\prime}\l(\Loss(\bw)\r) \cdot \sum_{j=1}^m \sum_{i=1}^n g_{i,j} \cdot \norm{}{\bu_2^{(j)}} \cdot y_i \bx_i^\top \wStar + \lr \cdot \sum_{j=1}^m \norm{}{m \nabla_{\bw^{(j)}} \phi\l(\bw\r)}^2 \\
        &\leq \frac{2}{n} \l(-\loss^{-1}\r)^{\prime}\l(\Loss(\bw)\r) \cdot \sum_{j=1}^m \sum_{i=1}^n g_{i,j} \cdot \norm{}{\bu_2^{(j)}} \cdot y_i \bx_i^\top \wStar 
        + \lr \cdot \sum_{j=1}^m \norm{}{m \nabla_{\bw^{(j)}} \phi\l(\bw\r)} \\
        &\leq \frac{-2\gamma}{n} \l(-\loss^{-1}\r)^{\prime}\l(\Loss(\bw)\r) \cdot \sum_{j=1}^m \sum_{i=1}^n \l|g_{i,j}\r| \cdot \norm{}{\bu_2^{(j)}} 
        + \lr \l(-\loss^{-1}\r)^{\prime}\l(\Loss(\bw)\r) \cdot \frac{1}{n} \sum_{j=1}^m \sum_{i=1}^n \l|g_{i,j}\r| \\
        &= \l(-\loss^{-1}\r)^{\prime}\l(\Loss(\bw)\r) \cdot \frac{1}{n} \sum_{j=1}^m \sum_{i=1}^n \l|g_{i,j}\r| \cdot \l(-2\gamma \norm{}{\bu_2^{(j)}} + \lr\r).
    \end{align*}
    Invoking the definition of $\bu_2,$ we complete the proof.
\end{proof}

\begin{lemma}\label{lem.NN.2.maintext}
    Under~\Cref{assumption.data.maintext}, if $\bu_1^{(j)} \propto a_j \cdot \wStar$ for $j=1,2,...,m,$ then for any $\bw \in \R^d,$
    \begin{equation*}
        I_1(\bw) := \innerProduct{\nabla \phi(\bw)}{\bu_1 - \bw} \leq \kappa - \frac{\alpha \gamma}{m} \sum_{j=1}^m \norm{}{\bu_1^{(j)}} - \phi\l(\bw\r).
    \end{equation*}
\end{lemma}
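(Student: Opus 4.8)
The plan is to mimic the linear-case argument behind \Cref{eqn.b} (there, $\innerProduct{\nabla\phi(\bw)}{\bu_1-\bw}\le\phi(\bu_1)-\phi(\bw)\le-\gamma\norm{}{\bu_1}-\phi(\bw)$), but since $\phi(\cdot)$ is no longer convex for a two-layer network, I would verify the inequality by a direct computation; this is where the degradation from $\gamma$ to $\alpha\gamma$ (due to $\sigma'\in[\alpha,1]$) and the additive $\kappa$ (due to near-homogeneity $|\sigma(z)-\sigma'(z)z|\le\kappa$) enter.

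First I would expand $\nabla_{\bw^{(j)}}\phi(\bw)=(-\loss^{-1})'(\Loss(\bw))\cdot\frac1{nm}\sum_{i=1}^n\loss'(y_if(\bw;\bx_i))\,y_i\,a_j\,\sigma'(\bx_i^\top\bw^{(j)})\,\bx_i$ and substitute into $I_1(\bw)=\sum_{j=1}^m\innerProduct{\nabla_{\bw^{(j)}}\phi(\bw)}{\bu_1^{(j)}-\bw^{(j)}}$, splitting each summand into a ``comparator'' piece and a ``current-iterate'' piece. For the comparator piece I would use $\bu_1^{(j)}=\norm{}{\bu_1^{(j)}}a_j\wStar$ together with $a_j^2=1$ to get $a_jy_i\bx_i^\top\bu_1^{(j)}=\norm{}{\bu_1^{(j)}}\,y_i\bx_i^\top\wStar\ge\gamma\norm{}{\bu_1^{(j)}}$, and then, using $\sigma'(\cdot)\ge\alpha$ and $\loss'(\cdot)\le0$, bound this piece by $-\alpha\gamma\cdot\frac1m\sum_j\norm{}{\bu_1^{(j)}}$ times the non-negative weight $(-\loss^{-1})'(\Loss(\bw))\cdot\frac1n\sum_i|\loss'(y_if(\bw;\bx_i))|$, which is $\le1$ by the concavity fact used in the proof of \Cref{lemma.1.lip} (and is exactly $1$ for the exponential loss). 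For the current-iterate piece I would invoke near-homogeneity in the form $\frac1m\sum_j a_j\sigma'(\bx_i^\top\bw^{(j)})\bx_i^\top\bw^{(j)}=f(\bw;\bx_i)+E_i$ with $|E_i|\le\kappa$; this rewrites the piece as $-(-\loss^{-1})'(\Loss(\bw))\cdot\frac1n\sum_i\loss'(y_if(\bw;\bx_i))\,y_if(\bw;\bx_i)$ up to an additive error controlled by $\kappa\cdot(-\loss^{-1})'(\Loss(\bw))\frac1n\sum_i|\loss'(\cdots)|\le\kappa$. Finally, writing $y_if(\bw;\bx_i)=\loss^{-1}(\loss_i)$ with $\loss_i:=\loss(y_if(\bw;\bx_i))$ and $(-\loss^{-1})'(\Loss)=1/h(\Loss)$ for $h$ as in \Cref{lemma.1.lip}, the main leftover equals $\frac1{h(\Loss(\bw))}\cdot\frac1n\sum_i h(\loss_i)\loss^{-1}(\loss_i)$, and a Jensen inequality for the concave map $z\mapsto h(z)\loss^{-1}(z)$ (for the exponential loss this is concavity of $z\mapsto-z\ln z$, i.e.\ convexity of $z\mapsto z\ln z$) bounds it by $\loss^{-1}(\Loss(\bw))=-\phi(\bw)$. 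Assembling the three bounds yields the claim.

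The main obstacle is the bookkeeping that makes the three pieces collapse into exactly $\kappa-\frac{\alpha\gamma}{m}\sum_j\norm{}{\bu_1^{(j)}}-\phi(\bw)$: one must track the common non-negative weight $(-\loss^{-1})'(\Loss(\bw))\cdot\frac1n\sum_i|\loss'(y_if(\bw;\bx_i))|$ (which for the logistic loss is only $\le1$, not $=1$) simultaneously across the comparator piece, the $\kappa$-error, and the $-\phi(\bw)$ piece, and order the Jensen/concavity steps so that the weight cancels rather than leaving a residual; for the exponential loss this weight is identically $1$ and the argument is transparent, while the logistic loss (and, in the general-loss statement, any $C_\loss$-Lipschitz loss) requires the concavity of $h$ and of $z\mapsto h(z)\loss^{-1}(z)$ to be used in concert. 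A secondary point is merely that, $\phi(\cdot)$ being nonconvex here, none of this can be shortcut via a convexity inequality as in the linear case.
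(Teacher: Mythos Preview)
Your decomposition into comparator, near-homogeneity error, and ``main leftover'' is the right starting point, but the final Jensen step has a genuine gap. For the logistic loss the map $g(z) := h(z)\,\loss^{-1}(z) = -(1-e^{-z})\ln(e^z-1)$ is \emph{not} concave on $(0,\infty)$: a direct computation gives $g''(z) = e^{-z}\ln(e^z-1) - 1/(e^z-1)$, which is positive for instance at $z=2$. So the inequality $\frac{1}{n}\sum_i g(\loss_i) \le g(\Loss)$ that you need does not hold in general. Moreover, even granting the Jensen step, the ``weight cancellation'' you flag as the main obstacle cannot be made to work: the comparator piece is only bounded by $-\big(\frac{\alpha\gamma}{m}\sum_j\|\bu_1^{(j)}\|\big)\cdot W$ with $W\le 1$, and since this quantity is negative the resulting bound is \emph{weaker} than $-\frac{\alpha\gamma}{m}\sum_j\|\bu_1^{(j)}\|$. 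Because $\|\bu_1^{(j)}\|$ is later chosen proportional to $t$ in the proof of \Cref{thm.NN.general}, no reordering of concavity steps can absorb the shortfall $(1-W)\cdot\frac{\alpha\gamma}{m}\sum_j\|\bu_1^{(j)}\|$.

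The paper resolves both issues with the one idea you explicitly ruled out: a convexity inequality. While $\phi(\bw)$ is indeed nonconvex in $\bw$, it factors as $\phi(\bw) = \psi(\bz)$ with $z_i := y_i f(\bw;\bx_i)$ and $\psi(\bz) := -\loss^{-1}\big(\frac{1}{n}\sum_i\loss(z_i)\big)$, and $\psi$ \emph{is} convex in $\bz$ (this is exactly \Cref{lem.convex.phi}). After bounding $J_i \ge c := \frac{\alpha\gamma}{m}\sum_j\|\bu_1^{(j)}\| - \kappa$ uniformly in $i$ (your comparator and near-homogeneity estimates combined, before separating them), one recognizes $I_1(\bw) \le \innerProduct{\nabla\psi(\bz)}{c\,\bones_n - \bz}$, and convexity of $\psi$ gives directly $\le \psi(c\,\bones_n) - \psi(\bz) = -c - \phi(\bw)$. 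The key point you missed is that the network's nonlinearity lives entirely in the map $\bw\mapsto\bz$; once you pass to $\bz$-coordinates the outer function is still convex, so the linear-case argument from \Cref{eqn.b} goes through verbatim there.
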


\begin{proof}[Proof of \Cref{lem.NN.2.maintext}]
    From the definition of the transformed loss function $\phi(\cdot),$ we have
    \begin{align*}
        I_1(\bw) 
        &= \l(-\loss^{-1}\r)^{\prime} \l(\Loss(\bw)\r) \cdot \frac{1}{n} \sum_{i=1}^n \lossPrime\l(y_i f\l(\bw;\bx_i\r)\r) \cdot \frac{1}{m}\sum_{j=1}^m y_i a_j \sigma^{\prime} \l(\bx_i^\top \bw^{(j)}\r) \cdot \bx_i^\top \l(\bu_1^{(j)} - \bw^{(j)}\r) \\
        &= \l(-\loss^{-1}\r)^{\prime} \l(\Loss(\bw)\r) \cdot \frac{1}{n} \sum_{i=1}^n \lossPrime\l(y_i f\l(\bw;\bx_i\r)\r) \cdot \l[J_{i} - y_i f\l(\bw,\bx_i\r)\r],
    \end{align*}
    where 
    \begin{align*}
        J_{i} 
        &:= \frac{1}{m}\sum_{j=1}^m a_j \l[
        \sigma^{\prime}\l(\bx_i^\top \bw^{(j)}\r) y_i \bx_i^\top \bu_1^{(j)} \r]
        + \frac{1}{m}\sum_{j=1}^m y_i a_j \underbrace{\l[ \sigma\l(\bx_i^\top \bw^{(j)}\r)
        - \sigma^{\prime} \l(\bx_i^\top \bw^{(j)}\r) \bx_i^\top \bw^{(j)}\r]}_{\l|\cdot\r| \leq \kappa} \\
        &\geq \frac{1}{m}\sum_{j=1}^m a_j \l[
        \sigma^{\prime}\l(\bx_i^\top \bw^{(j)}\r) y_i \bx_i^\top \bu_1^{(j)} \r] - \kappa
        \geq \frac{\alpha \gamma}{m} \sum_{j=1}^m \norm{}{\bu_1^{(j)}} - \kappa,
    \end{align*}
    where the last inequality utilizes $\sigma^{\prime}(\cdot) \geq \alpha, \bu^{(j)}_1 \propto a_j \wStar,$ and~\Cref{assumption.data.maintext}.
    Notice $\lossPrime(\cdot) \leq 0,$ we define $\bz := \l(y_1 f\l(\bw;\bx_1\r), y_2 f\l(\bw,\bx_2\r),..., y_n f\l(\bw,\bx_n\r)\r)^\top$ and $\bones_n = \l(1,1,...,1\r)^\top \in \R^n.$ 
    We also define
    $\psi(\bz) = (-\loss^{-1})(1/n \cdot \sum_{i=1}^n \loss(\bz_i)).$ 
    From the definition, we know $\phi(\bw) = \psi(\bz)$ and $\psi(\cdot)$ is convex for both exponential and logistic loss (Theorem 5.1 in \citep{ji2021characterizing}, also see~\Cref{lem.convex.phi}). Therefore, 
    \begin{align*}
         I_1(\bw) 
         &\leq \l(-\loss^{-1}\r)^{\prime} \l(\Loss(\bw)\r) \cdot \frac{1}{n} \sum_{i=1}^n \lossPrime\l(y_i f\l(\bw;\bx_i\r)\r) \cdot \l(\frac{\alpha \gamma}{m} \sum_{j=1}^m \norm{}{\bu_1^{(j)}} -\kappa - y_i f\l(\bw,\bx_i\r)\r) \\
         &= \innerProduct{\nabla \psi \l(\bz\r)}{\l(\frac{\alpha \gamma}{m} \sum_{j=1}^m \norm{}{\bu_1^{(j)}} -\kappa\r)\cdot \bones_n - \bz}
         \leq \psi\l(\l(\frac{\alpha \gamma}{m} \sum_{j=1}^m \norm{}{\bu_1^{(j)}} -\kappa\r)\cdot \bones_n\r) - \psi\l(\bz\r) \\
         &= \kappa - \frac{\alpha \gamma}{m} \sum_{j=1}^m \norm{}{\bu_1^{(j)}} - \phi\l(\bw\r).
    \end{align*}
    This completes the proof.
\end{proof}

Now we present the proof of~\Cref{thm.NN.general}
\begin{proof}[Proof of~\Cref{thm.NN.general}]
    As explained in \Cref{eqn.transformed.loss.maintext}, it is equivalent to considering GD with a constant stepsize under a transformed objective $\phi(\cdot)$. 
    We then use the split optimization technique developed by \citet{wu2024large} and \citet{cai2024large}.
    Specifically, consider a comparator $\bu = \bu_1 + \bu_2 \in \R^{dm}$:
        \begin{equation*}
            \bu_1 = \begin{pmatrix}
                \bu_1^{(1)} \\ \bu_1^{(2)} \\ ... \\ \bu_1^{(m)}
            \end{pmatrix}, \quad 
            \bu_2 = \begin{pmatrix}
                \bu_2^{(1)} \\ \bu_2^{(2)} \\ ... \\ \bu_2^{(m)}
            \end{pmatrix}.
        \end{equation*}
    From GD iterate in~\Cref{eqn.transformed.loss.maintext}, we have
    \begin{align}\label{eqn.NN.split.1}
        \norm{}{\bw_{t+1} - \bu}^2
        =~& \norm{}{\bw_t - \bu}^2 
        + 2 \lr m  \innerProduct{\nabla \phi(\bw_t)}{\bu_1 - \bw_t} + \lr \l(2 \innerProduct{m \cdot \nabla \phi(\bw_t)}{\bu_2} + \lr \norm{}{m \cdot \nabla\phi(\bw_t)}^2\r) \notag \\
        \leq~& \norm{}{\bw_t - \bu}^2 + 2 \lr m \bigg(\kappa - \frac{\alpha \gamma}{m} \sum_{j=1}^m \norm{}{\bu_1^{(j)}} - \phi\l(\bw_t\r)\bigg),
    \end{align}
    where~\Cref{eqn.NN.split.1} is by the following two lemmas.
    Rearraging \eqref{eqn.NN.split.1} and telescoping the sum to obtain
    \begin{align}
        &\frac{\norm{}{\bw_t - \bu}^2}{2 \lr m (t+1)} + \frac{1}{t+1} \sum_{k=0}^{t} \phi(\bw_k) \leq \kappa - \frac{\alpha \gamma}{m} \sum_{j=1}^m \norm{}{\bu_1^{(j)}} + \frac{\norm{}{\bu}^2}{2 \lr m (t+1)}.
    \end{align}
    Further setting $\|\bu_1^{(j)}\| = \alpha \gamma \lr (t+1)/2,$ we get
    \begin{align}
        &\frac{1}{t+1} \sum_{k=0}^{t} \phi(\bw_k) \leq \kappa - \frac{\alpha \gamma}{m} \sum_{j=1}^m \norm{}{\bu_1^{(j)}} + \frac{\norm{}{\bu_1 + \bu_2}^2}{2 \lr m (t+1)}
        \leq \kappa - \frac{(\alpha \gamma^2 (t+1))^2 - 1}{4 \gamma^2 (t+1)} \lr.
    \end{align}
    We complete the proof by applying the fact that $\Loss(\cdot) = \loss\l(-\phi(\cdot)\r).$
\end{proof}

\section{Proofs for Theorem~\ref{thm.general.loss.maintext} and Example~\ref{lem.example}}\label{appendix.proof.general.loss}

\begin{proof}[Proof of \Cref{thm.general.loss.maintext}]
    The entire proof is analog to the proof of~\Cref{thm.LR.maintext} in~\Cref{sec.proof.sketch.maintext}. 
    Define $\bu = \bu_1 + \bu_2,$ where $\bu_2 = (C_\loss \lr/2 \gamma) \cdot \wStar \in \R^d,$ where $C_\loss$ is the loss-dependent constant in~\Cref{assumptions.general.loss.maintext}.
    Recall the gradient descent iterate \eqref{eqn.GD.logistic.maintext}, the learning rate scheduler \eqref{eqn.lr.scheduler.maintext}, and the definition for $\phi(\cdot).$ Then,
    \begin{align}\label{eqn.proof.general.1}
        \norm{}{\bw_{t+1} - \bu}^2
        &~= \norm{}{\bw_{t} - \bu}^2 + 2\lr \innerProduct{\nabla \phi(\bw_t)}{\bu - \bw_t} + \lr^2 \norm{}{\nabla
        \phi(\bw_t)}^2 \notag \\
        &~= \norm{}{\bw_{t} - \bu}^2 + 2\lr \innerProduct{\nabla \phi(\bw_t)}{\bu_1 - \bw_t} + \lr \l[2 \innerProduct{\nabla \phi(\bw_t)}{\bu_2} + \lr \norm{}{\nabla \phi(\bw_t)}^2 \r] \notag \\
        &~\overset{(a)}{\leq} \norm{}{\bw_{t} - \bu}^2 + 2\lr \innerProduct{\nabla \phi(\bw_t)}{\bu_1 - \bw_t}
        \overset{(b)}{\leq} \norm{}{\bw_{t} - \bu}^2 + 2\lr \l(\phi(\bu_1) - \phi(\bw_t)\r).
    \end{align}
    Here, $(a)$ is from~\Cref{lem.key.maintext} and $(b)$ is from the convexity of $\phi(\cdot)$ following~\Cref{asp.item.B}.
    Rearranging the inequality above and telescoping the sum, we have
    \begin{equation}\label{eqn.split.optimization.bound.general.loss}
        \frac{\norm{}{\bw_t - \bu}^2}{2\lr (t+1)} + \frac{1}{t+1} \sum_{k=0}^{t} \phi(\bw_k)
        \leq \phi(\bu_1)
        + \frac{\norm{}{\bu}^2}{2\lr (t+1)}.
    \end{equation}  
    Recall $\innerProduct{\wStar}{\bx_i} \geq \gamma,$ we have $\phi(\bu_1) \leq -\gamma \norm{}{\bu_1}.$
    Then, setting $\bu_1 = (\gamma \lr (t+1)/2) \cdot \bw^*$ and invoking $\bw_0 = \mathbf{0}$ gives
    \begin{align*}
        &\frac{1}{t+1} \sum_{k=0}^{t} \phi(\bw_k)
        \leq
        -\gamma \norm{}{\bu_1}
        + \frac{\norm{2}{\bu_1 + \bu_2}^2}{2\lr (t+1)} 
        \leq -\frac{\l(\gamma^2 (t+1)\r)^2 - C_\loss}{4 \gamma^2 (t+1)} \lr.
    \end{align*}
    Finally, we complete the proof by recalling the convexity of $\phi(\cdot)$ and $\Loss(\cdot) = \loss(-\phi(\cdot)).$
\end{proof}

\begin{proof}[Proof of \Cref{lem.example}]
For exponential loss and logistic loss,~\Cref{asp.item.A} and~\Cref{asp.item.B} are trivial, and~\Cref{asp.item.C} is proven by~\Cref{lemma.1.lip} and~\Cref{lem.convex.phi}.

For the polynomial loss $\loss_{\mathsf{poly}}$\citep{ji2021characterizing}, it is straightforward to verify~\Cref{asp.item.A} and~\Cref{asp.item.B} by taking first and second order derivatives. 
We know $\loss(\cdot)$ is twice continuously differentiable, and its inverse function $\loss^{-1}(z) = z^{-1/k} - 1 \ (z > 0)$ is continuously differentiable. 
To verify the convexity in~\Cref{asp.item.C}, we can use~\Cref{lem.convex.phi} and show that $\frac{\lossPrime(t)^2}{\loss(t) \lossTwoPrime(t)}$ is decreasing on $\R$ \citep[Theorem 5.1]{ji2021characterizing}.
To further verify the Lipschitzness of $\phi(\cdot)$, let's define
$h(z) := \l|\loss^{\prime}(z)\r|/(k \cdot \loss(z)^{\frac{k+1}{k}}).$
Observe that $h(\cdot)$ is continuously differentiable, $h(z) = 1$ for $z \geq 0,$ $\lim_{z \to 0^-} h(z) = 1$ and $\lim_{z \to -\infty} h(z) = 0.$ 
In addition, we can show that $h(z)$ is increasing for $z \leq 0$ by taking derivatives.
This implies $h(z) \leq 1$, and hence, $\l|\loss^{\prime}(z)\r| \leq k \cdot \loss(z)^{\frac{k+1}{k}}$, for every $z \in \R.$
Denote $z_i = y_i \bx^\top \bw$. We have 
\begin{align*}
    \norm{}{\nabla \phi(\bw)}
    &\leq \frac{1}{n} \sum_{i=1}^n \l|\loss^{\prime}(z_i)\r| \cdot \l|\l(\loss^{-1}\r)^{\prime}\l(\frac{1}{n}\sum_{i=1}^n \loss(z_i)\r)\r|
    \leq \frac{k \cdot \frac{1}{n} \sum_{i=1}^n \loss(z_i)^{\frac{k+1}{k}}}{k \cdot \l(\frac{1}{n}\sum_{i=1}^n \loss(z_i)\r)^{\frac{k+1}{k}}} \\
    &\leq \l(\frac{\max_{1\leq i \leq n} \loss(z_i)}{\frac{1}{n}\sum_{i=1}^n \loss(z_i)}\r)^{\frac{1}{k}} 
    \leq n^{1/k}.
\end{align*}
This proves $\phi(\cdot)$ satisfies~\Cref{asp.item.C} with $C_\loss = n^{1/k}.$

For the semi-circle loss~\citep{shen2005loss}, one can also verify~\Cref{asp.item.A} and~\Cref{asp.item.B} simply by taking derivatives and compute the inverse function $\loss^{-1}(z) = 1/z - z$ for $z > 0$. To verify the convexity of $\phi(\cdot),$ we use~\Cref{lem.convex.phi} and we can show the following quantity is decreaining along $\R:$
\begin{equation*}
    \frac{(\lossPrime(z))^2}{\loss(z) \lossTwoPrime(z)}
    = \frac{\sqrt{z^2+4}}{\sqrt{z^2 + 4} + z}.
\end{equation*}
Finally, to verify the Lipschitzness in~\Cref{asp.item.C}, we define $z_i = y_i \bx_i^\top \bw.$ Note that $|\lossPrime(z)| \leq 1$ for all $z,$ and $(\loss^{-1})^{\prime}(z) = 1 + 1/z^2$ for $z > 0.$ 
We then have
\begin{equation*}
    \norm{}{\nabla \phi(\bw)}
    \leq \frac{1}{n} \sum_{i=1}^n \l|\loss^{\prime}(z_i)\r| \cdot \l|\l(\loss^{-1}\r)^{\prime}\l(\frac{1}{n}\sum_{i=1}^n \loss(z_i)\r)\r|
    \leq \underbrace{\frac{1}{n} \sum_{i=1}^n \l|\loss^{\prime}(z_i)\r|}_{\leq 1} + \underbrace{\frac{\frac{1}{n} \sum_{i=1}^n \l|\loss^{\prime}(z_i)\r|}{\l(\frac{1}{n}\sum_{i=1}^n \loss(z_i)\r)^2}}_{\leq n} \leq 1+n.
\end{equation*}
The upper bound for the second addition comes from the fact that $|\lossPrime(z)|  \leq \loss^2(z)$ for any $z.$
Therefore, the semi-circle loss satisfies~\Cref{asp.item.C} with $C_\loss = n+1.$ 
    
\end{proof}

\end{document}